\def\BibTeX{{\rm B\kern-.05em{\sc i\kern-.025em b}\kern-.08em
    T\kern-.1667em\lower.7ex\hbox{E}\kern-.125emX}}
\newtheorem{lemma}{Lemma}
\newtheorem{theorem}{Theorem}
\newtheorem{fact}{Fact}
\newtheorem{definition}{Definition}
\newtheorem{remark}{Remark}
\newcommand{\N}{\mathbb{N}}
\newcommand{\R}{\mathbb{R}}
\newcommand{\M}{\mathcal{M}}
\newcommand{\D}{\mathcal{D}}
\newcommand{\B}{\mathcal{B}}
\newcommand{\gauss}{\mathcal{N}}
\DeclareMathOperator{\mse}{MeanSE}
\DeclareMathOperator{\maxerr}{MaxSE}
\newcommand{\approxdp}{$(\epsilon,\delta)$-DP}
\newcommand{\dpconst}{C_{\epsilon,\delta}}
\DeclareMathOperator*{\expectation}{\mathbb{E}}
\DeclarePairedDelimiter\abs{\lvert}{\rvert}
\renewcommand\vec{\mathbf}
\newcommand{\norm}[1]{\| #1 \|}
\renewcommand{\epsilon}{\varepsilon} 
\newcommand\copyrighttext{%
  \footnotesize A version of this work has been accepted for publication in the IEEE Conference on Secure and Trustworthy Machine Learning (SaTML). The final version will be available on IEEE Xplore.}
\newcommand\copyrightnotice{%
\begin{tikzpicture}[remember picture,overlay]
\node[anchor=south,yshift=25pt] at (current page.south) {\fbox{\parbox{\dimexpr0.62\textwidth-\fboxsep-\fboxrule\relax}{\copyrighttext}}};
\end{tikzpicture}%
}
\begin{document}

\title{Streaming Private Continual Counting via Binning}

\author{Joel Daniel Andersson\\
\textit{Basic Algorithms Research Copenhagen}\\
\IEEEauthorblockA{\textit{University of Copenhagen}\\
Copenhagen, Denmark \\
\texttt{jda@di.ku.dk}}
\and
\IEEEauthorblockN{Rasmus Pagh}
\textit{Basic Algorithms Research Copenhagen}\\
\IEEEauthorblockA{\textit{University of Copenhagen}\\
Copenhagen, Denmark \\
\texttt{pagh@di.ku.dk}}
}

\maketitle

\begin{abstract}
In differential privacy, \emph{continual observation} refers to problems in which we wish to continuously release a function of a dataset that is revealed one element at a time.
The challenge is to maintain a good approximation while keeping the combined output over all time steps differentially private.
In the special case of \emph{continual counting} we seek to approximate a sum of binary input elements.
This problem has received considerable attention lately, in part due to its relevance in implementations of differentially private stochastic gradient descent.
\emph{Factorization mechanisms} are the leading approach to continual counting, but the best such mechanisms do not work well in \emph{streaming} settings since they require space proportional to the size of the input.
In this paper, we present a simple approach to approximating factorization mechanisms in low space via \emph{binning}, where adjacent matrix entries with similar values are changed to be identical in such a way that a matrix-vector product can be maintained in sublinear space.
Our approach has provable sublinear space guarantees for a class of lower triangular matrices whose entries are monotonically decreasing away from the diagonal.
We show empirically that even with very low space usage we are able to closely match, and sometimes surpass, the performance of asymptotically optimal factorization mechanisms.
Recently, and independently of our work, Dvijotham et al.~have also suggested an approach to implementing factorization mechanisms in a streaming setting.
Their work differs from ours in several respects: It only addresses factorization into \emph{Toeplitz} matrices, only considers \emph{maximum} error, and uses a different technique based on rational function approximation that seems less versatile than our binning approach.
\end{abstract}

\begin{IEEEkeywords}
privacy, continual observation, streaming
\end{IEEEkeywords}

\section{Introduction}
\copyrightnotice

Analyzing private data involves creating methods to query datasets while protecting the privacy of the individuals within them.
A leading framework in this field is \emph{differential privacy}~\cite{dp_2006}, which provides precise probabilistic assurances regarding individual privacy.
The balance between the usefulness of the data and the privacy protections offered by differentially private algorithms is significantly influenced by the nature of the queries being made.

\emph{Continual observation} \cite{dwork_differential_2010, dwork_algorithmic_2013} constitutes a particularly interesting scenario.
In this setting, data entries are received sequentially and each new entry requires an immediate query response based on the entire dataset accumulated up to that point.
The challenge comes from the fact that each data entry can affect all future outputs.
A fundamental problem in this context is \emph{counting under continual observation}~\cite{dwork_differential_2010, chan_differentially_2012}, \emph{continual counting} for short.
The input stream is $\vec{x}_1, \vec{x}_2, \dots \vec{x}_n \in \{0,1\}$, arriving one item at a time, where $\vec{x}_t$ is received at time $t$, and the task is to output the number of $1$s observed so far.
Two input streams $\vec{x}$ and $\vec{x}'$ are considered neighboring if they differ at exactly one index.
Although this is one of the most basic problems in the area, there are still gaps between the lower and upper limits for both approximate and pure differential privacy~\cite{dwork_differential_2010, henzinger_almost_2023, fichtenberger_constant_2023, dwork_rectangle_queries_2015, cohen2024lower, dvijotham2024efficient}.

In recent years, the continual counting problem has gained renewed attention due to its use as a subroutine for private learning \cite{kairouz_practical_2021,denissov_improved_2022,choquette-choo_multi-epoch_2022,privacyamplification2023}.
For some private learning settings, the useful quantity to make private turns out to be \emph{prefix sums on sequences of gradient vectors} \cite{thakurta2013,kairouz_practical_2021}, a problem that can be reduced to continual counting.
In deep learning settings in particular, the regime of parameters is such that only space-efficient mechanisms for continual counting are viable, since there is a blow-up in the space proportional to the dimension of the gradients (i.e., the numbers of parameters in the model).
Dvijotham, McMahan, Pillutla, Steinke, and Thakurta~\cite{dvijotham2024efficient} provide an example of a learning task where, given 32-bit precision, over 40 petabytes of memory would be needed to run the mechanism with space complexity $\Omega(n)$.

For approximate differential privacy, the setting in which deep private learning typically takes place, all known mechanisms, excluding special cases such as sparse inputs, have an expected error of $\Theta(\log n)$ per step, and no improvement in the asymptotic dependence on $n$ has been achieved since the first non-trivial algorithms in~\cite{dwork_differential_2010,chan_private_2011}.
The best known continual counting mechanisms belong to the class of \emph{factorization mechanisms} \cite{li_matrix_2015}.
Let $A$ be the matrix such that $A\vec{x}$ is the prefix sum vector for $\vec{x}$. A factorization mechanism is characterized by a factorization of $A$ into a product of matrices, $A=LR$.
Recent work \cite{henzinger_almost_2023} showed that the factorization where $L=R=\sqrt{A}$ (a matrix also studied in \cite{bennett77}) achieves optimal mean squared error, up to lower order vanishing terms, across all factorizations, but this factorization mechanism has a space complexity of $\Omega(n)$.
Thus, alternative factorization mechanisms running in sublinear space have been proposed \cite{denissov_improved_2022,andersson2024smooth,dvijotham2024efficient}.

\subsection{Our contributions}

We propose a framework for factorization mechanisms using a class of lower-triangular matrices $L\in\R^{n\times n}$ that yield low space complexity, as defined next.
\begin{definition}[Space complexity of $L$ for streaming]
    Let $L\in\R^{n\times n}$ be a lower-triangular matrix and let $\vec{z}\in\R^{n}$ arrive as a stream where $\vec{z}_i$ arrives at time $i$ and we immediately need to output $(L\vec{z})_i$.
    A streaming algorithm is defined by a state update function that maps the step number, current state, and input $\vec{z}_i$ to a new state and an output $(L\vec{z})_i$.
    We consider \emph{linear} streaming algorithms where the state is a vector in $\R^q$ and the update function for each step number $i$ is a linear function of the state and the input $\vec{z}_i$.
    The \emph{space complexity} of~$L$ is the smallest $q$ for which there exists a (linear) streaming algorithm outputting~$L\vec{z}$.
\end{definition}
Note that the state update function may depend on $L$ and the output is not recorded in the state, so neither $L$ nor the output $L\vec{z}$ are counted in the space usage.
The class of matrices we consider define, for each row, a partition of the set of nonzero entries into intervals such that the entries in each interval of the row are identical.
Furthermore the intervals align between consecutive rows such that each interval of row $i$ is either the union of intervals in row $i-1$ or the singleton interval $\{i\}$.
This allows us to compute $(L\vec{z})_i$ in small space by keeping track of the sum of entries in $\vec{z}$ corresponding to each interval.
We call this class of matrices \emph{binned} matrices, and the structure of the binning decides the space complexity.

Our main contribution is an algorithm (\Cref{alg:matrix_to_binning}) that approximates a given matrix $L$ by a binned matrix $\hat{L}$.
The concept is demonstrated in \Cref{fig:binning_examplified}.
\begin{figure}[t!]
\centering
\subfigure[Lower triangular matrix $L$.]{\includegraphics{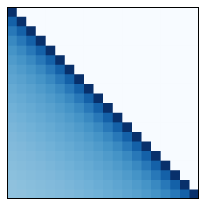}\label{fig:unbinned_example}}
\subfigure[$L$ after binning.]{\includegraphics{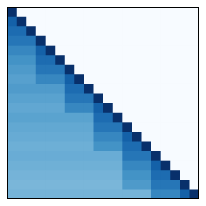}\label{fig:binned_example}}
\caption{
    Illustration of the concept of binning on a 20-by-20 matrix.
    Each square corresponds to an entry in the matrix, with a darker shade illustrating a larger value. 
    The binned matrix in \Cref{fig:binned_example} uses at most 4 intervals per row, as illustrated by each row of the matrix having at most 4 shades.
}\label{fig:binning_examplified}
\end{figure}
When applied to an invertible matrix $L$ of a factorization $LR$, we get a new factorization $\hat{L}\hat{R}$ where $\hat{R} = \hat{L}^{-1}LR$.
We show that if $L$ comes from a particular class of matrices, namely those satisfying \Cref{def:nice_matrix}, the binned factorization offers good utility at low space complexity.
In the particular case of continual counting and the (almost optimal) factorization $L=R=B=\sqrt{A}$ \cite{henzinger_almost_2023,fichtenberger_constant_2023,bennett77}, we show the following result.
\begin{theorem}\label{thm:main_bennett}
    Let $B\in\R^{n\times n}$ be the Bennett matrix.
    Then for constant $\xi \in (0,1)$ there is a factorization $\hat{L}\hat{R}=B^2$ where:
    \begin{itemize}
        \item $\hat{L}\hat{R}$ satisfies $\mse(\hat{L},\hat{R}) \leq (1+\xi)\mse(B,B)$ and $\maxerr(\hat{L},\hat{R}) \leq (1+\xi)\maxerr(B,B)$.
        \item $\hat{L}$ has space complexity $O_{\xi}(\sqrt{n}\log n)$.
        \item Given $\vec{z}\in\R^n$ and random access to $B$, $\hat{L}\vec{z}$ can be computed incrementally in time $O_\xi(\sqrt{n}\log n)$ per output.
    \end{itemize}
\end{theorem}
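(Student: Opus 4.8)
The plan is to run the binning procedure of \Cref{alg:matrix_to_binning} on the Bennett matrix $B$ itself, obtaining a binned matrix $\hat L$, and then to set $\hat R := \hat L^{-1}B^2$, so that $\hat L\hat R = B^2$ holds by construction; the three bullets then ask us to bound the accuracy loss of the resulting factorization, the number of bins per row (which controls the space complexity of $\hat L$), and the per-step running time. Recall that $B$ is the lower-triangular Toeplitz matrix with $B_{ij}=b_{i-j}$, where $b_0=1$ and $b_k=b_{k-1}\tfrac{2k-1}{2k}=\binom{2k}{k}4^{-k}=\Theta(k^{-1/2})$; its entries are positive, strictly decreasing in the distance from the diagonal, and the regularity required by \Cref{def:nice_matrix} is immediate from $b_{k+1}/b_k=\tfrac{2k+1}{2k+2}$, so the binning analysis applies to $B$. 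With accuracy parameter $\eta$ it produces $\hat L$ with $\hat L_{ij}\in[(1-\eta)b_{i-j},(1+\eta)b_{i-j}]$ for $i\ge j$ and unit diagonal $\hat L_{ii}=b_0=1$; since the diagonals decay like $k^{-1/2}$, collecting consecutive diagonals whose values agree up to a factor $1+O(\eta)$ forces singleton bins for the $O(1/\eta)$ diagonals nearest the main diagonal and a geometric progression of bin lengths thereafter (realizable while respecting the nesting constraint, e.g. with dyadic bins away from the diagonal), so $O(\eta^{-1}\log n)$ bins per row suffice.

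The crux is choosing $\eta$ so that $\hat R$ keeps $B$'s $\ell_2$-sensitivity up to a factor $1+\xi$. Write $E:=\hat L-B$, so $|E_{ij}|\le\eta b_{i-j}$ entrywise; by the Schur test, $\|E\|_{\mathrm{op}}\le\eta\sum_{k=0}^{n-1}b_k=\Theta(\eta\sqrt n)$, because the absolute row and column sums of $B$ are $\Theta(\sqrt n)$. Taking $\eta=c\,\xi/\sqrt n$ for a small constant $c$ makes $\|E\|_{\mathrm{op}}$ an arbitrarily small multiple of $\xi$. Since $B^{-1}$ is the lower-triangular Toeplitz matrix with symbol $(1-z)^{1/2}$, its entries have magnitude $\Theta(k^{-3/2})$ and are absolutely summable, hence $\|B^{-1}\|_{\mathrm{op}}=O(1)$; therefore $\hat L=B(I+B^{-1}E)$ is invertible (as it must be, having unit diagonal) with $\|\hat L^{-1}\|_{\mathrm{op}}\le\|B^{-1}\|_{\mathrm{op}}/(1-\|B^{-1}\|_{\mathrm{op}}\|E\|_{\mathrm{op}})=O(1)$. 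From the identity $\hat R-B=\hat L^{-1}(B^2-\hat L B)=-\hat L^{-1}EB$ and submultiplicativity, $\|\hat R-B\|_{1\to 2}\le\|\hat L^{-1}\|_{\mathrm{op}}\,\|E\|_{\mathrm{op}}\,\|B\|_{1\to 2}=O(\eta\sqrt n)\,\|B\|_{1\to 2}\le\xi\,\|B\|_{1\to 2}$ for suitable $c$, where $\|\cdot\|_{1\to 2}$ denotes the largest $\ell_2$-norm of a column, i.e. the $\ell_2$-sensitivity. Combined with $\|\hat L\|_F\le(1+\eta)\|B\|_F$ and $\max_i\|\hat L_{i,:}\|_2\le(1+\eta)\max_i\|B_{i,:}\|_2$ (both immediate from $|E_{ij}|\le\eta b_{i-j}$), and recalling that $\mse$ and $\maxerr$ of a factorization depend on it only through $\|L\|_F^2\|R\|_{1\to 2}^2$ and $(\max_i\|L_{i,:}\|_2^2)\|R\|_{1\to 2}^2$ respectively, rescaling $\xi$ by a universal constant yields both error inequalities.

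It remains to instantiate $\eta=\Theta(\xi/\sqrt n)$, which makes the bin count $O_\xi(\sqrt n\log n)$. A streaming algorithm then maintains one running partial sum of the entries of $\vec z$ per bin: at step $i$ it appends the new singleton bin $\{i\}$ with partial sum $\vec z_i$, merges some adjacent partial sums by addition when bins coalesce (the nesting property guarantees merges involve only adjacent bins), and outputs $(\hat L\vec z)_i$ as the corresponding linear combination of the $O_\xi(\sqrt n\log n)$ partial sums, with coefficients read off from $B$. This is a linear streaming algorithm with state dimension $O_\xi(\sqrt n\log n)$, and each output costs $O_\xi(\sqrt n\log n)$ arithmetic operations and lookups into $B$, establishing the last two bullets.

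The main obstacle is the perturbation analysis of the second paragraph. The matrices involved have operator norm growing with $n$ ($\|B\|_{\mathrm{op}}=\Theta(\sqrt n)$), so generic perturbation inequalities are useless; one must exploit the Toeplitz/convolution structure, and in particular the boundedness $\|B^{-1}\|_{\mathrm{op}}=O(1)$ that keeps the correction term $\hat L^{-1}EB$ controlled. Determining how finely $\hat L$ must approximate $B$ is exactly what forces $\eta=\Theta(\xi/\sqrt n)$, and hence the $\sqrt n$ factor in the space bound; it is plausible that a sharper estimate using the staircase structure of $E$, rather than only the pointwise bound $|E_{ij}|\le\eta b_{i-j}$, is where any refinement of the exponent would come from.
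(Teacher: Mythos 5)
Your proposal is correct and takes essentially the same route as the paper: multiplicatively bin $B$ at relative accuracy $\Theta(\xi/\sqrt{n})$ via \Cref{alg:matrix_to_binning}, set $\hat{R}=\hat{L}^{-1}B^2$, and control the sensitivity blow-up through $\norm{E}_2\norm{B^{-1}}_2=O(\xi)$ using $\norm{B}_2=\Theta(\sqrt{n})$ and $\norm{B^{-1}}_2=O(1)$, which is exactly the paper's chain (\Cref{lemma:good_sensitivity}, \Cref{lemma:main_approximation}, \Cref{lemma:alg_works_for_space}--\Cref{lemma:alg_works_for_error}, \Cref{thm:binning_approx}) specialized to $\kappa(B)=O(\sqrt{n})$ via \Cref{lemma:inv_good_operator_norm} and \Cref{lemma:good_operator_norm}. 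Your small variations --- the Neumann-series bound on $\norm{\hat{L}^{-1}}_2$ in place of the rearrangement in \Cref{lemma:good_sensitivity}, the Schur test for $\norm{E}_2$ in place of \Cref{lemma:bound_operator_norm_of_perturbation}, and dropping the additive $\tau$ term because $B$'s smallest entry is $\Theta(1/\sqrt{n})$ (a simplification the paper itself notes in a footnote) --- do not change the substance of the argument.
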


\noindent
That is, we achieve the same \emph{mean (expected) squared error} and \emph{maximum (expected) squared error} up to a factor of $(1+\xi)$ in sublinear space and time per output.
We supplement our theoretical result with empirical results up to $n\approx 10^4$ that confirm the space- and error efficiency of our method, even at space usage much lower than what our theoretical results require.
Surprisingly, we do not only nearly match the error of the asymptotically optimal factorization, but often \emph{improve} the error.
We also provide theoretical and empirical results for the more general square-root factorization of the counting matrix with momentum and weight decay \cite{KalLamp24}, where our approach is equally suitable.

\subsection{Sketch of technical ideas}
We start out by considering an arbitrary factorization $LR$ and investigate how the norms $\norm{L}_F$, $\norm{L}_{2\to\infty}$ and $\norm{R}_{1\to 2}$ change under a perturbation of $L$ (adjusting $R$ such that the matrix product is preserved).
For this we need $L\in\R^{n\times n}$ to be invertible, and consider perturbations $P\in\R^{n\times n}$ where for $\eta, \mu \geq 0$, we have that $\abs{P_{i,j}} \leq\eta \abs{L_{i,j}} + \mu$.
If $P$ is chosen so that $\hat{L} = L + P$ is invertible, we can then define $\hat{R} = \hat{L}^{-1}LR$, and get a new factorization $\hat{L}\hat{R} = LR$.
We show that the error of $\hat{L}\hat{R}$ exceeds that of $LR$ by no more than a multiplicative factor $(1+\xi)$, where $\xi$ depends on $\mu, \eta$ and $L$, and can be made arbitrarily close to zero.
We then proceed to describe a class of lower-triangular matrices, referred to as \emph{binned} matrices where row $i$ is assigned a partition $\B^i$ of $[i]=\{1,2, \dots, i\}$ and where the matrix is constant on each interval of the partition.
Furthermore, we require that $\B^{i+1}$ can be constructed by merging intervals in $\B^{i}$ and then adding $\{i+1\}$ to it.
Such matrices have space complexity equal to the maximum number of intervals in any of the partitions $\B^1, \dots, \B^n$, which can be $o(n)$.
We then give an algorithm for approximating a matrix $L$ by a binned matrix $\hat{L}$, where the central idea is to group elements based on them being within a multiplicative constant $c$ of each other, up to a small additive constant.
For a class of matrices, which in particular includes the square-root factorization of the counting matrix, we show that the resulting perturbation $P = \hat{L} - L$ can be bounded entry-wise, implying good error, with a binning of sublinear size, resulting in small space usage.
(Empirically, logarithmic space seems to suffice.)

\subsection{Limitations}
Our technique is only known to apply to a class of factorization mechanisms in which the left matrix (defining the noise distribution) satisfies a certain monotonicity condition (Definition~\ref{def:nice_matrix}).
Without conditions on this matrix, our binning might lead to an increase in the sensitivity of the factorization that would significantly increase the amount of noise added by the mechanism.
The experimental results for counting use the exact sensitivity of the factorization rather than the upper bound from our theoretical analysis --- using this sharper bound requires a quadratic time preprocessing algorithm to be run.
We stress that this computation only depends on $n$ and can therefore be done ``offline'' before any data is received.
If the exact sensitivity is stored, it can be reused for any stream with length at most $n$.

Our bound on space usage does \emph{not} include the space for storing the matrix $L$, and we furthermore assume that an entry of $L$ can be accessed in constant time.
If the elements in $L$ can be computed efficiently on the fly (like in our key examples), then there is no need for storing $L$.
If $L$ has no such structure, we note that for many applications, the attention paid to space usage is derived from use cases where we are interested in running the mechanism on $d$-dimensional vectors where $d$ may be larger than the number of outputs, e.g., prefixes of gradients.
In such applications, the space needed for storing $L$ explicitly may be dominated by the space needed for storing the set of $d$-dimensional noise vectors.
In particular, for deep learning the regime of parameters is often such that $d \gg n$.

\subsection{Related work}
Central to our work is the \emph{factorization mechanism} \cite{li_matrix_2015}, also known as \emph{matrix mechanism}, framework to which all mechanisms discussed here belong.
It addresses the problem of releasing linear queries $A\in\R^{q\times n}$ when applied to a dataset $\vec{x}\in\R^n$, and solves it by factoring $A$ into matrices $L, R$ where $A=LR$, privately releasing $R\vec{x} + \vec{z}$ where the noise $\vec{z}$ is scaled to the sensitivity of $R$, and then outputting $L(R\vec{x} + \vec{z})$.

While our techniques derive from factorization mechanisms, and our results extend to more general query matrices, the problem that motivates us is \emph{continual counting}.
The first mechanism to achieve polylogarithmic error for this problem was the \emph{binary mechanism}~\cite{dwork_differential_2010,chan_private_2011}, and it did so in space complexity $O(\log n)$.
Since then no asymptotic improvements in the error have been achieved without further assumptions on the input (e.g., sparsity \cite{dwork_rectangle_queries_2015}).

Recent work identified that the Toeplitz matrix factorization $L=R=\sqrt{A}$ for continual counting achieved optimal, up to lower-order vanishing terms, mean squared error over all factorizations~\cite{henzinger_almost_2023} and state-of-the-art maximum squared error~\cite{fichtenberger_constant_2023}.
However, the structure of $L$ gives the mechanism a space complexity of $\Omega(n)$, rendering it impractical for large-scale applications.
The same is true of other factorizations corresponding to aggregation matrices used in data analysis and machine learning applications~\cite{henzinger2024unifying,KalLamp24}.

More recently, and independently of our work, Dvijotham, McMahan, Pillutla, Steinke, and Thakurta \cite{dvijotham2024efficient}, identified the same problem and designed a near-optimal solution with space complexity $\Theta(\log^2 n)$ and the same maximum squared error up to a $(1+o(1))$ multiplicative factor.
They prove near-optimality with respect to the maximum squared error but do not state any results for mean squared error.
Their solution is based on approximating $L$ and $R$ by Toeplitz matrices $\hat{L}$ and $\hat{R}$ where the values on subdiagonals of $\hat{L}$ satisfy a recursion that facilitates computing $L\vec{z}$ in low space.
Their technique could potentially be extended to factorizations of different $A$, but their existing analysis only holds for the lower-triangular all-1s matrix and relies on all matrices involved being Toeplitz.
Our approach does not need any assumption of Toeplitz structure and is arguably simpler.
It has a weaker theoretical space bound, but our empirical work shows that it works well with much less space, suggesting that our analysis is not tight.

Our main algorithm for producing binned matrices is similar in structure to \emph{weight-based merging histograms} of Cohen and Strauss~\cite[Section 5]{CohenS03}.
They considered (approximately) outputting decaying prefix sums of a stream $x_1,\dots, x_T$, i.e., releasing $\sum_{i=1}^t g(t-i)x_i$ where $g$ is a non-increasing function, while using little space for storing the stream.
This can be translated into our setting as approximating a Toeplitz left factor matrices $L$ by a matrix $\hat{L}$ with low space complexity.
Though our binned matrices are constructed in a way similar to weight-based merging histograms, we stress that our objectives are very different and require an entirely different and more sophisticated analysis.
We use the term \enquote{binning} that we feel better expresses what happens to the rows of a matrix.

\section{Introducing the problem}
\subsection{Preliminaries}
We let $\N = \{1, 2, \dots\}$ denote the set of all positive integers, use the notation $[a, b]$ for $a, b\in\N$, $a \leq b$ to denote the set of integers $\{a, a+1, \dots, b\}$, and in particular let $[a] = [1, a]$.

\medskip

\subsubsection{Linear algebra}
For a matrix $A\in\R^{m\times n}$, let $A_{i,j}\in\R$ for $i\in [m], j\in[n]$ denote an entry.
In the case where $[a, b]\subseteq [n]$ and $A_{i, j}$ assumes the same value for all $j\in[a, b]$, we will abuse notation and let $A_{i, [a, b]}$ represent this scalar.
We reserve the notation $\abs{A}$ for the matrix with entries $\abs{A}_{i, j} = \abs{A_{i, j}}$.
$\norm{A}_2$ denotes the operator norm of a matrix and if $A$ is invertible then $\kappa(A) = \norm{A}_2\norm{A^{-1}}_2$ defines the condition number of $A$.
Whenever a matrix product $AB$ is given without specifying the dimensions of $A\in\R^{m\times n}$ and $B\in\R^{m'\times n '}$, we are implicitly imposing the restriction $n=m'$.%

\medskip

\subsubsection{Computational model}
Our computational model assumes that any real number can be stored in constant space, and that the multiplication or addition of two reals can be performed in constant time.
When a data structure is said to be \emph{random access}, we mean that an arbitrary entry from it can be retrieved in constant time.

\medskip

\subsubsection{Differential privacy}

For a mechanism to be considered differentially private, we require that the output distributions of any two neighboring inputs, $D\sim D'$, are indistinguishable.
\begin{definition}[$(\epsilon, \delta)$-Differential Privacy~\cite{dwork_algorithmic_2013}]
    A randomized algorithm $\M : \mathcal{X}^n\to\mathcal{Y}$ is ($\epsilon, \delta)$-differentially private if for all $S\subseteq\mathsf{Range}(\M)$ and all neighboring inputs $\mathcal{D}, \mathcal{D}'\in \mathcal{X}^n$, the neighboring relation written as $\mathcal{D}\sim \mathcal{D}'$, we have that:
    \begin{equation*}
        \Pr[\M(\mathcal{D})\in S] \leq \exp(\epsilon)\Pr[\M(\mathcal{D}')\in S] + \delta\, .
    \end{equation*}
    We refer to $(\epsilon, 0)$-differential privacy as $\epsilon$-differential privacy, or \emph{pure} differential privacy.
\end{definition}
When releasing a real-valued function $f(\D)$ taking values in $\R^d$, we can achieve \approxdp{} by adding Gaussian noise scaled to the $\ell_2$-sensitivity of~$f$.
\begin{lemma}[Gaussian Mechanism \cite{dwork_algorithmic_2013}]
    Let $\epsilon\in(0, 1), \delta\in(0,1)$, and $f : \mathcal{X}^n \to \R^d$ be a function with $\ell_2$-sensitivity $\Delta \coloneqq \max_{\D\sim\D'}\norm{f(\D)-f(\D')}_2$.
    For a given data set $\D\in\mathcal{X}^n$, the mechanism that releases $f(\D) + \mathcal{N}(0, C_{\epsilon, \delta}\Delta^2)^d$ satisfies \approxdp{} where $C_{\epsilon, \delta}=2\ln(1.25/\delta)/\epsilon^2$.
\end{lemma}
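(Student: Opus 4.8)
The plan is to reduce the $d$-dimensional claim to a one-dimensional Gaussian tail estimate via the \emph{privacy loss random variable}. Fix neighboring datasets $\D\sim\D'$ and let $p,p'$ be the densities of $\M(\D)=f(\D)+\gauss(0,\sigma^2)^d$ and $\M(\D')=f(\D')+\gauss(0,\sigma^2)^d$, where I abbreviate $\sigma^2=\dpconst\Delta^2$. Define the privacy loss $\mathcal{L}(\vec{y})=\ln(p(\vec{y})/p'(\vec{y}))$. Splitting any event $S$ according to whether $\mathcal{L}\le\epsilon$ gives $\Pr[\M(\D)\in S]\le e^\epsilon\Pr[\M(\D')\in S]+\Pr_{\vec{y}\sim\M(\D)}[\mathcal{L}(\vec{y})>\epsilon]$, since on $\{\mathcal{L}\le\epsilon\}$ the density ratio is at most $e^\epsilon$. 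Hence it suffices to prove the single tail bound \[\Pr_{\vec{y}\sim\M(\D)}[\mathcal{L}(\vec{y})>\epsilon]\le\delta,\] and by the symmetry of the Gaussian the same argument covers the reversed ordering of $\D,\D'$.

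Next I would compute $\mathcal{L}$ in closed form. Writing $\vec{v}=f(\D)-f(\D')$ and $v=\norm{\vec{v}}_2\le\Delta$, the ratio of the two isotropic Gaussian densities leaves only the quadratic exponents, so $\mathcal{L}(\vec{y})=(\norm{\vec{y}-f(\D')}_2^2-\norm{\vec{y}-f(\D)}_2^2)/(2\sigma^2)$. Because the noise is rotationally symmetric, I can choose coordinates aligning $\vec{v}$ with one axis; only the component $x$ of $\vec{y}-f(\D)$ along $\vec{v}$ survives, and $x\sim\gauss(0,\sigma^2)$ under $\vec{y}\sim\M(\D)$. Expanding the squares gives $\mathcal{L}=(2xv+v^2)/(2\sigma^2)$, a Gaussian with mean $v^2/(2\sigma^2)$ and variance $v^2/\sigma^2$. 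Both grow with $v$, so the worst case is $v=\Delta$, which I fix henceforth.

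It remains to bound the tail. Standardizing, $\Pr[\mathcal{L}>\epsilon]=\Pr[Z>t^*]$ for $Z\sim\gauss(0,1)$ with threshold $t^*=\epsilon\sigma/\Delta-\Delta/(2\sigma)$. Substituting $\sigma=(\Delta/\epsilon)\sqrt{2\ln(1.25/\delta)}$ makes $t^*$ depend only on $\epsilon,\delta$; writing $R=\sqrt{2\ln(1.25/\delta)}$ one gets $t^*=R-\epsilon/(2R)$ and hence $(t^*)^2=2\ln(1.25/\delta)-\epsilon+\epsilon^2/(8\ln(1.25/\delta))$. The remaining task is to certify $\Pr[Z>t^*]\le\delta$ from this, which I would do with a Mills-ratio Gaussian tail bound $\Pr[Z>t]\le e^{-t^2/2}/(t\sqrt{2\pi})$, so that the dominant factor $e^{-(t^*)^2/2}$ already contributes $e^{-\ln(1.25/\delta)}=0.8\,\delta$.

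I expect this last step to be the main obstacle. The leftover factors $e^{\epsilon/2}$ (arising from the $-\epsilon$ term in $(t^*)^2$) and the Mills prefactor $1/(t^*\sqrt{2\pi})$ must be shown to keep the product below $\delta$, and this is exactly where the constant $1.25$ and the hypothesis $\epsilon\in(0,1)$ enter: the factor $0.8=1/1.25$ built into the noise scale creates the slack that, together with $e^{\epsilon/2}<\sqrt e$ and a lower bound on $t^*$, drives the estimate under $\delta$. These are short but fiddly monotonicity computations; they are carried out in the classical treatment of Dwork and Roth~\cite{dwork_algorithmic_2013}, which I would follow for the routine inequalities.
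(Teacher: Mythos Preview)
The paper does not give a proof of this lemma at all: it is stated with a citation to \cite{dwork_algorithmic_2013} and used as a black box. So there is no ``paper's own proof'' to compare against.

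Your sketch is the standard textbook argument (privacy loss random variable, rotational invariance to reduce to one dimension, Mills-ratio tail bound) and is the route taken in the very reference you cite at the end, \cite{dwork_algorithmic_2013}. The computations you lay out are correct, including the monotonicity of the tail in $v$ (since the standardized threshold $\epsilon\sigma/v - v/(2\sigma)$ is decreasing in $v$) and the identification of where the constant $1.25$ and the restriction $\epsilon\in(0,1)$ are actually needed. For the purposes of this paper, though, no proof is expected; a one-line citation suffices.
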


\subsection{Factorization mechanisms.}
Most known mechanisms for continual counting (at least all such that produce unbiased outputs) can be expressed in the \emph{factorization mechanism} framework~\cite{li_matrix_2015}, also known as the \emph{matrix mechanism} framework.
The general formulation describes the setting where the input is $\vec{x}\in\mathbb{R}^n$, for some neighbouring relation \enquote{$\sim$}, and we want to privately release linear queries $f(\vec{x}) = A\vec{x}\in\mathbb{R}^q$ on $\vec{x}$, where $A\in\R^{q\times n}$.

Given a factorization $L\in\R^{q\times m},\, R\in\R^{m\times n}$ such that $A=LR$, the framework states that the mechanism $\M(\vec{x}) = A\vec{x} + L\vec{z}$ where $\vec{z}$ is drawn from $\gauss(0, C_{\delta,\epsilon}\Delta^2)^m$, $\Delta = \max_{\vec{x}\sim \vec{x}'}\norm{ R(\vec{x}-\vec{x}') }_2$, and $C_{\epsilon, \delta}$ is a constant depending on $\epsilon, \delta$, satisfies $(\epsilon, \delta)$-DP.
The intuition behind the mechanism is that we encode our input $\vec{x}$ as $R\vec{x}$, release it privately as $R\vec{x} + \vec{z}$ using the Gaussian mechanism, and then output $\M(\vec{x}) = L(R\vec{x} + \vec{z}) = A\vec{x} + L\vec{z}$ as post-processing.
Notably, each coordinate of the output $\M(\vec{x})$ is in expectation equal to $A\vec{x}$, i.e., the mechanism is unbiased.

Applied to our setting, let $A\in\R^{n\times n}$ be the lower-triangular all-1s matrix and let the input vectors $\vec{x}, \vec{x}'\in\{0, 1\}^n$ be considered neighboring if $\vec{x}$ is equal to $\vec{x}'$ up to a bit flip at one unique coordinate.
Any factorization $L, R$ where $A=LR$ thus constitutes a mechanism for continual counting with $\Delta = \norm{R}_{1\to 2}$, where $\norm{\cdot}_{1\to 2}$ denotes the maximum $\ell_2$-norm of all columns in a matrix.

There are two common error metrics for continual counting: \emph{mean squared error} (MSE) and \emph{maximum squared error} (MaxSE).
The mean squared error, is given as
\begin{align*}
    \mse(\M)
    &= \frac{1}{n}\expectation_{\M}[\norm{\M(\vec{x}) - A\vec{x}}_2^2]\\
    &= \frac{1}{n}\expectation_{\vec{z}}[\norm{L\vec{z}}_2^2]
    = \frac{\dpconst}{n}\norm{L}_F^2\norm{R}_{1\to 2}^2
\end{align*}
where $\norm{\cdot}_F$ is the Frobenius norm.
Analogously for the maximum (expected) squared error, we get
\begin{align*}
    \maxerr(\M)
    &= \max_i \expectation_{\M}[(\M(\vec{x}) - A\vec{x})_i^2]\\
    &= \max_i \expectation_{\vec{z}}[(L\vec{z})_i^2]
    = \dpconst\norm{L}_{2\to\infty}^2\norm{R}_{1\to 2}^2
\end{align*}
where $\norm{\cdot}_{2\to\infty}$ denotes the maximum $\ell_2$-norm of all \emph{rows} in the matrix.
Intuitively, since $\M$ is unbiased, these two error metrics correspond to the \emph{average variance} and \emph{maximum variance} for $n$ outputs.
For notational convenience, we will superload this notation and use $\mse(L, R)$ and $\maxerr(L, R)$ for the error of a matrix mechanism built on a factorization $L, R$.

\subsection{(Almost) optimal factorizations}
For $(\epsilon, \delta)$-DP, it is known \cite{henzinger_almost_2023} that choosing $L=R=\sqrt{A}$ gives a continual counting mechanism with optimal mean squared error up to lower-order vanishing terms across all factorizations.
The same factorization also improved the best known \emph{maximum expected squared error} \cite{fichtenberger_constant_2023}.
For this error, it was recently shown to be optimal across all \emph{lower-triangular Toeplitz} factorizations \cite{dvijotham2024efficient}.
We call the corresponding matrix $B=\sqrt{A}$ the \emph{Bennett matrix}~\cite{bennett77}.
Given these strong error bounds, choosing the factorization $L=R=B$ would seem to be the obvious choice in practical settings.

\subsection{Space matters}
As identified in \cite{denissov_improved_2022} and elsewhere,
the space efficiency of the counting mechanisms has a large impact for large-scale applications in private learning.
For a given stream $\vec{x}_1, \vec{x}_2, \dots, \vec{x}_n$ we need to add $(L\vec{z})_t$ at time $t$ to the output to achieve privacy.
Importantly, the memory needed to compute $\M(\vec{x})_t$ will therefore directly depend on the structure of $L$.
If $L = I\in\R^{n\times n}$, the identity matrix, then the memory needed is constant, as each entry $\vec{z}_t$ is used once at time $t$ and then never again.
If $L$ and $R$ correspond to the factorization of the \emph{binary mechanism} \cite{chan_private_2011,dwork_differential_2010}, then the needed memory will be $O(\log n)$.
Unfortunately, in the particular case of $L=R=B$, the structure of $L$ seems to require $\Omega(n)$ space.

This begs the question: can we find a factorization where $L\approx B$, $R\approx B$ and which runs in sublinear space?
In particular, can we approximate its mean and maximum squared error?

\section{Approximation by binning}

Given a factorization $LR = A$ where $L$ is lower-triangular and invertible, we are looking to approximate it, producing a new factorization $\hat{L}\hat{R} = A$ where $\mse(\hat{L}, \hat{R})\approx\mse(L, R)$, $\maxerr(\hat{L}, \hat{R})\approx\maxerr(L, R)$, and $\hat{L}$ has space complexity $o(n)$.
We will approach the problem by expressing $\hat{L}$ as the sum of $L$, and a perturbation to each entry~$P$.
The big picture idea is that, for some class of $L$, perturbing its elements a little might be sufficient to achieve sublinear space.
Furthermore, if the perturbation $P$ is small enough and preserves the invertibility of $L$, then the requisite norms of $\hat{L}$ and $\hat{R}=\hat{L}^{-1}A$ will be close to those of $L$ and~$R$.
We address this problem next.

\subsection{Small perturbations preserve our norms of interest}

We start by formalizing the intuition that sufficiently small perturbations preserve norms.

\begin{definition}\label{def:perturbed_matrices}
    For matrices $L, P\in\R^{n\times n}$, we call $L+P$ an $(\eta, \mu)$-perturbation of $L$ if given $\eta, \mu \geq 0$, we have that $\abs{P_{i, j}} \leq \eta \abs{L_{i, j}} + \mu$ for all $i,j\in[n]$.
\end{definition}

\begin{lemma}\label{lemma:good_frobenius}
  Let $L, P\in\R^{n\times n}$ and $L+P$ be an $(\eta, \mu)$-perturbation of $L$.
  Then $\norm{L+P}_{F} \leq (1+\eta) \norm{L}_{F} + \mu n$ and $\norm{L+P}_{2\to\infty} \leq (1+\eta) \norm{L}_{2\to\infty} + \mu\sqrt{n}$.
\end{lemma}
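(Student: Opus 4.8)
The plan is to reduce both inequalities to (i) monotonicity of the relevant norm under entrywise domination of absolute values and (ii) the triangle inequality, applied to the splitting of the entrywise bound into its multiplicative and additive parts.

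First I would introduce the auxiliary matrix $Q \in \R^{n\times n}$ with $Q_{i,j} = \eta\abs{L_{i,j}} + \mu$, so that the $(\eta,\mu)$-perturbation hypothesis reads $\abs{P_{i,j}} \le Q_{i,j}$ for all $i,j$. Both the Frobenius norm and the $2\to\infty$ norm (being, respectively, the $\ell_2$ norm of the flattened entry vector and the maximum over rows of the $\ell_2$ norm of that row) are nondecreasing when each entry's absolute value is replaced by a larger nonnegative value; hence $\norm{P}_F \le \norm{Q}_F$ and $\norm{P}_{2\to\infty} \le \norm{Q}_{2\to\infty}$. Next I would write $Q = \eta\abs{L} + \mu J$ where $J$ is the all-ones matrix, and apply the triangle inequality for each norm: $\norm{Q}_F \le \eta\norm{\abs{L}}_F + \mu\norm{J}_F = \eta\norm{L}_F + \mu n$, using $\norm{\abs{L}}_F = \norm{L}_F$ and $\norm{J}_F = n$; and similarly, working row by row, $\norm{Q}_{2\to\infty} \le \eta\norm{L}_{2\to\infty} + \mu\sqrt{n}$, using that each row of $J$ has $\ell_2$ norm $\sqrt{n}$.

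Finally I would combine with $\norm{L+P}_\bullet \le \norm{L}_\bullet + \norm{P}_\bullet$ for $\bullet \in \{F,\ 2\to\infty\}$ to conclude
\begin{equation*}
\norm{L+P}_F \le \norm{L}_F + \eta\norm{L}_F + \mu n = (1+\eta)\norm{L}_F + \mu n,
\end{equation*}
and likewise $\norm{L+P}_{2\to\infty} \le (1+\eta)\norm{L}_{2\to\infty} + \mu\sqrt{n}$. For the $2\to\infty$ bound one extra line is needed to move the maximum across the sum: $\max_i\big((1+\eta)\norm{L_i}_2\big) = (1+\eta)\max_i\norm{L_i}_2$, which is immediate since $1+\eta \ge 0$.

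There is no real obstacle here; the only point requiring a moment's care is making the monotonicity-under-entrywise-domination step explicit, since it is what lets us pass from the sign-sensitive perturbation bound $\abs{P_{i,j}} \le \eta\abs{L_{i,j}} + \mu$ to a clean bound on $\norm{P}_F$ and $\norm{P}_{2\to\infty}$ before invoking subadditivity.
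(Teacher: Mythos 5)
Your proposal is correct and matches the paper's proof essentially step for step: both bound $\norm{P}$ by the norm of the entrywise dominating matrix $\eta\abs{L}+\mu J$, apply the triangle inequality, and use $\norm{J}_F = n$ and $\norm{J}_{2\to\infty} = \sqrt{n}$. Your version merely makes the entrywise-monotonicity step more explicit than the paper does; nothing substantive differs.
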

\begin{proof}
    We have that
    \begin{alignat*}{1}
    \norm{L + P}_F & \leq \norm{L}_F + \norm{P}_F\\
    & \leq \norm{L}_F + \norm{\eta\abs{L} + \mu J}_F\\
    & \leq (1+\eta)\norm{L}_F + \mu\norm{J}_F\\
    & = (1+\eta)\norm{L}_F + \mu n,
    \end{alignat*}
    where $J\in\R^{n\times n}$ is the all-1s matrix, proving the statement.
    The first and last inequalities are cases of the triangle inequality, whereas the second step uses the definition of a $(\eta, \mu)$-perturbation and the fact that Frobenius norm only depends on the absolute value of each entry.
    The statement for $\norm{L+P}_{2\to\infty}$ follows from an identical argument, except that $\norm{J}_{2\to\infty} = \sqrt{n}$.
\end{proof}
Unsurprisingly, a small perturbation of $L$ does not affect the norms we are interested in by much.
To show that $\hat{L} = L + P$ induces a factorization $\hat{L}\hat{R} = LR$ with  error comparable to $LR$ requires more work.
We start by bounding $\norm{\hat{R}}_{1\to 2}$.
\begin{lemma}\label{lemma:good_sensitivity}
  Let $L,R,P$ be matrices where $L$ and $L+P$ are invertible, and $\norm{L^{-1}}_2\cdot\norm{P}_2 \leq 1/2$.
  Suppose that $\hat{R} = (L+P)^{-1} LR$.
  Then $\norm{\hat{R}}_{1\to 2} \leq \big(1 + 2\norm{P}_2\cdot\norm{L^{-1}}_2\big)\norm{R}_{1\to 2}$.
\end{lemma}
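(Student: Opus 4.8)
The plan is to rewrite $\hat{R}$ so that the effect of the perturbation appears as a small multiplicative correction applied columnwise, and then bound that correction using a Neumann series. Concretely, since $L$ is invertible I would write $\hat{R} = (L+P)^{-1}LR = \big(L(I + L^{-1}P)\big)^{-1} L R = (I + L^{-1}P)^{-1} R$. Setting $M = L^{-1}P$, the hypothesis gives $\norm{M}_2 \leq \norm{L^{-1}}_2 \norm{P}_2 \leq 1/2$, so in particular $I+M$ is invertible (consistent with $L+P$ being invertible) and the Neumann series $(I+M)^{-1} = \sum_{k\geq 0} (-M)^k$ converges.

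Next I would estimate the deviation of $(I+M)^{-1}$ from the identity: $(I+M)^{-1} - I = \sum_{k\geq 1}(-M)^k$, hence
\[
\norm{(I+M)^{-1} - I}_2 \leq \sum_{k\geq 1}\norm{M}_2^k = \frac{\norm{M}_2}{1 - \norm{M}_2} \leq 2\norm{M}_2 ,
\]
where the last step uses $\norm{M}_2 \leq 1/2$. This is the one place the $1/2$ threshold is used, and it is what produces the factor $2$ in the statement.

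Finally I would pass from the operator norm to the $1\to 2$ norm columnwise. For each standard basis vector $\vec{e}_j$, writing $\hat{R}\vec{e}_j = R\vec{e}_j + \big((I+M)^{-1} - I\big)R\vec{e}_j$ and applying the triangle inequality together with the operator-norm bound gives $\norm{\hat{R}\vec{e}_j}_2 \leq (1 + 2\norm{M}_2)\norm{R\vec{e}_j}_2 \leq (1 + 2\norm{M}_2)\norm{R}_{1\to 2}$. Taking the maximum over $j$ yields $\norm{\hat{R}}_{1\to 2} \leq (1 + 2\norm{M}_2)\norm{R}_{1\to 2} \leq \big(1 + 2\norm{P}_2\norm{L^{-1}}_2\big)\norm{R}_{1\to 2}$, as claimed.

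There is no real obstacle here beyond bookkeeping; the only subtlety worth stating cleanly is the identity $(L+P)^{-1}L = (I+L^{-1}P)^{-1}$, which relies on $L$ being invertible, and the observation that $\norm{L^{-1}}_2\norm{P}_2 \leq 1/2$ is exactly the condition needed for both the convergence of the series and the clean constant $2$. I would also remark that the same argument bounds $\norm{\hat{R}}_F$ or other column-aggregated norms, but $1\to 2$ is what is needed downstream for the sensitivity.
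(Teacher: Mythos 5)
Your argument is correct, and it takes a route that is different in technique from the paper's, though both rest on the same algebraic identity. You rewrite $\hat{R}=(I+L^{-1}P)^{-1}R$ and control the inverse explicitly with a Neumann series, getting $\norm{(I+L^{-1}P)^{-1}-I}_2\leq \norm{L^{-1}P}_2/(1-\norm{L^{-1}P}_2)\leq 2\norm{L^{-1}}_2\norm{P}_2$, then pass to the $1\to 2$ norm column by column. The paper instead never expands any inverse: it uses the identity $L(\hat{R}-R)=-P\hat{R}$ (equivalent to your $(I+L^{-1}P)\hat{R}=R$), sandwiches $\norm{L(\hat{R}-R)}_{1\to 2}$ between $\norm{\hat{R}-R}_{1\to 2}/\norm{L^{-1}}_2$ and $\norm{P}_2\norm{\hat{R}}_{1\to 2}$ via a submultiplicativity fact for the $1\to 2$ norm, and then resolves the resulting self-referential inequality $(1-\norm{P}_2\norm{L^{-1}}_2)\norm{\hat{R}}_{1\to 2}\leq\norm{R}_{1\to 2}$ with the triangle inequality and $\tfrac{1}{1-x}\leq 1+2x$ for $0\leq x\leq 1/2$ --- the same place the $1/2$ threshold enters in your series bound $\tfrac{x}{1-x}\leq 2x$. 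Your version is more explicit and additionally gives a direct bound on $\norm{\hat{R}-R}_{1\to 2}$ in terms of $\norm{R}_{1\to 2}$ rather than $\norm{\hat{R}}_{1\to 2}$, which some readers may find cleaner; the paper's version avoids infinite series entirely and works with elementary norm inequalities, at the cost of the bootstrap rearrangement. Both yield exactly the stated constant.
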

\begin{proof}
    We begin by upper and lower bounding $\norm{L(\hat{R} - R)}_{1\to 2}$.
    The following fact will be used for both bounds.
    \begin{fact}\label{fact:AB-norm}
        For any matrix product $AB$, we have that $\norm{AB}_{1\to 2} \leq \norm{A}_2\norm{B}_{1\to 2}$.
        If furthermore $A$ is invertible, then $\norm{AB}_{1\to 2} \geq \norm{B}_{1\to 2} / \norm{A^{-1}}_2$.
    \end{fact}
        To see why Fact~\ref{fact:AB-norm} holds we begin with the lower bound for invertible $A$.
        Note that the standard inequality $\norm{A\vec{x}}_2 \leq \norm{A}_2\norm{\vec{x}}_2$ also implies $\norm{A\vec{x}}_2 \geq \norm{\vec{x}}_2 / \norm{A^{-1}}_2$ via
        \begin{equation*}
           \norm{\vec{x}}_2 = \norm{A^{-1}A\vec{x}}_2 \leq \norm{A^{-1}}_2 \norm{A\vec{x}}_2\enspace.
        \end{equation*}
        We can thus derive a lower bound as follows
        \begin{align*}
            \norm{AB}_{1\to 2} = \sup_{\norm{\vec{x}}_1 =1} \norm{AB\vec{x}}_2 \geq \sup_{\norm{\vec{x}}_1 =1} \frac{\norm{B\vec{x}}_2}{\norm{A^{-1}}_2} = \frac{\norm{B}_{1\to 2}}{\norm{A^{-1}}_2}\,,
        \end{align*}
        The upper bound is shown in an analogous way, concluding the proof of Fact~\ref{fact:AB-norm}.

    For the upper bound of Lemma~\ref{lemma:good_sensitivity}, note that 
    \[L(\hat{R} - R) = L\hat{R} - A = (L-\hat{L})\hat{R} = -P\hat{R}, \]
    and thus $\norm{L(\hat{R}-R)}_{1\to 2} \leq \norm{P}_2\norm{\hat{R}}_{1\to 2}$.
    For the lower bound, we directly derive
    \[\norm{L(\hat{R}-R)}_{1\to 2} \geq \norm{\hat{R}-R}_{1\to 2} / \norm{L^{-1}}_2 \enspace .\]
    Combining the two, we get $\norm{\hat{R} - R}_{1\to 2} \leq \norm{P}_2\cdot\norm{L^{-1}}_2\cdot \norm{\hat{R}}_{1\to 2}$.
    Applying the triangle inequality we can further get $\norm{\hat{R}-R}_{1\to 2} \geq \norm{\hat{R}}_{1\to 2} - \norm{R}_{1\to 2}$, which combined with the preceding inequality and some re-arranging yields
    \[(1-\norm{P}_2\cdot\norm{L^{-1}}_2)\norm{\hat{R}}_{1\to 2} \leq \norm{R}_{1\to 2} \enspace . \]
    By our assumption $(1-\norm{P}_2\cdot\norm{L^{-1}}_2) \geq 1/2$, and so we can divide by it, resulting in
    \begin{alignat*}{1}
    \norm{\hat{R}}_{1\to 2} & \leq \norm{R}_{1\to 2} / (1 - \norm{P}_2\cdot\norm{L^{-1}}_2) \\
    &\leq \big(1 + 2\norm{P}_2\cdot\norm{L^{-1}}_2\big)\norm{R}_{1\to 2},
    \end{alignat*}
    where the last step uses that $\frac{1}{1-x} \leq 1 + 2x$ for $0 \leq x \leq 1/2$.
    This concludes the argument.
\end{proof}
Intuitively, if the perturbation $P$ is \enquote{sufficiently small} compared to $L$, then the sensitivity of our new factorization $\hat{L}\hat{R}$ is bounded by that of $LR$.
To this end, our next lemma shows that, if we restrict the left factor matrix $L$ to be non-negative, then we can bound $\norm{P}_2$ using $\norm{L}_2$.
\begin{lemma}\label{lemma:bound_operator_norm_of_perturbation}
  Let $L\in\R^{n\times n}_{\geq 0}, P\in\R^{n\times n}$ and $L+P$ be an $(\eta, \mu)$-perturbation of $L$. %
  Then $\norm{P}_{2} \leq \eta \norm{L}_{2} + \mu n$. %
\end{lemma}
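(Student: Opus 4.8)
The plan is to bound $\norm{P}_2$ by the triangle inequality $\norm{P}_2 \leq \norm{\eta\abs{L}}_2 + \norm{\mu J}_2 = \eta\norm{\abs{L}}_2 + \mu n$, where $J$ is the all-1s matrix (with $\norm{J}_2 = n$), and then use non-negativity of $L$ to conclude $\norm{\abs{L}}_2 = \norm{L}_2$, which is immediate since $\abs{L} = L$. The only real content is justifying the first inequality, namely that $\abs{P_{i,j}} \leq \abs{M_{i,j}}$ entrywise for two matrices implies $\norm{P}_2 \leq \norm{M}_2$ — this is the monotonicity of the operator norm under entrywise domination of absolute values.

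First I would recall that for any matrix $M$, $\norm{M}_2 = \sup_{\norm{\vec{u}}_2 = \norm{\vec{v}}_2 = 1} \vec{u}^\top M \vec{v} = \sup \sum_{i,j} u_i M_{i,j} v_j$. Given the entrywise bound $\abs{P_{i,j}} \leq \abs{M_{i,j}}$, for any unit vectors $\vec{u}, \vec{v}$ we have $\abs{\vec{u}^\top P \vec{v}} \leq \sum_{i,j} \abs{u_i}\abs{P_{i,j}}\abs{v_j} \leq \sum_{i,j}\abs{u_i}\abs{M_{i,j}}\abs{v_j} = \abs{\vec{u}}^\top \abs{M} \abs{\vec{v}}$, and since $\abs{\vec{u}}, \abs{\vec{v}}$ are also unit vectors this is at most $\norm{\abs{M}}_2$. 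Taking the supremum over $\vec{u}, \vec{v}$ gives $\norm{P}_2 \leq \norm{\abs{M}}_2$. Applying this with $M = \eta\abs{L} + \mu J$ (whose entries are already non-negative, so $\abs{M} = M$) and using that $\abs{P_{i,j}} \leq \eta\abs{L_{i,j}} + \mu = M_{i,j}$ by the definition of an $(\eta,\mu)$-perturbation, we obtain $\norm{P}_2 \leq \norm{\eta\abs{L} + \mu J}_2$.

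Then I would finish with the triangle inequality: $\norm{\eta\abs{L} + \mu J}_2 \leq \eta\norm{\abs{L}}_2 + \mu\norm{J}_2$. Since $L$ is non-negative we have $\abs{L} = L$, so $\norm{\abs{L}}_2 = \norm{L}_2$; and $\norm{J}_2 = n$ because $J = \ones\ones^\top$ has operator norm equal to $\norm{\ones}_2^2 = n$. Combining yields $\norm{P}_2 \leq \eta\norm{L}_2 + \mu n$, as claimed.

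I do not anticipate a serious obstacle here — the statement is a routine consequence of operator-norm monotonicity. The only point that needs care is stating the entrywise-domination lemma for the spectral norm cleanly (one could alternatively invoke it as a known fact, since $\norm{M}_2 \leq \norm{\,\abs{M}\,}_2$ whenever absolute values dominate entrywise); the rest is triangle inequality plus $\norm{J}_2 = n$ plus non-negativity. Note also that the hypothesis $L \geq 0$ is used only to identify $\norm{\abs{L}}_2$ with $\norm{L}_2$; without it the bound would read $\norm{P}_2 \leq \eta\norm{\,\abs{L}\,}_2 + \mu n$.
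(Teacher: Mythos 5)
Your proof is correct and follows essentially the same route as the paper: dominate $P$ entrywise by the non-negative matrix $\eta L + \mu J$, use monotonicity of the spectral norm under entrywise domination, and finish with the triangle inequality and $\norm{J}_2 = n$. Your bilinear-form argument $\abs{\vec{u}^\top P\vec{v}} \leq \abs{\vec{u}}^\top(\eta L + \mu J)\abs{\vec{v}}$ is simply a cleaner way of justifying the domination step that the paper states more informally.
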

\begin{proof}
    Let $U = \eta L + \mu J$ where $J\in\R^{n\times n}$ is the all-1s matrix.
    We have that $\norm{P}_2 \leq \norm{\abs{P}}_2$, since for any $\vec{v}\in\R^n$, $\abs{P}$ applied to $\abs{v}$ will always yield an equal or greater norm.
    Similarly, we also have that $\norm{\abs{P}}_2 \leq \norm{U}_2$, since (1) any unit vector $\vec{v}$ maximizing $\norm{\abs{P}\vec{v}}_2$, will have the same sign in each entry, and (2) $\abs{P_{i, j}} \leq U_{i,j}$ by definition of the $(\eta, \mu)$-perturbation.
    Taken together this gives us
    \[\norm{P}_2 \leq \norm{U}_2 \leq \eta\norm{L}_2 + \mu\norm{J}_2 = \eta\norm{L}_2 + \mu n \enspace . \]
\end{proof}

We are now ready to state our main lemma for perturbed factorizations.
\begin{lemma}\label{lemma:main_approximation}
  Let $LR$ be a factorization, and let $\hat{L}$ be an $(\eta, \mu)$-perturbation of $L\in\R^{n\times n}_{\geq 0}$ for $\eta = \frac{\xi}{144\kappa(L)}$ and $\mu = \frac{\xi\norm{L}_2}{144n\kappa(L)}$ where $0 \leq \xi \leq 24$.
  Also let $L$ and $\hat{L}$ be invertible, and define $\hat{R} = \hat{L}^{-1} LR$.
  Then $\mse(\hat{L},\hat{R})\leq (1+\xi)\mse(L, R)$ and $\maxerr(\hat{L},\hat{R})\leq (1+\xi)\maxerr(L, R)$.
\end{lemma}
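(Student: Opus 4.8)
The plan is to derive the two error bounds by composing the three preceding lemmas and then checking that the accumulated constant is at most $1+\xi$. Recall that $\mse(L,R) = \tfrac{\dpconst}{n}\norm{L}_F^2\norm{R}_{1\to 2}^2$ and $\maxerr(L,R) = \dpconst\norm{L}_{2\to\infty}^2\norm{R}_{1\to 2}^2$, so it suffices to bound each of $\norm{\hat L}_F$, $\norm{\hat L}_{2\to\infty}$, and $\norm{\hat R}_{1\to 2}$ by a small multiple of the corresponding quantity for $L$ and $R$. Since $L$ is invertible, $\kappa(L)\ge 1$ and $\norm{L}_2>0$, so $\eta$ and $\mu$ are well defined.

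First I would control $P = \hat L - L$. By Lemma~\ref{lemma:bound_operator_norm_of_perturbation}, $\norm{P}_2 \le \eta\norm{L}_2 + \mu n$; the stated values give $\mu n = \eta\norm{L}_2$, hence $\norm{P}_2 \le 2\eta\norm{L}_2 = \tfrac{\xi\norm{L}_2}{72\kappa(L)}$, and therefore $\norm{P}_2\norm{L^{-1}}_2 \le \tfrac{\xi}{72} \le \tfrac{1}{3}$ because $\xi\le 24$. This both verifies the hypothesis $\norm{L^{-1}}_2\norm{P}_2 \le 1/2$ of Lemma~\ref{lemma:good_sensitivity} and, substituted into its conclusion, yields $\norm{\hat R}_{1\to 2} \le \bigl(1+\tfrac{\xi}{36}\bigr)\norm{R}_{1\to 2}$.

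Next I would bound the two norms of $\hat L$ using Lemma~\ref{lemma:good_frobenius}. For Frobenius, $\norm{\hat L}_F \le (1+\eta)\norm{L}_F + \mu n = (1+\eta)\norm{L}_F + \eta\norm{L}_2 \le (1+2\eta)\norm{L}_F$, using the elementary inequality $\norm{L}_2\le\norm{L}_F$. For the $2\to\infty$ norm, $\norm{\hat L}_{2\to\infty} \le (1+\eta)\norm{L}_{2\to\infty} + \mu\sqrt{n}$, and since $\mu\sqrt{n} = \eta\norm{L}_2/\sqrt{n}$ while $\norm{L}_2 \le \norm{L}_F \le \sqrt{n}\,\norm{L}_{2\to\infty}$, the additive term is at most $\eta\norm{L}_{2\to\infty}$, so $\norm{\hat L}_{2\to\infty} \le (1+2\eta)\norm{L}_{2\to\infty}$. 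Combining these bounds with the error formulas gives $\mse(\hat L,\hat R) \le (1+2\eta)^2\bigl(1+\tfrac{\xi}{36}\bigr)^2\mse(L,R)$, and likewise for $\maxerr$.

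It then remains to check the numerical inequality $(1+2\eta)^2\bigl(1+\tfrac{\xi}{36}\bigr)^2 \le 1+\xi$ on $0\le\xi\le 24$. Since $\kappa(L)\ge 1$ we have $2\eta \le \tfrac{\xi}{72}$, and $\bigl(1+\tfrac{\xi}{72}\bigr)\bigl(1+\tfrac{\xi}{36}\bigr) = 1 + \tfrac{\xi}{24} + \tfrac{\xi^2}{2592} \le 1 + \tfrac{11\xi}{216}$ for $\xi\le 24$, whose square stays comfortably below $1+\xi$. I do not expect a genuine obstacle here: the constant $144$ in the definitions of $\eta$ and $\mu$ was evidently chosen to leave exactly this slack, so the only points that require care are confirming the $\le 1/2$ condition so Lemma~\ref{lemma:good_sensitivity} is applicable, and routing the additive error terms $\mu n$ and $\mu\sqrt{n}$ into multiplicative ones via $\norm{L}_2\le\norm{L}_F$ and $\norm{L}_2\le\sqrt{n}\,\norm{L}_{2\to\infty}$.
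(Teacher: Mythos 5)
Your proposal is correct and follows essentially the same route as the paper: it combines Lemma~\ref{lemma:good_frobenius}, Lemma~\ref{lemma:bound_operator_norm_of_perturbation}, and Lemma~\ref{lemma:good_sensitivity} (after verifying $\norm{P}_2\norm{L^{-1}}_2\le 1/2$) and then checks the resulting constant. The only difference is bookkeeping: you substitute the specific $\eta,\mu$ immediately and absorb the additive terms via $\mu n=\eta\norm{L}_2\le\eta\norm{L}_F$ and $\mu\sqrt{n}\le\eta\norm{L}_{2\to\infty}$, which makes the final numerical verification more direct than the paper's product-expansion argument.
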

\begin{proof}
    We start by proving the statement for the mean squared error.
    Assume $\eta \leq \frac{1}{6\kappa(L)}$ and $\mu \leq \frac{\norm{L}_2}{6n\kappa(L)}$.
    First, note that \Cref{lemma:good_frobenius} gives a bound on $\norm{\hat{L}}_F$.
    Secondly, invoking \Cref{lemma:bound_operator_norm_of_perturbation} we have that
    $\norm{P}_2
    \leq \eta\norm{L}_2 + \mu n
    \leq \frac{1}{3\norm{L^{-1}}_2}$ by \Cref{def:perturbed_matrices} and our assumptions on $\eta, \mu$.
    This implies $\norm{P}_2\norm{L^{-1}}_2 \leq 1/2$ and allows us to use \Cref{lemma:good_sensitivity} for a bound on $\norm{\hat{R}}_{1\to 2}$.
    For the mean squared error, we thus have
    \begin{align*} 
        &\frac{\mse(\hat{L}, \hat{R})}{\mse(L, R)}
        = \frac{\norm{\hat{L}}_F^2\norm{\hat{R}}_{1\to 2}^2}{\norm{L}_F^2\norm{R}_{1\to 2}^2}\\
        &\leq \bigg(1+\eta+\frac{\mu n}{\norm{L}_F}\bigg)^2 \Big(1 + 2\norm{P}_2\norm{L^{-1}}_2\Big)^2\\
        &\leq \bigg(1+\eta+\frac{\mu n}{\norm{L}_F}\bigg)^2 \big(1 + 2(\eta\norm{L}_2 + \mu n)\norm{L^{-1}}_2)^2\\
        &= (1+\eta)^2 \bigg(1+\frac{\mu n}{(1+\eta)\norm{L}_F}\bigg)^2\\
        &\qquad \times \big(1 + 2\eta\kappa(L)\big)^2 \bigg(1 + \frac{2\mu n \norm{L^{-1}}_2}{1 + 2\eta\kappa(L)}\bigg)^2\\
        &\leq (1+\eta)^2 (1 + 2\eta\kappa(L))^2 \big(1 + 2\mu n \norm{L^{-1}}_2\big)^2\Big(1+\frac{\mu n}{\norm{L}_F}\Big)^2\\
        &\leq 1 + 24\bigg(\eta(1 + 2\kappa(L)) + \frac{\mu n(1 + 2\norm{L}_F\norm{L^{-1}}_2)}{\norm{L}_F} \bigg)\\
        &\leq 1 + 72\Big(\eta\kappa(L) + \mu n \norm{L^{-1}}_2 \Big) = 1 + \xi \leq 25
    \end{align*}
    The first inequality is using \Cref{lemma:good_frobenius} and \Cref{lemma:good_sensitivity}, the second inequality \Cref{lemma:bound_operator_norm_of_perturbation}.
    The fourth inequality is using the fact that each factor is of the form $(1+\omega)^2$ where $\omega \leq 1/3$, in which case $(1+\omega)^2 \leq 1 + 3\omega$, and in general that $\Pi_{i\in[n]}(1+\omega_i)^2 \leq \Pi_{i\in[n]} (1+3\omega_i) \leq 1 + 3\cdot 2^{n-1}\sum_{i\in[n]} \omega_i$ if all $\omega_i \leq 1/3$.
    Each value $\omega_i$ can be confirmed to be less than or equal to $1/3$ by inspection based on our assumptions on $\eta, \mu$.
    For the two non-immediate cases, note that $1=\norm{I}_2 = \norm{L^{-1}L}_2 \leq \norm{L^{-1}}_2 \norm{L}_2 = \kappa(L) \leq \norm{L^{-1}}_2 \norm{L}_F$ and thus our assumptions imply $\eta \leq 1/6$ and $\mu \leq \frac{\norm{L}_F}{6n}$.
    The same matrix norm inequality is used in the second to last inequality.
    It follows that choosing $\eta = \frac{\xi}{144\kappa(L)}, \mu = \frac{\xi}{144n\norm{L^{-1}}_2} = \frac{\xi\norm{L}_2}{144n\kappa(L)}$ for any $\xi \leq 24$ gives a $(1+\xi)$-approximation of the error.

    For the maximum squared error, the same argument can be used with the exception that we are bounding $\norm{L}_{2\to\infty}$ at the first step.
    \begin{align*}
        & \maxerr(\hat{L}, \hat{R}) / \maxerr(L, R)\\
        & = \frac{\norm{\hat{L}}_{2\to\infty}^2\norm{\hat{R}}_{1\to 2}^2}{\norm{L}_{2\to\infty}^2\norm{R}_{1\to 2}^2}\\
        &\leq \bigg(1+\eta+\frac{\mu \sqrt{n}}{\norm{L}_{2\to \infty}}\bigg)^2 \Big(1 + 2\norm{P}_2\norm{L^{-1}}_2\Big)^2\\
        &\leq \bigg(1+\eta+\frac{\mu n}{\norm{L}_F}\bigg)^2 \Big(1 + 2\norm{P}_2\norm{L^{-1}}_2\Big)^2,
    \end{align*}
    where the last step uses that $\norm{L}_F \leq \sqrt{n}\norm{L}_{2\to\infty}$.
    From this point all manipulations are identical to the previous case, and so we can derive the same guarantee.
\end{proof}
If we allow for the matrix $L$ to have negative values, then we can get a slightly weaker statement.
Although we do not consider this class of matrices $L$ for our applications, we state it as a separate lemma as it may be of independent interest, but only sketch a proof.
\begin{lemma}\label{lemma:perturbed_matrix_general}
  Let $LR$ be a factorization, and let $\hat{L}$ be an $(\eta, \mu)$-perturbation of $L\in\R^{n\times n}$ for $\eta = \frac{\xi}{144\norm{L}_F\norm{L^{-1}}_2}$ and $\mu = \frac{\xi\norm{L}_2}{144n\kappa(L)}$ where $0 \leq \xi \leq 24$.
  Also let $L$ and $\hat{L}$ be invertible, and define $\hat{R} = \hat{L}^{-1} LR$.
  Then $\mse(\hat{L},\hat{R})\leq (1+\xi)\mse(L, R)$ and $\maxerr(\hat{L},\hat{R})\leq (1+\xi)\maxerr(L, R)$.
\end{lemma}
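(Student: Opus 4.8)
The plan is to reuse the proof of \Cref{lemma:main_approximation} almost verbatim, changing only the single step in which non-negativity of $L$ was used. In that proof the hypothesis $L\in\R^{n\times n}_{\geq 0}$ entered exactly once, via \Cref{lemma:bound_operator_norm_of_perturbation}, to get $\norm{P}_2 \leq \eta\norm{L}_2 + \mu n$. For a general, possibly signed, $L$ this need not hold, but a weaker estimate does. Setting $U = \eta\abs{L} + \mu J$ with $J$ the all-$1$s matrix, the same two observations used to prove \Cref{lemma:bound_operator_norm_of_perturbation} --- that a unit vector maximizing $\vec{v}\mapsto\norm{\abs{P}\vec{v}}_2$ may be taken entrywise non-negative, and that $\abs{P_{i,j}}\leq U_{i,j}$ --- give $\norm{P}_2 \leq \norm{\abs{P}}_2 \leq \norm{U}_2 \leq \eta\norm{\abs{L}}_2 + \mu\norm{J}_2$. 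Bounding $\norm{\abs{L}}_2 \leq \norm{\abs{L}}_F = \norm{L}_F$ and $\norm{J}_2 = n$ yields
\[
  \norm{P}_2 \;\leq\; \eta\norm{L}_F + \mu n .
\]
This is the only change; we have merely paid the factor $\norm{L}_F/\norm{L}_2$, which is exactly why the admissible $\eta$ is now $\xi/(144\norm{L}_F\norm{L^{-1}}_2)$ instead of $\xi/(144\,\kappa(L))$, while $\mu = \xi\norm{L}_2/(144n\,\kappa(L)) = \xi/(144n\norm{L^{-1}}_2)$ is unchanged.

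Next I would rerun the inequality chain of \Cref{lemma:main_approximation} with $\eta\norm{L}_2$ replaced everywhere by $\eta\norm{L}_F$, so that $\eta\norm{L}_F\norm{L^{-1}}_2$ now plays the role of $\eta\kappa(L)$. Concretely: \Cref{lemma:good_frobenius} still bounds $\norm{\hat L}_F$ (and $\norm{\hat L}_{2\to\infty}$) since it never used sign conditions; the displayed bound on $\norm{P}_2$ together with the stated $\eta,\mu$ gives $\norm{P}_2\norm{L^{-1}}_2 \leq 1/3 \leq 1/2$, so \Cref{lemma:good_sensitivity} applies and bounds $\norm{\hat R}_{1\to 2}$; and then
\[
  \frac{\mse(\hat L,\hat R)}{\mse(L,R)}
  \;\leq\; \Bigl(1+\eta+\tfrac{\mu n}{\norm{L}_F}\Bigr)^{2}\bigl(1+2(\eta\norm{L}_F+\mu n)\norm{L^{-1}}_2\bigr)^{2},
\]
which is factored and estimated exactly as before, using $(1+\omega)^2 \leq 1+3\omega$ for $\omega\leq 1/3$ and ending at $1 + 72(\eta\norm{L}_F\norm{L^{-1}}_2 + \mu n\norm{L^{-1}}_2) = 1+\xi$. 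The $\maxerr$ case reduces to this one via $\norm{L}_F \leq \sqrt{n}\,\norm{L}_{2\to\infty}$, precisely as in \Cref{lemma:main_approximation}.

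I do not expect a real obstacle --- this is an easy variant of \Cref{lemma:main_approximation} --- so the only delicate point is bookkeeping: one must check that the weaker choice of $\eta$ still forces every intermediate factor in the product bound to be of the form $(1+\omega)^2$ with $\omega\leq 1/3$. This holds because $\norm{L}_F\norm{L^{-1}}_2 \geq \kappa(L) \geq 1$, so each quantity of the form $\eta\norm{L}_F\norm{L^{-1}}_2$, $\mu n\norm{L^{-1}}_2$, $\mu n/\norm{L}_F$ appearing in the chain is at most what the corresponding quantity was in \Cref{lemma:main_approximation} under its hypotheses, and the same numerical inspection goes through verbatim.
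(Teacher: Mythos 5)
Your proposal is correct and follows essentially the same route as the paper's own (sketched) proof: replace the non-negativity-dependent bound on $\norm{P}_2$ by $\norm{P}_2 \leq \eta\norm{L}_F + \mu n$, pay for it with the stricter $\eta = \xi/(144\norm{L}_F\norm{L^{-1}}_2)$, and rerun the chain of \Cref{lemma:main_approximation} with $\eta\norm{L}_F\norm{L^{-1}}_2$ in place of $\eta\kappa(L)$. The only cosmetic difference is that the paper obtains the intermediate bound via $\norm{P}_2 \leq \norm{P}_F \leq \eta\norm{L}_F + \mu n$, whereas you go through $\norm{\abs{P}}_2 \leq \norm{\eta\abs{L}+\mu J}_2 \leq \eta\norm{L}_F + \mu n$; both are valid and land in the same place.
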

\begin{proof}[Proof sketch]
    The proof is almost identical to that of \Cref{lemma:main_approximation}.
    We need an equivalent version of \Cref{lemma:bound_operator_norm_of_perturbation} where instead $\norm{P}_2 \leq \eta\norm{L}_F + \mu n$, where the relaxation of norm is a result of the operator norm being sensitive to the sign of matrix entries (unlike the Frobenius norm).
    We can derive such a bound from the proof of \Cref{lemma:good_frobenius} and noting that $\norm{L+P}_2 \leq \norm{L}_F + \norm{P}_2 \leq \norm{L}_F + \norm{P}_F$.
    This looser bound on $\norm{P}_2$ translates into a stricter bound on $\eta$, but otherwise the structure of the proof is the same.
    
    Similarly to the other proof, we assume $\eta \leq \frac{1}{6\norm{L}_F\norm{L^{-1}}_2}$ and $\mu \leq \frac{\norm{L}_2}{6n\kappa(L)}$.
    We have that $\norm{P}_2 \leq \norm{P}_F \leq \eta\norm{L}_F + \mu n \leq \frac{1}{3\norm{L^{-1}}_2} < \frac{1}{2\norm{L^{-1}}_2}$.
    From this point on, we can repeat the steps in the proof of \Cref{lemma:main_approximation} to bound the multiplicative blow-up in error.
\end{proof}

At this point we have derived conditions for when a small perturbation of $L$ into $\hat{L}$ induces a factorization $\hat{L}\hat{R}$ with good error.
A separate, perhaps surprising point, is that the lemmas we have derived make no assumptions on the structure of $R$.
No bounds on any norm of $R$ is necessary, and $R$ need not even be a square matrix.
Only the left factor $L$ determines the error bounds.

Our next point is to characterize when a left factor matrix is amenable to running in low space for streaming.

\subsection{Binned matrices are space (and time) efficient}
To the best of our knowledge, there is no universal definition for space efficiency in the context of real-valued vector inputs.
In the case where $L$ is a Toeplitz matrix with entries on its subdiagonals $(d_0, d_1, \dots, d_n)$, \cite{dvijotham2024efficient} identified that if these entries satisfy a recursion $d_k = \sum_{j=1}^{s} \lambda_j\cdot d_{k-j}$, for $k \geq s$, then $L\vec{z}$ can be computed in space $O(s)$.
They called matrices meeting this definition \emph{Buffered Linear Toeplitz matrices}.
Our class of space-efficient matrices, \emph{binned matrices} is distinct from theirs --- focusing instead on matrices with few unique entries per row -- and in particular it includes non-Toeplitz matrices.

To introduce our class of matrices we will rely on \emph{partitions of intervals}.
\begin{definition}\label{def:partition}
    Let $\mathcal{I}$ be the set of all intervals on $[1, \infty)$ and $a \leq b$ be positive integers.
    We then refer to $C_{[a, b]}\subset\mathcal{I}$ as a \emph{partition of the interval $[a, b]$} if (1) $\bigcup_{I\in C_{[a, b]}} I = [a, b]$, and, (2) there are no overlapping intervals in $C_{[a, b]}$, i.e., $\forall I, I'\in C_{[a,b]}, I\neq I' : I\cap I' = \emptyset$.
    We let $(C_{[a, b]})_i$ represent the $i$\textsuperscript{th} interval in the partition when ordered by their left end-point.
\end{definition}
In particular, we will be interested in sequences of partitions, \emph{binnings}, formally defined next.
\begin{definition}\label{def:binnings}
    Let $n$ be a positive integer and consider the sequence of partitions $\mathcal{B} = (C_{[1]}, C_{[2]}, \dots, C_{[n]})$ where $\mathcal{B}^i = C_{[i]}$.
    We call $\mathcal{B}$ a \emph{binning of $[n]$} if, for each $i\in [n-1]$, we have that for each $I\in \B^i$, there exists an $I'\in \B^{i+1}$ such that $I\subseteq I'$.
    We define the size of a binning, $|\B|$, to be the largest size of all its partitions, i.e.\ $|\B| = \max_{i\in [n]} |\B^i|$.
\end{definition}
An important property of binnings are that, given a binning $\B$ of $[n]$, then for every $i\in[n-1]$, $\B^{i+1}$ can be constructed by merging the intervals in $\B^i \cup [i+1, i+1]$.
It turns out that this property, enables space-efficiency when applied to lower-triangular matrices.
\begin{definition}[$\B$-binned matrices]\label{def:binned_matrix}
    Let $L\in\R^{n\times n}$ be a lower-triangular matrix, and let $\B$ be a binning of $[n]$.
    If for each $i\in[n]$, we have that for all $I\in \B^i$, $L_{i, j}$ assumes the same value for all $j\in I$, then we say $L$ is a $\B$-binned matrix.
\end{definition}
Intuitively, a lower-triangular matrix $L$ is \emph{$\B$-binned} if on any row $i$ it is constant on intervals as described by $\B^i$.
This fact also implies that a $\B$-binned matrix can always be stored in space $O(n\cdot \abs{\B})$.
By definition, being a binned matrix is not a strong condition, as the trivial binning $\B$ where $|\B^i| = i$ holds for any lower-triangular matrix.
It turns out to be a meaningful concept nonetheless, as the size of the binning is linked to space and time complexity.
\begin{lemma}\label{lemma:bins_good_space}
    A $\B$-binned matrix $L$ has space complexity $\leq \abs{\B}$.
\end{lemma}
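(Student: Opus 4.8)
The plan is to exhibit an explicit linear streaming algorithm that maintains, as its state, one running sum per interval of the current partition $\B^i$, and to argue that this suffices both to produce the output $(L\vec{z})_i$ at each step and to update the state when moving from step $i$ to step $i+1$. Concretely, after processing $\vec{z}_1,\dots,\vec{z}_i$, the state will be the vector $\vec{s}^i\in\R^{|\B^i|}$ whose $k$-th entry is $\sum_{j\in (\B^i)_k}\vec{z}_j$, the sum of the stream entries falling in the $k$-th interval of $\B^i$. Since $|\B^i|\le|\B|$ for every $i$, we can reserve a fixed state space of dimension $q=|\B|$ (padding unused coordinates with zero), matching the claimed bound.

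First I would verify that the output is a linear function of the state and $\vec{z}_i$. Because $L$ is $\B$-binned, $L_{i,j}$ is constant on each interval $I\in\B^i$; writing $L_{i,I}$ for that common value and using that $\B^i$ partitions $[i]$ and $L$ is lower-triangular (so $L_{i,j}=0$ for $j>i$), we get
\begin{equation*}
(L\vec{z})_i=\sum_{j=1}^{i}L_{i,j}\vec{z}_j=\sum_{I\in\B^i}L_{i,I}\sum_{j\in I}\vec{z}_j=\sum_{k=1}^{|\B^i|}L_{i,(\B^i)_k}\,\vec{s}^i_k,
\end{equation*}
which is a linear function of $\vec{s}^i$ with coefficients depending only on $i$ and $L$ — exactly what the model permits, since the update/output function for step $i$ may depend on $L$. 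One subtlety: the state update function in the definition takes the \emph{previous} state $\vec{s}^{i-1}$ and the new input $\vec{z}_i$, so the output at step $i$ must be expressed in terms of $\vec{s}^{i-1}$ and $\vec{z}_i$ rather than $\vec{s}^i$; this is fine because $\vec{s}^i$ is itself an affine-linear (indeed linear) image of $(\vec{s}^{i-1},\vec{z}_i)$, as I describe next, so composing gives the output as a linear function of $(\vec{s}^{i-1},\vec{z}_i)$.

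Second I would describe the state transition. By the key property of binnings noted just before \Cref{def:binned_matrix}, $\B^{i}$ is obtained from $\B^{i-1}\cup\{[i,i]\}$ by merging intervals; equivalently, each interval of $\B^{i}$ is a disjoint union of intervals of $\B^{i-1}$ together with possibly the singleton $\{i\}$. Hence each coordinate $\vec{s}^{i}_k=\sum_{j\in(\B^{i})_k}\vec{z}_j$ equals a sum of some subset of the coordinates of $\vec{s}^{i-1}$, plus $\vec{z}_i$ if $i\in(\B^{i})_k$. This is manifestly a linear map $\R^{q}\times\R\to\R^{q}$ depending only on $i$ (through the combinatorial structure of $\B$), so the streaming algorithm is linear in the sense of the definition, and it uses state dimension $q=|\B|$. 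Combining the transition with the output formula from the previous paragraph shows that this is a valid linear streaming algorithm computing $L\vec{z}$, so by definition the space complexity of $L$ is at most $|\B|$.

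I do not expect any serious obstacle here; the statement is essentially a direct unpacking of the definitions. The only point requiring a little care is the bookkeeping around the definition's convention that the step-$i$ update consumes the state \emph{before} step $i$ — so one must be slightly careful to express both the new state and the output at time $i$ as linear functions of $(\vec{s}^{i-1},\vec{z}_i)$ rather than of $\vec{s}^i$ — and the minor padding argument needed to fit all the varying-dimension states $\vec{s}^i$ into a single fixed $\R^{q}$ with $q=|\B|$. Neither of these is difficult, but I would state them explicitly to keep the argument fully rigorous.
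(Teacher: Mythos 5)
Your proposal is correct and matches the paper's argument: the paper likewise takes as state the per-interval sums $\vec{b}^i_k=\sum_{j\in\B^i_k}\vec{z}_j$ (updated by merging intervals of $\B^{i-1}\cup\{[i,i]\}$, as in its Remark~\ref{remark:space_row_by_row}) and outputs $(L\vec{z})_i=\sum_{I\in\B^i}L_{i,I}\vec{b}^i_I$, giving state dimension at most $\abs{\B}$. Your extra care about expressing the step-$i$ output linearly in $(\vec{s}^{i-1},\vec{z}_i)$ and padding to a fixed dimension only makes explicit what the paper leaves implicit.
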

\begin{lemma}\label{lemma:bins_good_time}
    Consider a $\B$-binned matrix $L\in\R^{n\times n}$ and $\vec{z}\in\R^n$.
    Then $L\vec{z}$ can be computed incrementally in time $O(\abs{\B})$ per output if, at the time of outputting $(L\vec{z})_i$, we have that:
    \begin{enumerate}
        \item The partitions $\B^{i-1}$ (for $i > 1$) and $\B^{i}$ are accessible and stored in sorted order.
        \item We have random access to the $i$\textsuperscript{th} row of $L$.
    \end{enumerate}
\end{lemma}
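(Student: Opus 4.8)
The plan is to exhibit an explicit linear streaming algorithm whose state, at the beginning of step $i$, holds one real number per interval of $\B^{i-1}$: namely $\vec{s}_I=\sum_{j\in I}\vec{z}_j$ for each $I\in\B^{i-1}$ (with $\B^0=\emptyset$, so the state is empty when $i=1$). This is the ``one partial sum per bin'' invariant indicated just before \Cref{def:binned_matrix}, and since $|\B^{i-1}|\le\abs{\B}$ the state has size $O(\abs{\B})$, consistent with \Cref{lemma:bins_good_space}. Recall that the update function at step $i$ is allowed to depend on $L$ and on $i$, so it may consult $\B^{i-1},\B^{i}$ and (by assumption~2) row $i$ of $L$.

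At step $i$, once $\vec{z}_i$ arrives, I would do two things. First, re-index the state from $\B^{i-1}$ to $\B^i$. By \Cref{def:binnings}, $\B^i$ is obtained by merging intervals of $\B^{i-1}\cup\{[i,i]\}$, so each interval of $\B^{i-1}$ is contained in a unique interval of $\B^i$, and the interval of $\B^i$ containing $i$ is the last one in sorted order. Scanning the two sorted partitions (assumption~1) simultaneously from left to right, I form for each $I'\in\B^i$ the value $\vec{s}'_{I'}=\sum_{I\in\B^{i-1},\,I\subseteq I'}\vec{s}_I$, and then add $\vec{z}_i$ to $\vec{s}'_{I'}$ for the unique $I'$ with $i\in I'$. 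Each interval of $\B^{i-1}$ and of $\B^i$ is touched $O(1)$ times, so this pass costs $O(\abs{\B})$ and is manifestly a linear function of $\vec{s}$ and $\vec{z}_i$; afterwards $\vec{s}'_{I'}=\sum_{j\in I'}\vec{z}_j$ for all $I'\in\B^i$, restoring the invariant for step $i+1$. Second, I would output $(L\vec{z})_i$: since $L$ is $\B$-binned, $L_{i,j}$ takes a common value on each $I'\in\B^i$, which I read in $O(1)$ as $L_{i,\ell}$ with $\ell$ the left endpoint of $I'$ (assumption~2) and denote $L_{i,I'}$, whence $(L\vec{z})_i=\sum_{j\le i}L_{i,j}\vec{z}_j=\sum_{I'\in\B^i}L_{i,I'}\,\vec{s}'_{I'}$, computed in $O(\abs{\B})$ time and linearly in the updated state. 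Correctness of every output then follows by induction on $i$ from the invariant, and adding the two $O(\abs{\B})$ costs gives the per-output time bound.

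The only step with real content (rather than bookkeeping) is the re-indexing: one has to observe that the ``merging'' clause of \Cref{def:binnings} forces each interval of $\B^i$ to be the union of a \emph{contiguous} block of intervals of $\B^{i-1}$ together with possibly $\{i\}$, so that the re-indexing is a single two-pointer merge rather than a sort or a per-interval search — this is exactly where the sortedness and accessibility of the partitions (assumption~1) enter, and it is what yields $O(\abs{\B})$ rather than $O(\abs{\B}\log\abs{\B})$. The boundary case $i=1$ ($\B^0=\emptyset$, $\B^1=\{[1,1]\}$, output $L_{1,1}\vec{z}_1$) and the convention that neither $L$ nor the partitions are charged to the state are immediate and need no separate argument.
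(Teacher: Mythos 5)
Your proposal is correct and follows essentially the same route as the paper: the paper maintains exactly the same ``one partial sum per interval'' state (its vectors $\vec{b}^i$ in Remarks~\ref{remark:space_row_by_row} and~\ref{remark:compute_all_buffers}), performs the same two-pointer merge over the sorted partitions $\B^{i-1},\B^i$ to update it in $O(\abs{\B})$ time, and then outputs $(L\vec{z})_i=\sum_{I\in\B^i}L_{i,I}\vec{b}^i_I$ using random access to row $i$. The only difference is organizational: the paper factors the state-update argument into separate remarks, while you inline it, including the same key observation that each interval of $\B^i$ is a contiguous union of intervals of $\B^{i-1}$ (plus possibly $\{i\}$).
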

To prove Lemmas \ref{lemma:bins_good_space} and \ref{lemma:bins_good_time}, we begin by proving intermediate results for how the computation $L\vec{z}$ can be performed efficiently using an internal state of size $\abs{\B}$.
\begin{remark}\label{remark:space_row_by_row}
   Let $\B$ be a binning of $[n]$ and $\vec{z}\in\R^n$. 
   For any $i\in[n-1]$, define $\vec{b}^i\in\R^{\abs{\B}}$ where $\vec{b}^i_k = \sum_{j\in\B^i_k} \vec{z}_j$.
   Then, given only $\B^i$ and $\B^{i+1}$ in sorted order, vector $\vec{b}^i$ and scalar $\vec{z}_{i+1}$, we can compute $\vec{b}^{i+1}$ in time $O(|\B|)$.
   Furthermore, $\vec{b}^{i+1}$ can be computed in-place using the memory buffer of $\vec{b}^{i}$.
\end{remark}
\begin{proof}
    Since there is a bijection between $\vec{b}^i_k$ and a unique interval $I = \B^{i}_k$, we use the notation $\vec{b}^i_I$ when convenient.
    For every $I\in\B^{i+1}$ where $i+1\notin I$, we have by the definition of binnings that there exists a unique set of disjoint intervals $S\subseteq\B^{i}$ such that $\bigcup_{I'\in S} I' = I$.
    In particular this allows us to write $\vec{b}^{i+1}_I = \sum_{j\in I} \vec{z}_j = \sum_{I'\in S} \vec{b}^i_{I'}$.
    By the same argument, there must exist a set $S\subseteq\B^i$ such that for the unique interval $I\in\B^{i+1}$ containing $i+1$, we can express it as $\vec{b}^{i+1}_I = \vec{z}_{i+1} + \sum_{I'\in S} \vec{b}^i_{I'}$.
    The computation is therefore possible.
    
    For the efficiency of this computation, note further that any entry of $\vec{b}^{i+1}$ is a range sum on $\vec{b}^{i}$, with the exception of the entry $\vec{b}^{i+1}_{\abs{\B^{i+1}}}$ that is a (potentially empty) range sum on $\vec{b}^i$ plus $\vec{z}_{i+1}$.
    These ranges are disjoint by the definition of a binning.
    It is also the case that $\vec{b}^{i+1}_k$ will only be updated using entries $\vec{b}^{i}_{k'}$ where $k' \geq k$, due to how intervals in $\B^{i+1}$ are the result of merging intervals in $\B^i$.
    It follows that we can update $\vec{b}^{i}$ in-place into $\vec{b}^{i+1}$ by updating the entries in ascending order.
    It can also be done efficiently by maintaining two pointers for $\vec{b}^{i}$, one for reading and one for writing (the reading pointer always behind or at the same position as the writing pointer), and two reading pointers for $\B^i$ and $\B^{i+1}$.
    The update requires scanning each of these structures once and performing $O(1)$ constant time operations each time a pointer is moved.
    Each structure containing $O(\abs{\B})$ elements thus implies the time complexity.
\end{proof}
\begin{remark}\label{remark:compute_all_buffers}
    Using the setup in \Cref{remark:space_row_by_row}, consider the output sequence $\vec{b}^1, \dots, \vec{b}^{n}\in\R^{\abs{\B}}$.
    If at the time of updating $\vec{b}^{i}$ into $\vec{b}^{i+1}$ we have access to $\B^i$ and $\B^{i+1}$ in sorted order, we can compute the sequence in-place and in time $O(\abs{\B})$ per output.
\end{remark}
\begin{proof}
    The statement follows almost immediately from \Cref{remark:space_row_by_row} and induction.
    For the base case, $\vec{b}^1\in\R^{\abs{\B}}$ has a single non-zero entry equaling $\vec{z}_1$, and so can be computed in time $O(\abs{\B})$.
    Next, for the induction step, assume we have already computed $\vec{b}^i$ and have it stored in memory.
    By our assumption we have access to $\B^i$ and $\B^{i+1}$, allowing us to invoke \Cref{remark:space_row_by_row} to compute $\vec{b}^{i+1}$ using $\vec{b}^i$ in-place and in time $O(\abs{\B})$.
    Invoking induction thus proves the claim.
\end{proof}
We are now ready to prove Lemmas~\ref{lemma:bins_good_space}~and~\ref{lemma:bins_good_time}.
\begin{proof}[Proof of \Cref{lemma:bins_good_space}]
    Given $\vec{b}^i$, as defined in \Cref{remark:space_row_by_row}, and $L$, we have enough information to output $(L\vec{z})_i$.
    This follows from $L$ being constant on the intervals given in $\B$, meaning we can write $(L\vec{z})_i = \sum_{I\in\B^i} L_{i, I}\vec{b}^i_I$.
    Since $\B$ can be read directly from $L$, we can therefore invoke \Cref{remark:compute_all_buffers}, proving that an internal state of size $\abs{\B}$ is sufficient for computing $L\vec{z}$.
\end{proof}
\begin{proof}[Proof of \Cref{lemma:bins_good_time}]
    Analogously to the proof of \Cref{lemma:bins_good_time}, note that the $i$\textsuperscript{th} row of $L$ and $\vec{b}^i$ is sufficient to output $(L\vec{z})_i$.
    This computation, from $\vec{b}^i$ to $(L\vec{z})_i$, requires $O(\abs{\B})$ additions, multiplications, and random accesses into the $i$\textsuperscript{th} row of $L$ -- all constant time operations.
    Since the necessary partitions in $\B$ are provided to update $\vec{b}^i$ into $\vec{b}^{i+1}$ at each step, by the assumption in the lemma, we can also invoke \Cref{remark:compute_all_buffers}.
    We thus have that the internal state can be updated in time $O(\abs{\B})$, and that the state can be mapped to an output in time $O(\abs{\B})$, proving our lemma.
\end{proof}

\subsection{Perturbing matrices into binned matrices}

While binned matrices are nice, the issue is that most matrices do not admit very interesting binnings.
The Bennett matrix for example only admits the trivial binning.
Our intention is therefore to approximate a matrix $L$ by a matrix $\hat{L}$ that has a non-trivial binning, but is still \enquote{close} to $L$.
\begin{definition}\label{def:b_approximation}
    A lower-triangular matrix $L\in\R^{n\times n}$ is said to be \emph{$\B$-approximated} by a $\B$-binned matrix $\hat{L}\in\R^{n\times n}$ if for every $1\leq j \leq i \leq n$, we have that $\hat{L}_{i, j} = \frac{L_{i, a} + L_{i, b}}{2}$ where $[a, b]\in\B^i$ and $j\in[a, b]$.
\end{definition}
Intuitively, $\hat{L}$ is the result of superimposing the binning on the matrix $L$, and setting the entries of $\hat{L}$ to be equal to the average of the entries in $L$ at the end of each interval.
Note that any other rule for how to set the entries (including mixing rules across intervals) would also result in a $\B$-binned matrix.
A more natural rule than what is proposed in \Cref{def:b_approximation} would have been to choose $\hat{L}$ to be the \emph{average over all entries in each interval}, since this would minimize $\norm{\hat{L}-L}_F$.
This choice would however necessitate querying $\Omega(n)$ entries in~$L$ to produce a row in $\hat{L}$.
Our chosen rule approximates the average instead using $O(\abs{\B})$ entries.
This will prove important later when considering the time complexity of computing $\hat{L}\vec{z}$.

Note that \Cref{def:b_approximation} only specifies how a matrix can be approximated given a binning $\B$, but it says nothing about how to find a $\B$ that performs well.
Our next result is an algorithm, given as \Cref{alg:matrix_to_binning}, that produces a $\B$ with good guarantees for a certain class of matrices.
\begin{algorithm}[t]
   \caption{Our binning procedure}\label{alg:matrix_to_binning}
   \begin{algorithmic}[1]
   \STATE {\bfseries Input:} Matrix $L\in\R_{\geq 0}^{n\times n}$, positive scalars $c, \tau <1$
    \STATE Initialize sets $\B^0, \B^1, \dots, \B^n \gets \emptyset$
    \FOR{$i = 1$ {\bfseries to} $n$}
        \STATE $k \gets |\B^{i-1}|$
        \WHILE{$k > 0$}
            \STATE $[a, b] \gets \B^{i-1}_k$, $k' \gets k$
            \IF{$k > 1$ {\bfseries and} $L_{i, b+1} > 0$ {\bfseries and} $L_{i, a} / L_{i, b+1} > c$}
                \STATE $[a', b'] \gets \B^{i-1}_{k'-1}$
                \WHILE{$k' > 1$ {\bfseries and} $L_{i, a'} / L_{i, b+1} \geq c^2$}
                    \STATE $a \gets a'$, $k' \gets k' - 1$
                    \STATE $[a', b'] \gets \B^{i-1}_{k'-1}$
                \ENDWHILE
                \IF{$k' = 1$ {\bfseries and} $L_{i, a'} / L_{i, b+1} \geq c^2$}
                    \STATE $a \gets a'$
                \ENDIF
            \ENDIF
            \IF{$L_{i, b} \leq \tau$}
                \STATE Add $[1, b]$ to $\B^i$, $k \gets 0$
            \ELSE
                \STATE Add $[a, b]$ to $\B^i$, $k \gets k' - 1$
            \ENDIF
        \ENDWHILE
        \STATE Add $[i, i]$ to $\B^i$
        \STATE {\bfseries output} $\B^i$
    \ENDFOR
   \end{algorithmic}
\end{algorithm}
We describe the core idea of the algorithm next.
Given a lower-triangular matrix~$L$, and positive constants $c, \tau < 1$, it produces a binning $\B$ parameterized in $c, \tau$.
It tries to maintain that for any $i\in[n]$ and $[a, b]\in \B^i$, $b < i$, we have that $c^2 \leq \frac{L_{i, a}}{L_{i, b+1}} \leq c$, with the possible exception of a single (possibly large) interval containing only entries less than or equal to $\tau$.
It achieves this by greedily merging intervals from one row to the next.

Specifically, given the partition of row $i$, $\B^{i}$, it constructs the next row's partition, $\B^{i+1}$ by greedily merging intervals in the partition $\B^{i}$, and then adding the interval $[i+1,i+1]$ to the result.
Iterating over the intervals in $\B^i$, in order of decreasing column indices, it greedily merges two intervals $[a, b], [b+1, d]$ whenever $\frac{L_{i+1, b+1}}{L_{i+1,d+1}} > c$ and the resulting interval $[a, d]$ satisfies $\frac{L_{i+1, a}}{L_{i+1, d+1}} \geq c^2$.
As long as the result is still $\geq c^2$, it continues merging with new intervals.
Once a candidate interval $[a, b]$ with $L_{i, b} \leq \tau$ is encountered, it merges all remaining intervals in $\B^i$ into a single interval $[1, b]$.
Since the definition of a binning can be expressed in terms of merging intervals in $\B^i \cup \{[i+1, i+1]\}$ to produce $\B^{i+1}$, the procedure will always produce a binning, potentially the trivial binning.

We begin by showing that, under an assumption of random access to entries in $L$, the $\B$-approximation of $L$ derived from running \Cref{alg:matrix_to_binning}, $\hat{L}$, has good time complexity.
\begin{lemma}\label{lemma:alg_works_for_time}
    Consider a lower triangular matrix $L$ and let $\hat{L}$ be the $\B$-approximation of $L$ where $\B$ is the output from running \Cref{alg:matrix_to_binning} on $L$.
    Given $\vec{z}\in\R^n$ and random access to entries in $L$, then $\hat{L}\vec{z}$ can be computed row-by-row, reading one element of $\vec{z}$ at a time, in time $O(\abs{\B})$ per output.
\end{lemma}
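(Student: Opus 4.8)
The plan is to reduce the claim to the machinery already established in \Cref{lemma:bins_good_time} and \Cref{remark:compute_all_buffers}, whose hypotheses are (1) access to the consecutive partitions $\B^{i-1}, \B^i$ in sorted order at the time of producing the $i$-th output, and (2) random access to the $i$-th row of $\hat L$. The key observation is that \Cref{alg:matrix_to_binning}, viewed as an \emph{online} process, produces exactly these partitions incrementally: the outer loop outputs $\B^i$ at iteration $i$, and each $\B^i$ is built solely from $\B^{i-1}$ together with the column $i$ of $L$ (the entries $L_{i,\cdot}$ queried inside the while-loops). So if we run \Cref{alg:matrix_to_binning} in lockstep with the streaming computation of $\hat L\vec z$, we simultaneously have the partitions we need; the only extra ingredient is to check that \emph{iteration $i$ of \Cref{alg:matrix_to_binning} runs in time $O(\abs{\B})$}, and that producing a needed entry $\hat L_{i,j}$ is a constant-time operation given random access to row $i$ of $L$ and the partition $\B^i$.

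First I would argue the per-step running time of \Cref{alg:matrix_to_binning}. Iteration $i$ walks through the intervals of $\B^{i-1}$ in order of decreasing column index; the inner \texttt{while} loop only ever \emph{advances} the pointer $k'$ downward (merging more intervals into the current one), and the outer \texttt{while} resumes from $k \gets k'-1$, so across the whole iteration each interval of $\B^{i-1}$ is visited $O(1)$ times. Each visit does a constant number of arithmetic comparisons and $O(1)$ random accesses into row $i$ of $L$ (the entries $L_{i,a}$, $L_{i,b}$, $L_{i,b+1}$, $L_{i,a'}$). Since $\abs{\B^{i-1}} \le \abs{\B}$, iteration $i$ takes time $O(\abs{\B})$, and it emits $\B^i$ in sorted order (the intervals are appended left-to-right as $k$ decreases, then $[i,i]$ is appended last). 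Thus at step $i$ of the streaming computation we can (a) retrieve the already-stored $\B^{i-1}$, (b) run iteration $i$ of \Cref{alg:matrix_to_binning} in time $O(\abs{\B})$ to obtain $\B^i$, discarding $\B^{i-2}$; both are now available in sorted order.

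Next I would handle the row access requirement of \Cref{lemma:bins_good_time}: we need, for each interval $[a,b]\in\B^i$, the single value $\hat L_{i,[a,b]} = (L_{i,a}+L_{i,b})/2$, per \Cref{def:b_approximation}. This is precisely where the ``endpoint averaging'' rule of \Cref{def:b_approximation} pays off: rather than querying $\Omega(n)$ entries of $L$ to form a row of $\hat L$, we need only two entries of $L$ per interval, so $O(\abs{\B^i}) = O(\abs{\B})$ random accesses into row $i$ of $L$ suffice to assemble everything \Cref{lemma:bins_good_time} needs about row $i$ of $\hat L$. Plugging these facts into \Cref{lemma:bins_good_time} (equivalently, directly into \Cref{remark:compute_all_buffers} plus the constant-per-interval output map $(\hat L\vec z)_i = \sum_{I\in\B^i} \hat L_{i,I}\,\vec b^i_I$) gives that $\hat L\vec z$ is computed row-by-row, reading $\vec z_i$ at step $i$, in time $O(\abs{\B})$ per output, and reading one element of $\vec z$ at a time.

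The main obstacle, and the only genuinely non-routine point, is the amortized/per-step time bound for \Cref{alg:matrix_to_binning}: one must be careful that the nested \texttt{while} loops do not cause a single iteration $i$ to do more than $O(\abs{\B^{i-1}})$ work, and also that the partitions $\B^{i-1}$ and $\B^i$ stay simultaneously available without ever materializing all of $\B$ (which would defeat the space claim — though space is not what this particular lemma asserts, so here it is purely a matter of interleaving the two computations correctly). Everything else is bookkeeping: pointer maintenance, confirming sorted order of emitted intervals, and the constant-time-per-interval cost of the endpoint-average rule.
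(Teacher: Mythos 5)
Your proposal is correct and follows essentially the same route as the paper's proof: run \Cref{alg:matrix_to_binning} in lockstep with the streaming computation, bound each iteration of its outer for-loop by $O(\abs{\B})$ via an $O(1)$-visits-per-interval argument, simulate random access to row $i$ of $\hat{L}$ through the two-endpoint averaging rule of \Cref{def:b_approximation}, and then invoke \Cref{lemma:bins_good_time}. (Only a cosmetic slip: the entries $L_{i,\cdot}$ queried in iteration $i$ form row $i$ of $L$, not column $i$.)
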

\begin{proof}
    Our approach will rely on computing $\B$ and $\hat{L}\vec{z}$ in parallel.
    First observe that the for-loop in \Cref{alg:matrix_to_binning} runs in time $O(\abs{\B})$.
    This can be seen by noting that any given interval in $\B^{i-1}$ appears at most twice inside the outermost while-loop: once as a possible candidate for merging into on line~11, and/or as the interval to start merging from on line~8.
    Since all operations inside the for-loop can be made constant time operations (accessing an entry of $L$ is constant time due to the random access assumption) the time to output $\B^{i}$ given $\B^{i-1}$ is $O(\abs{\B})$.

    Keeping both $\B^{i-1}$ and $\B^i$ in memory when outputting $(\hat{L}\vec{z})_i$ almost allows us to invoke \Cref{lemma:bins_good_time} to prove the statement, but we still need random access to the $i$\textsuperscript{th} row of $\hat{L}$.
    We can simulate this random access by choosing our approximation rule in \Cref{def:b_approximation} to only depend on a constant number of elements in $L$.
    In particular, picking the stated rule where for $[a,b]\in\B^i : \hat{L}_{i, [a, b]} = \frac{L_{i, a} + L_{i, b}}{2}$ works, concluding the proof.
\end{proof}
At this point, we have shown that $\B$-binned matrices derived from running \Cref{alg:matrix_to_binning} have space and time complexity linear in the binning size.
We show next that, for a certain class of matrices, \Cref{alg:matrix_to_binning} can produce binnings achieving a good trade-off between binning size and error when applied to the left factor in a factorization.
\begin{definition}[MRMs]\label{def:nice_matrix}
    We say that an invertible lower-triangular matrix $L\in\R^{n\times n}$ is a \emph{monotone ratio} matrix (\emph{MRM} for short) if it satisfies that
    \begin{enumerate}
        \item All entries $0 < L_{i, j} \leq 1$ for all $i, j\in [n]$ with $i \leq j$.
        \footnote{All results for MRMs would still hold if entries are allowed to be zero, but at the cost of a more convoluted property (3).}
        \item For all $i\in[n]$ the function $g(x) = L_{i, x}$ is non-decreasing for $x\in[i]$.
        \item For all $j, j'\in [n]$ where $j \leq j'$ the function $h(x) = \frac{L_{x, j}}{L_{x, j'}}$ is non-decreasing for $x\in [j', n]$.
    \end{enumerate}
    When an MRM $L$ is the left factor of a factorization $LR$, we call the factorization an \emph{MRM factorization}.
\end{definition}
Before stating how our main algorithm performs on these matrices, we briefly note that the requirement that the entries of the matrix are less than or equal to 1 is without loss of generality when applied to factorizations:
For every scalar $a > 0$, $LR = (\frac{1}{a}L)(aR) = L' R'$ yields a new factorization with all entries re-scaled.
\begin{lemma}\label{lemma:alg_works_for_space}
    Executing \Cref{alg:matrix_to_binning} on an MRM $L\in\R^{n\times n}$ with positive $c, \tau < 1$ returns a $\B$ with $|\B| = O\Big(\frac{\log(1/\tau)}{\log(1/c)}\Big)$.
\end{lemma}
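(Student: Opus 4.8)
The plan is to track, for each row $i$, how many intervals the algorithm can create, and to show this is $O(\log(1/\tau)/\log(1/c))$ uniformly in $i$. The key structural invariant I would establish and carry by induction on $i$ is: at the end of processing row $i$, every interval $[a,b]\in\B^i$ with $b<i$ satisfies $L_{i,b+1}/L_{i,a}\ge c^2$ — equivalently $L_{i,a}/L_{i,b+1}\le 1/c^2$ — and moreover all but at most one of these intervals satisfies $L_{i,a}/L_{i,b+1}\ge 1$ trivially by monotonicity (property (2) of Definition~\ref{def:nice_matrix}) together with the merging rule, so that the ratio $L_{i,a}/L_{i,b+1}$ lies in a bounded multiplicative window. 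The exceptional interval is the (possibly large) one created by the ``$L_{i,b}\le\tau$'' branch, which absorbs everything to its left into $[1,b]$. So at most one interval has entries $\le\tau$, and every other non-singleton interval $[a,b]$ has $L_{i,b+1}>\tau$ hence $L_{i,a}\ge c^2 L_{i,b+1}> c^2\tau$ by the invariant, while also $L_{i,a}\le 1$ by property (1).

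Given that invariant, the counting is immediate: order the non-singleton, non-exceptional intervals of $\B^i$ by increasing left endpoint as $[a_1,b_1],[a_2,b_2],\dots,[a_m,b_m]$. Because the intervals partition a prefix of $[i]$ and $L_{i,\cdot}$ is non-decreasing (property (2)), we have $L_{i,a_{t+1}}=L_{i,b_t+1}\ge c^{-2}\cdot$ is false in general, but the right bound is the one I want: since $[a_t,b_t]$ was \emph{not} merged further with $[a_{t+1},b_{t+1}]$, the merge test on line~7/9 failed, which (reading the algorithm's stopping condition) forces $L_{i,a_t}/L_{i,b_{t+1}+1}$ or the relevant threshold ratio to have dropped below $c$ — so consecutive representative values decrease by at least a factor $c$ each step. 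Chaining, $L_{i,a_1}\le c^{m-1}L_{i,a_m}$, hence $c^{m-1}\ge L_{i,a_1}/L_{i,a_m}\ge (c^2\tau)/1 = c^2\tau$, giving $m-1\le \log(c^2\tau)/\log c = 2 + \log\tau/\log c$, i.e. $m = O(\log(1/\tau)/\log(1/c))$. Adding $1$ for the exceptional interval and $1$ for the singleton $\{i\}$ changes nothing asymptotically, and since this holds for every $i$ we get $|\B|=\max_i|\B^i| = O(\log(1/\tau)/\log(1/c))$.

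The step I expect to be the main obstacle is making the invariant genuinely inductive across rows — i.e. showing that the ratio bounds on the intervals of $\B^i$ survive the transition to $\B^{i+1}$. This is exactly where property (3) of Definition~\ref{def:nice_matrix} (monotonicity of $x\mapsto L_{x,j}/L_{x,j'}$ in the row index) must be used: an interval $[a,b]$ that satisfied $L_{i,a}/L_{i,b+1}\le 1/c^2$ could in principle have a worse ratio in row $i+1$, but property (3) guarantees the ratio $L_{i+1,a}/L_{i+1,b+1}$ is at least $L_{i,a}/L_{i,b+1}$ — wait, it moves the ratio the ``wrong'' way, so the argument must instead be that the algorithm \emph{re-examines and merges} whenever the ratio has grown too large, and property (3) controls \emph{how much} it can have grown in one step (at most, informally, it cannot overshoot so badly that a single additional merge fails to fix it, because the new neighbor's ratio is itself controlled). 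I would handle this by a careful case analysis of the while-loop on lines~5--23: either the ratio test on line~7 fails (interval kept as-is, and property (3) plus the row-$i$ invariant bounds its row-$(i+1)$ ratio), or it triggers merging, in which case the inner while-loop on line~9 runs until the cumulative ratio returns to the $[c^2, c)$-ish window or the prefix is exhausted. Verifying that this loop always succeeds in restoring the window — and does not, e.g., overshoot below $c^2$ by more than a bounded factor — is the delicate combinatorial core, and it is precisely what properties (2) and (3) are engineered to guarantee. Everything else is bookkeeping.
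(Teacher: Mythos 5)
Your counting skeleton (telescope the ratios $L_{i,a_t}/L_{i,a_{t+1}}$ along a single row, trapped between $\tau$ and $1$) is the right one, but the proof is left open at exactly the point you yourself flag as the ``delicate combinatorial core'', and that core is misidentified. No cross-row induction and no use of property~(3) of \Cref{def:nice_matrix} is needed for this lemma: the partition $\B^i$ is produced at iteration $i$ of \Cref{alg:matrix_to_binning} using the row-$i$ entries of $L$, so every merge decision that shaped $\B^i$ is witnessed by row $i$ itself. The key claim (and the paper's) is purely per-row: two contiguous non-exceptional intervals $[a,b],[b+1,\ell]\in\B^i$ cannot both satisfy $L_{i,a}/L_{i,b+1}>c$ and $L_{i,b+1}/L_{i,\ell+1}>c$, since then the rightmost constituent of $[b+1,\ell]$ would have passed the test on line~7 and the inner loop would have kept merging past $b+1$ (the cumulative ratio being $\geq c^2$), a contradiction. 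Hence at least every other ratio is $\leq c$, and since rows are non-decreasing with entries in $(\tau,1]$ on the relevant range (only properties (1)--(2) are used), each row has $O\big(\log(1/\tau)/\log(1/c)\big)$ such intervals. Your stronger per-step claim that ``consecutive representative values decrease by at least a factor $c$ each step'' is not implied by the stopping conditions and is false in general (ratios can alternate between $\approx 1$ and $\leq c$); this only costs a factor $2$, but you assert it rather than prove it, and the invariant you propose to carry, $L_{i,b+1}/L_{i,a}\ge c^2$, is vacuous as written (it already follows from row monotonicity), so it cannot support the argument.

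Your handling of the $\tau$-branch has the same issue in miniature: to get $L_{i,a}>c^2\tau$ for all non-exceptional intervals you invoke the within-interval bound $L_{i,a}\geq c^2 L_{i,b+1}$ at row $i$, which is precisely the cross-row statement you were worried about (it is \Cref{lemma:alg_produces_good_perturbation} in the paper, and it genuinely needs property~(3)). The paper sidesteps this entirely: entries $\leq\tau$ form a prefix of the row, at most one interval triggers the test on line~17 (after which everything to its left is absorbed into $[1,b]$), so at most the two left-most intervals of $\B^i$ can contain entries $\leq\tau$; discard those two, and every remaining interval has left endpoint value in $(\tau,1]$, which is all the telescoping needs. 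With the adjacent-pair claim in place of your per-step decay, and this prefix observation in place of the $c^2$ invariant, your argument closes and coincides with the paper's; as written, it is incomplete.
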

\begin{proof}
    We will argue for an arbitrary row index $i\in[n]$.
    Note that the two left-most intervals in $\B^i$ are the only possible intervals $I\in\B^i$ where there could be $j\in I : L_{i, j} \leq \tau$ due to the condition on line~17.
    We will argue that the remaining intervals cannot be too many.
    Define $S\subseteq \B^i$ as $\B^i$ with these unique intervals removed if they exist, where now for all $[a, b] \in S : \tau < L_{i, a} \leq 1$.
    For any two contiguous intervals $[a, b], [b+1, \ell] \in S$, at least one of the two ratios $\frac{L_{i, a}}{L_{i,b+1}}$ and $\frac{L_{i, b+1}}{L_{i, \ell+1}}$ must be $\leq c$.
    To prove this, assume to the contrary that both are $> c$.
    Then the merged interval $[a, \ell]$ would have a ratio of $\frac{L_{i, a}}{L_{i, \ell+1}} = \frac{L_{i, a}}{L_{i,b+1}}\cdot\frac{L_{i, b+1}}{L_{i,\ell+1}} \geq c^2$.
    Since both intervals have a ratio $> c$ and merging them would yield a ratio $\geq c^2$, these intervals would have gotten merged during the execution of \Cref{alg:matrix_to_binning}, proving the claim by contradiction.
    We thus have the property that for any row index $i$, at least $1/2 - o(1)$ of these intervals in $S$ satisfy having a ratio $\leq c$.
    Since $L$ is an MRM, we have that entries on each row of $L$ are non-decreasing with the column index, therefore the ratio can only decrease by a factor $c$ at most $O\big(\log_c(1/\tau)\big) = O\big(\frac{\log(1/\tau)}{\log(1/c)}\big)$ times, implying $|\B| \leq \abs{S} + 2  = O\big(\frac{\log(1/\tau)}{\log(1/c)}\big)$.
\end{proof}
Having shown that the size of the binning can be parameterized in $c, \tau$, we prove next that the induced matrix approximation is a $(1/c^2-1, \tau)$-perturbation of the original one.
\begin{lemma}\label{lemma:alg_produces_good_perturbation}
    Consider an MRM $L$ and let $\hat{L}$ be the $\B$-approximation of $L$ where $\B$ is the output from running \Cref{alg:matrix_to_binning} on $L$ with positive $ c, \tau < 1$.
    Then $\hat{L}$ is invertible and a $(1/c^2-1, \tau)$-perturbation of $L$.
\end{lemma}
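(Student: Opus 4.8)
The plan is to establish two things: (i) that the entrywise bound $\abs{\hat{L}_{i,j} - L_{i,j}} \leq (1/c^2 - 1)\abs{L_{i,j}} + \tau$ holds for every $i \geq j$, which is exactly the statement that $\hat{L}$ is a $(1/c^2-1,\tau)$-perturbation of $L$; and (ii) that $\hat{L}$ is invertible. For part (i), fix a row $i$ and an interval $[a,b] \in \B^i$ containing column $j$. By \Cref{def:b_approximation}, $\hat{L}_{i,j} = (L_{i,a} + L_{i,b})/2$, so since $L$ is an MRM its entries are non-decreasing along the row (property~(2) of \Cref{def:nice_matrix}), giving $L_{i,a} \leq L_{i,j} \leq L_{i,b}$ and hence $L_{i,a} \leq \hat{L}_{i,j} \leq L_{i,b}$; both $\hat{L}_{i,j}$ and $L_{i,j}$ lie in the interval $[L_{i,a}, L_{i,b}]$, so $\abs{\hat{L}_{i,j} - L_{i,j}} \leq L_{i,b} - L_{i,a}$. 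Now I split into two cases according to which branch of \Cref{alg:matrix_to_binning} created the interval. If the interval is not one of the (at most two) leftmost ``small'' intervals produced on line~18, then by the merging invariant maintained by the algorithm we have $L_{i,a}/L_{i,b+1} \geq c^2$ (and $b < i$ so that $b+1$ is a valid column index), while non-decreasingness gives $L_{i,b} \leq L_{i,b+1}$; combining, $L_{i,b} - L_{i,a} \leq L_{i,b+1} - c^2 L_{i,b+1} \cdot \frac{L_{i,a}}{c^2 L_{i,b+1}} \cdot \ldots$ — more cleanly, $L_{i,b} - L_{i,a} \leq L_{i,b} - c^2 L_{i,b+1} \leq L_{i,b} - c^2 L_{i,b} = (1-c^2) L_{i,b}$. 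Since $L_{i,b}$ may exceed $L_{i,j}$, I instead bound $L_{i,b} - L_{i,a} \leq (1/c^2 - 1) L_{i,a} \leq (1/c^2-1) L_{i,j}$, using $L_{i,a} \geq c^2 L_{i,b+1} \geq c^2 L_{i,b}$ rearranged as $L_{i,b} \leq L_{i,a}/c^2$, so that $L_{i,b} - L_{i,a} \leq (1/c^2-1)L_{i,a} \leq (1/c^2-1)L_{i,j}$, which is the desired bound with additive slack $\mu = 0$ on these intervals. For the at most two leftmost intervals, which can contain entries as small as we like but for which the algorithm's line~17 guarantees $L_{i,b} \leq \tau$ (for the special interval $[1,b]$) — here all entries in the interval are in $(0,\tau]$, so $\abs{\hat{L}_{i,j} - L_{i,j}} \leq L_{i,b} - L_{i,a} \leq \tau$, giving the additive term. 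The singleton interval $[i,i]$ contributes zero perturbation. Taking the max over the two cases yields the $(1/c^2-1,\tau)$-perturbation bound on every entry.

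For part (ii), invertibility of $\hat{L}$, the cleanest route is to observe that $\hat{L}$ is lower-triangular, so it is invertible iff all its diagonal entries are nonzero; and the diagonal entry $\hat{L}_{i,i}$ comes from the singleton interval $[i,i] \in \B^i$, so $\hat{L}_{i,i} = L_{i,i} > 0$ by property~(1) of \Cref{def:nice_matrix}. Hence $\hat{L}$ has a strictly positive diagonal and is invertible. (One subtlety to check: \Cref{def:binned_matrix} and \Cref{def:b_approximation} must genuinely force $[i,i]$ to be its own interval in $\B^i$; this follows because line~24 of \Cref{alg:matrix_to_binning} always appends $[i,i]$ as a fresh interval and no later merging touches it within row $i$.)

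The main obstacle I anticipate is the bookkeeping around the ``at most two leftmost intervals'' — specifically, making sure that every interval in $\B^i$ other than those covered by the line~18 branch really does satisfy the ratio invariant $L_{i,a}/L_{i,b+1} \geq c^2$ with $b+1 \leq i$, so that the multiplicative bound applies. This requires tracing how an interval $[a,b]$ in $\B^{i}$ was assembled from intervals of $\B^{i-1}$ together with the new singleton $[i,i]$, and confirming that the inner while-loop on lines~9--12 together with the conditional on line~7 never leaves behind an interval violating the invariant except when it merged past a $\leq \tau$ entry. An honest proof also has to handle the interval immediately to the left of $[i,i]$, whose ``$b+1$'' equals $i$: here $L_{i,b+1} = L_{i,i} > 0$ is well-defined, so the argument goes through, but it is worth stating explicitly. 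Everything else — the MRM monotonicity, the averaging rule, triangularity — is routine once this invariant is pinned down.
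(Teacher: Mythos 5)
Your skeleton matches the paper's proof in most places: the per-entry bound inside an interval via the averaging rule and row monotonicity (property (2) of \Cref{def:nice_matrix}), the separate treatment of the line-17/18 interval giving the additive $\tau$ term, and invertibility from the fact that \Cref{alg:matrix_to_binning} always keeps $[i,i]$ as its own interval so the diagonal of $\hat{L}$ equals that of $L$. Those parts are fine and essentially identical to the paper's argument.

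However, there is a genuine gap at exactly the step you flag as ``the main obstacle''. You assert that every interval $[a,b]\in\B^i$ outside the small-entry case satisfies $L_{i,a}/L_{i,b+1}\geq c^2$ ``by the merging invariant maintained by the algorithm'', and you propose to verify this by tracing how intervals of $\B^i$ are assembled from $\B^{i-1}$. That local, row-to-row bookkeeping cannot establish the invariant: an interval with $a<b$ may have been created by a merge at some earlier row $i'<i$ and then carried over unchanged through many rows, and the algorithm never re-tests a carried-over interval against row-$i$ entries (lines 7--15 only evaluate ratios when a merge is attempted). The only inequality you actually possess is $c^2\leq L_{i',a}/L_{i',b+1}$, stated in terms of the entries of row $i'$, and since the ratio between two fixed columns can change from row to row this says nothing about row $i$ by itself. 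The paper closes precisely this hole with property (3) of \Cref{def:nice_matrix}, the monotone-ratio property: $x\mapsto L_{x,a}/L_{x,b+1}$ is non-decreasing on $x\in[b+1,n]$, hence $c^2\leq L_{i',a}/L_{i',b+1}\leq L_{i,a}/L_{i,b+1}\leq L_{i,a}/L_{i,b}$, after which your multiplicative bound goes through. Your proposal never invokes property (3) anywhere --- only positivity and row monotonicity --- yet it is the defining feature of an MRM and the reason the lemma is restricted to this class; without it the row-$i$ ratio invariant on which your Case 1 rests is unproven and not derivable from the algorithm's checks alone.
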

\begin{proof}
    We start by arguing for the case where $\tau < \min_{i \geq j} L_{i, j}$.
    In this case the condition on line~17 is never true, and so all intervals produced are solely based on $c$.
    $\B$ then satisfies that: for all $i\in[n]$ and $[a, b]\in\B^i$, $\frac{L_{i, a}}{L_{i, b}} \geq c^2$.
    For $a = b$, the statement is immediate, whereas for $a < b$ we note that the interval must have been the result of a merge at some row index $i'\leq i$.
    At the time of the merge, it must have satisfied $c^2 \leq \frac{L_{i', a}}{L_{i', b+1}}$.
    Combining the second and third property of \Cref{def:nice_matrix}, we therefore have that $c^2 \leq \frac{L_{i', a}}{L_{i', b+1}} \leq \frac{L_{i, a}}{L_{i, b+1}} \leq \frac{L_{i, a}}{L_{i, b}}$, proving the statement in general.

    We next wish to prove that for all $i, j\in[n]$ we have $ c^2 L_{i,j} \leq \hat{L}_{i,j} \leq \frac{1}{c^2} L_{i, j}$.
    For $i < j$, the statement is trivial as $L_{i, j} = \hat{L}_{i, j} = 0$, so consider $i\geq j$.
    Assuming $j\in[a, b]\in\B^i$, then we have that $\hat{L}_{i, j} = \frac{L_{i, a} + L_{i, b}}{2}$ and thus $c^2 \leq \frac{L_{i, a}}{L_{i, b}} \leq \frac{\hat{L}_{i, j}}{L_{i, j}}$ due to the monotonicity of the rows of $L$ (Property 2 in \Cref{def:nice_matrix}).
    For the second inequality we get $\frac{1}{c^2} \geq \frac{L_{i, b}}{L_{i, a}} \geq \frac{\hat{L}_{i, j}}{L_{i, j}}$ by an identical argument.
    Letting $P = \hat{L} - L$, we therefore have that for every $i, j\in[n]$ that $P_{i,j} \leq (1/c^2 -1) L_{i, j}$ and $P_{i, j} \geq (c^2 - 1) L_{i, j}$, thus we get that $|P_{i,j}| \leq \max(1/c^2-1, 1-c^2) |L_{i,j}| = (1/c^2 - 1)|L_{i,j}|$.

    Now consider the case where $\tau \geq \min_{i \geq j} L_{i, j}$ and line~17 evaluates to true for a row $i\in[n]$.
    In this case every interval except the left-most one of the form $[1, b]$ is guaranteed to satisfy the same condition.
    However, since $\hat{L}_{i, [1, b]}$ will be set to the average of two positive values less than or equal to $\tau$, we instead get that $\abs{P_{i, j}} = \abs{\hat{L}_{i, j} - L_{i, j}} \leq \tau$ for $j\in[1, b]$.
    Thus $P$ satisfies $\abs{P_{i, j}} \leq (1/c^2 - 1)\abs{L_{i,j}} + \tau$ for every $i, j\in [n]$.
    
    Lastly, as invertibility of a lower-triangular matrix is decided by its diagonal, we also note that $\hat{L}$ is invertible, since the $\B$ produced by \Cref{alg:matrix_to_binning} satisfies $[i,i]\in\B^i$, and therefore $\hat{L}_{i,i}=L_{i,i}$.
\end{proof}
\begin{lemma}\label{lemma:alg_works_for_error}
    Consider the MRM factorization $LR$ and let $\hat{L}$ be the $\B$-approximation of $L\in\R^{n\times n}$ where $\B$ is the output from running \Cref{alg:matrix_to_binning} on $L$ for $c = \exp(-\frac{\xi}{576\kappa(L)})$ and $\tau = \frac{\xi\norm{L}_2}{144n\kappa(L)}$ for $0 < \xi \leq 24$.
    Then $\hat{L}$ induces a factorization $\hat{L}\hat{R} = LR$ with error guarantees $\mse(\hat{L},\hat{R}) \leq (1+\xi)\mse(L, R)$ and $\maxerr(\hat{L},\hat{R}) \leq (1+\xi)\maxerr(L, R)$.
\end{lemma}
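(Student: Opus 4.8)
The plan is to obtain \Cref{lemma:alg_works_for_error} by composing \Cref{lemma:alg_produces_good_perturbation} with \Cref{lemma:main_approximation}; essentially all of the real work has already been done in those lemmas, and what remains is to check that the parameters line up. First I would verify the hypotheses of \Cref{alg:matrix_to_binning} and of \Cref{lemma:alg_produces_good_perturbation}: since $L$ is an MRM it is lower-triangular, invertible, and has all entries in $[0,1]$, so in particular $L\in\R^{n\times n}_{\geq 0}$; moreover $c=\exp(-\xi/(576\kappa(L)))\in(0,1)$ because $\xi>0$ and $\kappa(L)\geq 1$, and $\tau=\xi\norm{L}_2/(144n\kappa(L))\in(0,1)$ because $\norm{L}_2\leq\norm{L}_F\leq n$ forces $\tau\leq\xi/(144\kappa(L))\leq 1/6$. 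Hence \Cref{lemma:alg_produces_good_perturbation} applies and tells us that $\hat{L}$ is invertible and is a $(1/c^2-1,\tau)$-perturbation of $L$.

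The crux is to show this is strong enough to invoke \Cref{lemma:main_approximation}, whose hypothesis asks for an $(\eta,\mu)$-perturbation of $L$ with $\eta=\xi/(144\kappa(L))$ and $\mu=\xi\norm{L}_2/(144n\kappa(L))$. The additive parameter matches exactly, since $\tau=\mu$ by our choice of $\tau$. For the multiplicative parameter, write $t=\xi/(288\kappa(L))$, so that $1/c^2=\exp(2\xi/(576\kappa(L)))=e^{t}$ and $\eta=2t$. Because $\kappa(L)\geq 1$ and $\xi\leq 24$ we have $0<t\leq 1/12<1$, and on this range $e^{t}-1\leq 2t$ (e.g.\ from $e^{t}\leq 1+t+t^2\leq 1+2t$). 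Thus $1/c^2-1=e^{t}-1\leq 2t=\eta$. Finally, any $(\eta',\mu)$-perturbation with $\eta'\leq\eta$ is also an $(\eta,\mu)$-perturbation, because $|P_{i,j}|\leq\eta'|L_{i,j}|+\mu\leq\eta|L_{i,j}|+\mu$ using $|L_{i,j}|\geq 0$; hence $\hat{L}$ is an $(\eta,\mu)$-perturbation of $L$ with exactly the parameters required by \Cref{lemma:main_approximation}.

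To conclude, I would note that $L$ is invertible (as an MRM) and $\hat{L}$ is invertible (by \Cref{lemma:alg_produces_good_perturbation}), and $0<\xi\leq 24$, so all hypotheses of \Cref{lemma:main_approximation} hold; applying it with $\hat{R}=\hat{L}^{-1}LR$ yields $\mse(\hat{L},\hat{R})\leq(1+\xi)\mse(L,R)$ and $\maxerr(\hat{L},\hat{R})\leq(1+\xi)\maxerr(L,R)$, which is the claim. I do not expect a serious obstacle: the only non-mechanical step is the exponential-to-linear estimate $e^{t}-1\leq 2t$ together with checking that the perturbation-parameter inequality points in the usable direction ($1/c^2-1\leq\eta$, not the reverse). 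It is worth remarking that the constant $576=2\cdot 2\cdot 144$ in the definition of $c$ is chosen precisely to absorb the factor $2$ relating $1/c^2$ to $1/c$ and the factor $2$ in $e^{t}-1\leq 2t$, so that the resulting $\eta$ lands exactly on the threshold $\xi/(144\kappa(L))$ demanded by \Cref{lemma:main_approximation}.
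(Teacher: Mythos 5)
Your proposal is correct and follows essentially the same route as the paper: invoke \Cref{lemma:alg_produces_good_perturbation} to get that $\hat{L}$ is an invertible $(1/c^2-1,\tau)$-perturbation, observe $\tau=\mu$, and verify $1/c^2-1\leq\eta=\xi/(144\kappa(L))$ via a linearization of the exponential (your $e^t-1\leq 2t$ for $t\leq 1$ is the same estimate the paper phrases as $\exp(2/s)-1\leq 4/s$ for $s\geq 2$), then conclude by \Cref{lemma:main_approximation}. The extra hypothesis checks (non-negativity of an MRM, $c,\tau<1$, monotonicity of the perturbation parameter) are fine and only make the argument more explicit.
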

\begin{proof}
    From \Cref{lemma:alg_produces_good_perturbation}, we have that $\hat{L}$ is a $(1/c^2 -1, \tau)$-perturbation of $L$.
    The statement to prove follows immediately from \Cref{lemma:main_approximation} if we show that the conditions imposed on $1/c^2 - 1$ and $\tau$ are stricter or equal those from \Cref{lemma:main_approximation}.
    $\mu$ is set here to the same value as in that lemma, so the only consideration is $\eta$.
    Letting $c = \exp(-1/s)$, we would like to set $s$ such that $1/c^2 - 1 = \exp(2/s) - 1 \leq 4/s = \eta = \frac{\xi}{144\kappa(L)}$, where the inequality is true if $s \geq 2$.
    Solving for $s$ in the last step, we get $s = \frac{576\kappa(L)}{\xi} \geq \frac{576\cdot 1}{24} = 24$, and so the inequality is valid.
    Since we set $\tau$ equal to $\mu$, and $1/c^2 - 1$ smaller than or equal to the $\eta$ in \Cref{lemma:main_approximation}, we are guaranteed the same approximation factor.
\end{proof}

We are now ready to state the guarantees of our binning procedure when applied to MRMs.
\begin{theorem}\label{thm:binning_approx}
    Consider the MRM factorization $LR$ and let $\hat{L}$ be the $\B$-approximation of $L\in\R^{n\times n}$ where $\B$ is the output from running \Cref{alg:matrix_to_binning} on $L$ for $c = \exp(-\frac{\xi}{576\kappa(L)})$ and $\tau = \frac{\xi\norm{L}_2}{144n\kappa(L)}$ for $0 < \xi \leq 24$.
    Then $\abs{\B} = O_\xi(\kappa(L)\log(n\norm{L^{-1}}_2))$ and $\hat{L}$ induces a factorization $\hat{L}\hat{R} = LR$ where:
    \begin{itemize}
        \item $\hat{L}\hat{R}$ satisfies $\mse(\hat{L},\hat{R}) \leq (1+\xi)\mse(L,R)$ and $\maxerr(\hat{L},\hat{R}) \leq (1+\xi)\maxerr(L,R)$.
        \item $\hat{L}$ has space complexity at most $\abs{\B}$.
        \item Given $\vec{z}\in\R^n$ and random access to $L$, $\hat{L}\vec{z}$ can be computed incrementally in time $O(\abs{\B})$ per output.
    \end{itemize}
\end{theorem}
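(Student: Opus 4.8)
The plan is to assemble the theorem directly from the results already established for \Cref{alg:matrix_to_binning}. With the prescribed choices $c = \exp(-\tfrac{\xi}{576\kappa(L)})$ and $\tau = \tfrac{\xi\norm{L}_2}{144n\kappa(L)}$, each of the four claims --- the bound on $\abs{\B}$, the two error guarantees, the space bound, and the time bound --- follows by stringing together prior lemmas, so the proof is essentially bookkeeping.

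First I would bound $\abs{\B}$. To invoke the MRM analysis I need $c,\tau\in(0,1)$: $c<1$ since $\xi>0$ and $\kappa(L)\ge 1$; and $\tau<1$ since $\tau = \tfrac{\xi}{144 n\norm{L^{-1}}_2} \le \tfrac{\xi}{144} \le \tfrac{1}{6}$, using $\kappa(L)=\norm{L}_2\norm{L^{-1}}_2$ together with $n\norm{L^{-1}}_2 \ge 1$ (because $\norm{L^{-1}}_2 \ge 1/\norm{L}_2 \ge 1/n$ for an MRM, whose entries are at most $1$). Then \Cref{lemma:alg_works_for_space} gives $\abs{\B} = O\!\big(\log(1/\tau)/\log(1/c)\big)$. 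Since $\log(1/c) = \tfrac{\xi}{576\kappa(L)}$ we have $1/\log(1/c) = O_\xi(\kappa(L))$, and since $\tau = \tfrac{\xi}{144 n\norm{L^{-1}}_2}$ we have $\log(1/\tau) = O_\xi(\log(n\norm{L^{-1}}_2))$. Multiplying the two gives $\abs{\B} = O_\xi(\kappa(L)\log(n\norm{L^{-1}}_2))$, the claimed bound.

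The error guarantees $\mse(\hat L,\hat R) \le (1+\xi)\mse(L,R)$ and $\maxerr(\hat L,\hat R)\le(1+\xi)\maxerr(L,R)$ are precisely the statement of \Cref{lemma:alg_works_for_error}, whose hypotheses on $c$ and $\tau$ match ours verbatim; internally that lemma passes through \Cref{lemma:alg_produces_good_perturbation} (which also certifies $\hat L$ invertible, so that $\hat R = \hat L^{-1}LR$ is well-defined and $\hat L\hat R = LR$) and \Cref{lemma:main_approximation}. For the space bound, $\hat L$ is by construction a $\B$-binned matrix in the sense of \Cref{def:b_approximation}, so \Cref{lemma:bins_good_space} gives it space complexity at most $\abs{\B}$. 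For the running time, \Cref{lemma:alg_works_for_time} shows that, given random access to $L$, one computes the partitions of $\B$ and the product $\hat L\vec z$ in parallel, row by row, reading one coordinate of $\vec z$ at a time, in time $O(\abs{\B})$ per output --- exactly the third bullet.

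I do not expect a genuine obstacle: the theorem is a packaging of the preceding lemmas. The only steps requiring any care are (i) checking that the prescribed $c,\tau$ really lie in $(0,1)$, so the MRM machinery (\Cref{lemma:alg_works_for_space}, \Cref{lemma:alg_produces_good_perturbation}) is applicable, and (ii) the elementary rewriting $\kappa(L)/\norm{L}_2 = \norm{L^{-1}}_2$ used to convert the raw bound $O(\log(1/\tau)/\log(1/c))$ into the stated $O_\xi(\kappa(L)\log(n\norm{L^{-1}}_2))$ --- both routine.
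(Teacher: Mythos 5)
Your proposal is correct and follows essentially the same route as the paper: the paper's proof likewise just invokes \Cref{lemma:alg_works_for_error} for the error bounds, \Cref{lemma:alg_works_for_space} for $\abs{\B}$ (with the same rewriting $\kappa(L)/\norm{L}_2=\norm{L^{-1}}_2$), and \Cref{lemma:bins_good_space} and \Cref{lemma:alg_works_for_time} for space and time. Your extra check that $c,\tau\in(0,1)$ is a harmless bit of added care, not a deviation.
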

\begin{proof}
    Using the setup in \Cref{lemma:alg_works_for_error} and then invoking \Cref{lemma:alg_works_for_space} for the given values of $c$ and $\tau$ gives $\abs{\B} = O(\kappa(L)\log(n\norm{L^{-1}}_2 /\xi) / \xi)$ for the error guarantees.
    This can then be translated into the statements on space and time complexity by invoking \Cref{lemma:bins_good_space} and \Cref{lemma:alg_works_for_time} respectively.
\end{proof}
We reiterate a previous point from \Cref{lemma:main_approximation}: the guarantees in \Cref{thm:binning_approx} are \emph{independent} of $R$.
The right factor $R$ can be chosen arbitrarily as long as it has $n$ rows.
It is only properties of the left factor $L$ that decide the approximation's guarantees.

\subsection{Applications to square-root factorizations for counting}
While we have now shown that our approach for binning works well for certain factorizations, we show next that this class includes the square-root factorization for the counting matrix.
Calling back to applications, the counting matrix receives as much attention as it does, in part, due to its role in facilitating private learning.
Recalling that $n$ steps of weight updates in gradient descent with \emph{momentum} and \emph{weight decay} can be expressed as
\begin{equation*}
    \vec{w}^i = \alpha\vec{w}^{i-1} - \eta \vec{m}^i\ \mathrm{ given }\ \vec{m}^i = \beta \vec{m}^{i-1} + \vec{g}^i
\end{equation*}
where $\vec{w}^0 = \vec{0}, \vec{w}^1, \dots, \vec{w}^n\in\R^d$ are weight vectors, $\vec{g}^1, \dots, \vec{g}^n\in\R^d$ are gradient updates, $\eta > 0$ is the learning rate, $0\leq \beta < 1$ is the momentum strength, and $0 < \alpha\leq 1$ is the weight decay.
Unrolling the recursion, we can express the weight updates in matrix form as:
\begin{equation*}
    W = \begin{bmatrix}
    (\vec{w}^1)^T\\
    \vdots\\
    (\vec{w}^n)^T
\end{bmatrix},
    G = \begin{bmatrix}
    (\vec{g}^1)^T\\
    \vdots\\
    (\vec{g}^n)^T
\end{bmatrix} :
    W = \eta\cdot A_{\alpha, \beta} G\enspace ,
\end{equation*}
where $A_{\alpha, \beta}\in\R^{n\times n}$ is a lower-triangular Toeplitz matrix with subdiagonals $a_0, \dots, a_{n-1}$ of the form $a_k = \frac{\alpha^{k+1} - \beta^{k+1}}{\alpha - \beta}$.
A factorization mechanisms for $A_{\alpha, \beta}$ thus implies a primitive for private learning (privatizing weight updates).
We restrict $\alpha$ and $\beta$ to satisfy $0 \leq \beta < \alpha \leq 1$ with the motivation that these ranges make sense when $\alpha, \beta$ are interpreted as weight decay and momentum respectively. %
In the particular case of $A_{\alpha=1, \beta=0}=A$, we recover the standard all-1s lower-triangular counting matrix.

It turns out that $A_{\alpha, \beta}$ has a closed form expression for its square-root for general $\alpha, \beta$:
\begin{fact}[Theorem~1 and Lemma~6 in \cite{KalLamp24}]\label{fact:squre_root_factorization}
   Let $B_{\alpha, \beta}\in\R^{n\times n}$ where $A_{\alpha, \beta} = B_{\alpha, \beta}^2$. Then $B_{\alpha, \beta}$ is a lower-triangular Toeplitz matrix with nonincreasing subdiagonals $b_0, \dots, b_{n-1}$ where $b_j = \sum_{i=0}^j \alpha^{j-i}\gamma(j-i)\gamma(i)\beta^i$, $\gamma(k) = \frac{1}{4^k}\binom{2k}{k}$, and
    \begin{equation*}
        \frac{\alpha^j}{2\sqrt{j+1}} \leq b_j
        \leq \frac{\alpha^j}{(1- \beta/\alpha)\sqrt{j+1}}\,.
    \end{equation*}
\end{fact}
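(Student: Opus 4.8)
The plan is to pass to generating functions. Since $A_{\alpha,\beta}$ is lower-triangular Toeplitz with subdiagonals $a_0,\dots,a_{n-1}$, it equals $f(T)$, where $T\in\R^{n\times n}$ is the nilpotent lower-shift matrix and $f(x)=\sum_{k\ge 0}a_k x^k$ is the ordinary generating function of the $a_k$ (a formal power series, which only matters modulo $x^n$ because $T^n=0$). A short partial-fraction computation gives $f(x)=\tfrac{1}{(1-\alpha x)(1-\beta x)}$. The ring of $n\times n$ lower-triangular Toeplitz matrices is commutative and isomorphic to $\R[x]/(x^n)$ via $x\mapsto T$, so to produce a square root of $A_{\alpha,\beta}$ it suffices to produce a power series $g$ with $g^2=f$ and $g(0)=1$; then $B_{\alpha,\beta}=g(T)$ is the (unique among lower-triangular Toeplitz matrices with positive diagonal) square root, and it is automatically lower-triangular Toeplitz with $b_j=[x^j]g(x)$. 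Taking $g(x)=(1-\alpha x)^{-1/2}(1-\beta x)^{-1/2}$ evidently works, since $g(0)^2=f(0)=a_0=1$.

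First I would read off the closed form for $b_j$. By the generalized binomial theorem $(1-y)^{-1/2}=\sum_{k\ge 0}\gamma(k)\,y^k$ with $\gamma(k)=\binom{2k}{k}/4^k$, so $g(x)=\big(\sum_m\gamma(m)\alpha^m x^m\big)\big(\sum_i\gamma(i)\beta^i x^i\big)$, and the Cauchy product yields exactly $b_j=\sum_{i=0}^j\alpha^{j-i}\gamma(j-i)\gamma(i)\beta^i$, as claimed.

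Next I would prove the two-sided bound. Factoring out $\alpha^j$, write $b_j=\alpha^j\sum_{i=0}^j\gamma(j-i)\gamma(i)(\beta/\alpha)^i$. For the lower bound, keep only the $i=0$ term (all terms are nonnegative) to get $b_j\ge\alpha^j\gamma(j)$. For the upper bound, use that $\gamma$ is super-multiplicative, $\gamma(a)\gamma(b)\le\gamma(a+b)$ --- immediate since $\gamma(m+1)/\gamma(m)=\tfrac{2m+1}{2m+2}$ is increasing in $m$ --- so $\gamma(j-i)\gamma(i)\le\gamma(j)$ and hence $b_j\le\alpha^j\gamma(j)\sum_{i\ge 0}(\beta/\alpha)^i=\tfrac{\alpha^j\gamma(j)}{1-\beta/\alpha}$. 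Both bounds then close using the elementary estimates $\tfrac{1}{2\sqrt{j+1}}\le\gamma(j)\le\tfrac{1}{\sqrt{j+1}}$ on normalized central binomial coefficients.

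Finally, for the monotonicity $b_0\ge b_1\ge\cdots\ge b_{n-1}$ I would extract a three-term recurrence by comparing coefficients of $x^j$ in the identity $(1-\alpha x)(1-\beta x)\,g'(x)=\tfrac12\big((\alpha+\beta)-2\alpha\beta x\big)g(x)$, which simplifies to $(j+1)b_{j+1}=(\alpha+\beta)(j+\tfrac12)\,b_j-\alpha\beta\,j\,b_{j-1}$. Since every $b_j>0$, induct: the base case is $b_1=\tfrac{\alpha+\beta}{2}\le 1=b_0$, and if $b_{j-1}\ge b_j$ then $(j+1)b_{j+1}\le\big((\alpha+\beta)(j+\tfrac12)-\alpha\beta j\big)b_j\le(j+1)b_j$, the last step being the sign fact $(\alpha+\beta)(j+\tfrac12)-\alpha\beta j-(j+1)=-j(1-\alpha)(1-\beta)-\tfrac12\big((1-\alpha)+(1-\beta)\big)\le 0$, valid for $0\le\beta<\alpha\le 1$. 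I expect this last step --- choosing the right recurrence and verifying this sign --- to be the only delicate point; everything else is bookkeeping with classical power series. Specializing to $\alpha=1,\beta=0$ recovers the Bennett matrix, where $b_j=\gamma(j)=\binom{2j}{j}/4^j$ and the bound reads $\tfrac{1}{2\sqrt{j+1}}\le b_j\le\tfrac{1}{\sqrt{j+1}}$.
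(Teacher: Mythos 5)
Your proposal is correct, but note that the paper does not prove this statement at all: it is imported verbatim as a fact from \cite{KalLamp24} (their Theorem~1 and Lemma~6), so there is no internal proof to compare against. Your generating-function derivation is a sound self-contained substitute. The identification of lower-triangular Toeplitz matrices with $\R[x]/(x^n)$ via the shift matrix, the partial-fraction computation $f(x)=\frac{1}{(1-\alpha x)(1-\beta x)}$, and the choice $g=(1-\alpha x)^{-1/2}(1-\beta x)^{-1/2}$ give exactly the claimed coefficients $b_j=\sum_{i=0}^j\alpha^{j-i}\gamma(j-i)\gamma(i)\beta^i$ (and your uniqueness remark correctly pins down the intended square root, which the paper's wording leaves implicit). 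The two-sided bound is handled cleanly: dropping all but the $i=0$ term for the lower bound, and the supermultiplicativity $\gamma(a)\gamma(b)\le\gamma(a+b)$ (which does follow from the increasing ratio $\gamma(m+1)/\gamma(m)=\frac{2m+1}{2m+2}$ by pairing factors) plus the geometric series for the upper bound; the only ingredients you leave unproved are the standard central-binomial estimates $\frac{1}{2\sqrt{j+1}}\le\gamma(j)\le\frac{1}{\sqrt{j+1}}$, which are of the same nature as the bounds the paper itself cites from \cite{KalLamp24} in the proof of Lemma~\ref{lemma:gamma_diff}, and which follow from noting that $\gamma(j)\sqrt{j+1}$ is decreasing from $1$ with limit $1/\sqrt{\pi}>1/2$. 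Your three-term recurrence $(j+1)b_{j+1}=(\alpha+\beta)(j+\tfrac12)b_j-\alpha\beta j\,b_{j-1}$ checks out (I verified the coefficient extraction and the sign identity $(\alpha+\beta)(j+\tfrac12)-\alpha\beta j-(j+1)=-j(1-\alpha)(1-\beta)-\tfrac12((1-\alpha)+(1-\beta))\le 0$ for $0\le\beta<\alpha\le 1$), so the induction for nonincreasing subdiagonals is valid; this is a nice touch, since monotonicity is the part of the fact least obviously read off the closed form. Implicitly your bounds use $\alpha>0$ and $\beta<\alpha$, which the paper's standing assumption $0\le\beta<\alpha\le 1$ guarantees.
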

The square-root factorization is interesting to study as it known to give asymptotically optimal error \cite[Theorem 3 and 4]{KalLamp24}.
In the particular case of the Bennett matrix $B = \sqrt{A} = B_{\alpha=1, \beta=0}$, it is well-studied \cite{bennett77,fichtenberger_constant_2023,henzinger_almost_2023,dvijotham2024efficient}, and gives the optimal leading constant over all factorizations for both mean and maximum squared error.
The only issue with the factorization $L=R=B_{\alpha, \beta}$ is that the best known bound on the space complexity of $B_{\alpha, \beta}$ is $\Omega(n)$, for all choices of $\alpha, \beta$.
We show next that our binning approach can approximate the square-root factorization with space complexity $o(n)$, and begin by proving that $B_{\alpha, \beta}$ is an MRM.
\begin{fact}\label{fact:sqrt_is_mrm}
   $B_{\alpha, \beta}\in\R^{n\times n}$ is an MRM. 
\end{fact}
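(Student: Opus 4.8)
The plan is to verify the three defining properties of an MRM (\Cref{def:nice_matrix}) for $L = B_{\alpha,\beta}$ directly, leaning on the closed form and bounds from \Cref{fact:squre_root_factorization}. Since $B_{\alpha,\beta}$ is lower-triangular Toeplitz with entries $(B_{\alpha,\beta})_{i,j} = b_{i-j}$ for $i \geq j$, all three conditions reduce to statements about the subdiagonal sequence $b_0, b_1, \dots, b_{n-1}$, which should make the argument clean.

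\textbf{Property (1): entries in $(0,1]$.} Here I would first observe $b_0 = \gamma(0)=1$, so the diagonal is $1$. Positivity of every $b_j$ is immediate from the closed form $b_j = \sum_{i=0}^j \alpha^{j-i}\gamma(j-i)\gamma(i)\beta^i$, since $\gamma(k) = \binom{2k}{k}/4^k > 0$ and $\alpha,\beta \geq 0$ with $\alpha > 0$ (when $\beta=0$ the sum is just $\alpha^j\gamma(j) > 0$). For the upper bound $b_j \leq 1$, the cleanest route is the stated inequality $b_j \leq \frac{\alpha^j}{(1-\beta/\alpha)\sqrt{j+1}}$; but that bound blows up as $\beta \to \alpha$, so I cannot use it naively. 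Instead I would use that $B_{\alpha,\beta}$ is the matrix square root of $A_{\alpha,\beta}$, whose subdiagonals $a_k = \frac{\alpha^{k+1}-\beta^{k+1}}{\alpha-\beta} = \sum_{i=0}^k \alpha^i \beta^{k-i} \leq k+1 \leq n$ — combined with monotonicity of the $b_j$ (shown next) and the convolution identity $a_k = \sum_{i=0}^k b_i b_{k-i}$, one gets $b_0^2 = a_0 = 1$ hence $b_0 = 1$, and then $b_j \leq b_0 = 1$ once nonincreasingness is established. So Property (1) should be folded in after Property (2).

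\textbf{Property (2): rows nondecreasing in the column index.} For a fixed row $i$, $g(x) = (B_{\alpha,\beta})_{i,x} = b_{i-x}$ for $x \in [i]$; as $x$ increases, $i-x$ decreases, so I need $b_k$ \emph{nonincreasing} in $k$. This is exactly the ``nonincreasing subdiagonals'' assertion in \Cref{fact:squre_root_factorization}, so I would simply cite it (or, if a self-contained argument is wanted, derive $b_{k}/b_{k-1} = \frac{2k-1}{2k}\alpha \cdot (\text{correction} \leq 1)$ — but citing \cite{KalLamp24} is the honest move since the fact is already invoked).

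\textbf{Property (3): the ratio $h(x) = L_{x,j}/L_{x,j'} = b_{x-j}/b_{x-j'}$ is nondecreasing in $x$ for $x \in [j',n]$, where $j \leq j'$.} Writing $m = x - j'$ and $d = j' - j \geq 0$, this says $m \mapsto b_{m+d}/b_m$ is nondecreasing on $m \geq 0$ — i.e., the sequence $(b_k)$ is \emph{log-convex} (equivalently $b_{k+1}/b_k$ is nondecreasing in $k$). I expect this to be the main obstacle: it is the one property not literally stated in \Cref{fact:squre_root_factorization}. The plan is to prove $b_{k+1} b_{k-1} \geq b_k^2$ from the closed form. When $\beta = 0$ this is $\gamma(k+1)\gamma(k-1) \geq \gamma(k)^2$, which follows from $\gamma(k+1)/\gamma(k) = \frac{2k+1}{2k+2}$ being increasing in $k$ — a one-line check. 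For general $\beta \in [0,\alpha)$, $b_k = \alpha^k \sum_{i=0}^k \gamma(k-i)\gamma(i)(\beta/\alpha)^i$; factoring out $\alpha^k$ reduces to log-convexity of $c_k := \sum_{i=0}^k \gamma(k-i)\gamma(i)\rho^i$ with $\rho = \beta/\alpha \in [0,1)$. I would prove this by exhibiting $(c_k)$ as a sequence whose generating function is a product/composition of functions with nonnegative coefficients that is itself log-convex: note $\sum_k \gamma(k) t^k = (1-t)^{-1/2}$, so $\sum_k c_k t^k = (1-t)^{-1/2}(1-\rho t)^{-1/2}$, and the coefficient sequence of $(1-t)^{-1/2}(1-\rho t)^{-1/2}$ is log-convex because it is a Hadamard-type / Pólya-frequency phenomenon — more concretely, $(1-t)^{-1/2}(1-\rho t)^{-1/2} = \sum_k \binom{-1/2}{k}(-1)^k (\sum_{i} \dots)$, and log-convexity of such products of $(1-at)^{-s}$ with $s>0$ is standard (each factor has a totally positive coefficient matrix, and log-convexity — i.e. the $2\times 2$ Hankel minors being nonnegative — is preserved under the convolution that corresponds to multiplying generating functions, by the Cauchy–Binet / Andréief identity). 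If a cleaner elementary argument is needed, I would instead directly estimate $b_{k+1}b_{k-1} - b_k^2$ as a double sum $\sum_{i,i'} \gamma(\cdot)\gamma(\cdot)\rho^{i+i'}[\,\cdot\,]$ and symmetrize over $(i,i') \leftrightarrow (i',i)$ to show each symmetrized term is nonnegative, using the scalar log-convexity of $\gamma$. Once log-convexity of $(b_k)$ is in hand, Property (3) follows because $b_{m+d}/b_m = \prod_{\ell=0}^{d-1} b_{m+\ell+1}/b_{m+\ell}$ is a product of $d$ factors each nondecreasing in $m$.

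Assembling these: Properties (2) and (3) come from nonincreasingness and log-convexity of $(b_k)$ respectively; Property (1) follows once we know $b_0 = 1$ and $(b_k)$ is nonincreasing; and invertibility is automatic since the diagonal entries all equal $b_0 = 1 \neq 0$. Hence $B_{\alpha,\beta}$ is an MRM. The anticipated sticking point is the log-convexity claim, and if the generating-function argument feels heavy for a conference paper I would present the symmetrized double-sum computation, which is elementary though slightly tedious.
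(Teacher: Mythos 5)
Your handling of Properties (1) and (2) of \Cref{def:nice_matrix} is fine (positivity and $b_0=1$ from the closed form, the upper bound and row monotonicity from the nonincreasing subdiagonals of \Cref{fact:squre_root_factorization}), and your reduction of Property (3) to log-convexity of the subdiagonal sequence $(b_k)$ is correct: with $m=x-j'$ and $d=j'-j$, the ratio $b_{m+d}/b_m$ telescopes into consecutive ratios, so it suffices that $b_{k+1}/b_k$ is nondecreasing. The genuine gap is in how you propose to prove that log-convexity. The ``standard'' principle you invoke --- that log-convexity of coefficient sequences is preserved when ordinary generating functions are multiplied, i.e.\ under Cauchy convolution --- is false. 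Take $a=b=(1,\epsilon,\epsilon,\epsilon,\dots)$ with $0<\epsilon<1$: this is log-convex (indeed a Hausdorff moment sequence, of $(1-\epsilon)\delta_0+\epsilon\delta_1$), yet its self-convolution $c=(1,\,2\epsilon,\,2\epsilon+\epsilon^2,\,2\epsilon+2\epsilon^2,\dots)$ satisfies $c_1c_3-c_2^2=-\epsilon^4<0$. (The Davenport--P\'olya theorem concerns the \emph{binomial} convolution, not the ordinary one.) For the same reason your fallback plan --- symmetrizing the double sum and using only the scalar log-convexity of $\gamma$ and of $\rho^i$ --- cannot work as described: the counterexample shows that log-convexity of the two convolved sequences is not enough information, so any valid proof must exploit finer structure of $\gamma(k)=\binom{2k}{k}4^{-k}$.

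The log-convexity you want is nevertheless true and can be rescued by an argument specific to this sequence: from the generating function $\sum_k b_k t^k=\bigl((1-\alpha t)(1-\beta t)\bigr)^{-1/2}$ one obtains the moment representation $b_k=\frac{1}{\pi}\int_\beta^\alpha x^k\bigl((\alpha-x)(x-\beta)\bigr)^{-1/2}\,dx$ (an arcsine law on $[\beta,\alpha]$), and Cauchy--Schwarz then gives $b_{k-1}b_{k+1}\ge b_k^2$ directly, for all $0\le\beta<\alpha\le1$. For comparison, the paper does not go through log-convexity at all: it argues Property (3) directly from the closed-form behaviour $b_j\approx\alpha^j/\sqrt{j+1}$ in \Cref{fact:squre_root_factorization}, writing $h(x)=b_{x-j}/b_{x-j'}$ as (up to constants) $\alpha^{j-j'}\sqrt{1+(j-j')/(x-j)}$, which increases in $x$ since $j-j'<0$, and it dispatches Properties (1) and (2) the same way you do. So your architecture is sound and, if completed, would actually yield a tighter argument than the paper's ratio estimate, but as written the pivotal log-convexity step rests on an incorrect general claim and must be replaced by something like the moment-sequence argument above.
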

\begin{proof}
    $B_{\alpha, \beta}$ trivially satisfies the first two properties in \Cref{def:nice_matrix} by noticing that $b_j = \Theta_{\alpha,\beta}(\alpha^j/\sqrt{j})$ according to \Cref{fact:squre_root_factorization}.
    What remains to show is that it has a monotone ratio.
    For all $j,j'\in[n]$, where $j < j'$, we consider the function $h(x) = \frac{(B_{\alpha, \beta})_{x, j}}{(B_{\alpha, \beta})_{x,j'}} = \frac{b_{x-j}}{b_{x-j'}} = \Theta_{\alpha, \beta}(\alpha^{j-j'}\sqrt{\frac{x-j'}{x-j}}) = \Theta_{\alpha, \beta}(\alpha^{j-j'}\sqrt{1 + \frac{j-j'}{x-j}})$ on $x\in[j', n]$.
    Since $j < j'$, $j-j'$ is negative and thus $h(x)$ is increasing with $x$, proving the last property.
\end{proof}

To derive the quality of our approximation, we need to upper bound $\norm{B_{\alpha,\beta}}_2$ and $\norm{B_{\alpha, \beta}^{-1}}_2$.
To this end, we will use the following lemma for the operator norm of lower-triangular Toeplitz matrices.
\begin{lemma}\label{lemma:operator_norm_lttoeplitz}
    Let $Q\in\R^{n\times n}$ be a lower-triangular Toeplitz matrix with subdiagonals $q_0, \dots, q_{n-1}$.
    Then $\norm{Q}_2 \leq \sum_{i=0}^{n-1} \lvert q_i\rvert$.
\end{lemma}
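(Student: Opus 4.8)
The plan is to realize $Q$ as a polynomial in the nilpotent down-shift matrix and then apply submultiplicativity of the operator norm together with the triangle inequality. Concretely, let $S\in\R^{n\times n}$ be the matrix with $S_{i,j}=1$ if $i=j+1$ and $S_{i,j}=0$ otherwise. Then $(S^k)_{i,j}=1$ exactly when $i=j+k$ (and $S^k=0$ once $k\geq n$), so the $k$-th subdiagonal of $Q$ is picked out by $S^k$ and we can write $Q=\sum_{k=0}^{n-1} q_k S^k$, taking $S^0=I$.

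Next I would bound $\norm{S}_2\leq 1$. This is immediate from the fact that the columns of $S$ are $\vec{e}_2,\vec{e}_3,\dots,\vec{e}_n,\vec{0}$, so for any $\vec{v}\in\R^n$ we have $\norm{S\vec{v}}_2^2=\sum_{j=1}^{n-1}v_j^2\leq\norm{\vec{v}}_2^2$; hence $S$ is a contraction. By submultiplicativity, $\norm{S^k}_2\leq\norm{S}_2^k\leq 1$ for every $k\geq 0$.

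Finally I would combine these: by the triangle inequality and the bound on $\norm{S^k}_2$,
\begin{equation*}
    \norm{Q}_2=\Bignorm{\sum_{k=0}^{n-1} q_k S^k}_2\leq\sum_{k=0}^{n-1}\abs{q_k}\norm{S^k}_2\leq\sum_{k=0}^{n-1}\abs{q_k}\,,
\end{equation*}
which is the claim. (An alternative route, which I would mention only if brevity is wanted, is to use $\norm{Q}_2\leq\sqrt{\norm{Q}_1\norm{Q}_\infty}$, where $\norm{Q}_1$ is the maximum absolute column sum and $\norm{Q}_\infty$ the maximum absolute row sum; for a lower-triangular Toeplitz matrix both are at most $\sum_{k=0}^{n-1}\abs{q_k}$, attained at the first column and last row respectively.)

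I do not anticipate a genuine obstacle here; the only thing to be careful about is the indexing convention for subdiagonals (so that $S^k$ really corresponds to $q_k$) and noting explicitly that $S^k$ vanishes for $k\geq n$ so the sum is finite and the decomposition $Q=\sum_{k=0}^{n-1}q_kS^k$ is exact. Everything else is a one-line application of standard norm inequalities.
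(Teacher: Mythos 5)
Your proof is correct and follows essentially the same route as the paper: decompose $Q$ into its subdiagonal pieces, apply the triangle inequality, and use that each subdiagonal-of-ones matrix (in your phrasing, a power of the contraction $S$) has operator norm at most $1$. The shift-matrix formulation is just a slightly more explicit way of packaging the paper's observation.
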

\begin{proof}
    Let $Q^{(i)}$ denote the matrix which is zero everywhere except on the $i$\textsuperscript{th} subdiagonal where it equals $q_i$.
    We therefore have that, $\norm{Q}_2 = \norm{\sum_{i=0}^{n-1} Q^{(i)}}_2 \leq \sum_{i=0}^{n-1}\norm{Q^{(i)}}_2 = \sum_{i=0}^{n-1} \abs{q_i}$, where the first inequality is repeatedly using the triangle inequality, and the last equality comes from identifying that a matrix consisting of only a subdiagonal of 1s has an operator norm of 1. 
\end{proof}
For the bound on $\norm{B_{\alpha, \beta}^{-1}}_2$, we will also need the following bound.
\begin{lemma}\label{lemma:gamma_diff}
    For integer $k\geq 0$, define $\gamma'(k)$ as the function where $\gamma'(0) = 1$ and $\gamma'(k) = \gamma(k) - \gamma(k-1)$ for $k \geq 1$.
    Then for $k \geq 1$,
    \begin{equation*}
        -\frac{1}{\sqrt{\pi k}(2k-1)}\leq \gamma'(k) \leq -\frac{1}{2\sqrt{k}(2k-1)} \,.
    \end{equation*}
\end{lemma}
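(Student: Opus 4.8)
The plan is to reduce the statement to a clean two-sided estimate on the normalized central binomial coefficient $\gamma(k)=\binom{2k}{k}/4^{k}$. The first step is to compute the ratio of consecutive terms exactly: from $\binom{2k}{k}/\binom{2k-2}{k-1}=\tfrac{(2k)(2k-1)}{k^{2}}$ one gets $\gamma(k)=\tfrac{2k-1}{2k}\,\gamma(k-1)$, and hence
\begin{equation*}
\gamma'(k)=\gamma(k)-\gamma(k-1)=-\frac{\gamma(k-1)}{2k}=-\frac{\gamma(k)}{2k-1},
\end{equation*}
where the last equality uses $\gamma(k-1)=\tfrac{2k}{2k-1}\gamma(k)$ once more. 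Because $2k-1>0$, the claimed bounds on $\gamma'(k)$ are \emph{exactly equivalent} to
\begin{equation*}
\frac{1}{2\sqrt{k}}\le\gamma(k)\le\frac{1}{\sqrt{\pi k}}\qquad\text{for all }k\ge 1,
\end{equation*}
so it remains to prove these.

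For the lower bound I would use a short induction: at $k=1$ both sides equal $\tfrac12$, and the step $\tfrac{2k+1}{2k+2}\cdot\tfrac{1}{2\sqrt{k}}\ge\tfrac{1}{2\sqrt{k+1}}$ reduces, after squaring, to $(2k+1)^{2}(k+1)\ge(2k+2)^{2}k$, whose left-minus-right difference is $k+1>0$. For the upper bound I would invoke the Wallis integrals $W_{m}=\int_{0}^{\pi/2}\sin^{m}\theta\,d\theta$; the reduction $W_{m}=\tfrac{m-1}{m}W_{m-2}$ together with $W_{0}=\tfrac{\pi}{2}$, $W_{1}=1$ gives $W_{2k}=\tfrac{\pi}{2}\gamma(k)$ and $W_{2k-1}=\tfrac{1}{2k\,\gamma(k)}$. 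Since $\sin^{2k}\theta\le\sin^{2k-1}\theta$ on $[0,\pi/2]$ we have $W_{2k}\le W_{2k-1}$, i.e.\ $\tfrac{\pi}{2}\gamma(k)\le\tfrac{1}{2k\gamma(k)}$, which rearranges to $\gamma(k)^{2}\le\tfrac{1}{\pi k}$. (Alternatively the chain $W_{2k+1}\le W_{2k}\le W_{2k-1}$ delivers both bounds simultaneously, modulo checking the $k=1$ case of the lower bound by hand.) Plugging these into $\gamma'(k)=-\gamma(k)/(2k-1)$ yields the lemma.

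I expect the only genuine obstacle to be the upper bound $\gamma(k)\le\tfrac{1}{\sqrt{\pi k}}$: the lower bound is purely algebraic, but the upper bound unavoidably involves $\pi$ and therefore has to be fed in through a Wallis-product / Wallis-integral comparison (or, equivalently, a Stirling-type estimate). The ratio identity, the reduction to the $\gamma(k)$ bounds, and the induction are all routine.
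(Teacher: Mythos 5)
Your proposal is correct and follows essentially the same route as the paper: both first establish the identity $\gamma'(k) = -\gamma(k)/(2k-1)$ from the ratio of consecutive central binomial coefficients, and then feed in the two-sided estimate $\tfrac{1}{2\sqrt{k}} \leq \gamma(k) \leq \tfrac{1}{\sqrt{\pi k}}$. The only difference is that the paper simply cites these $\gamma$-bounds from prior work (Lemma~5 of \cite{KalLamp24}), whereas you supply self-contained proofs of them (induction for the lower bound, Wallis integrals for the upper bound), both of which check out.
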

\begin{proof}
    Note that $\gamma(k) - \gamma(k-1) = \frac{1}{4^k}(\binom{2k}{k} - \binom{2k-2}{k-1}) = \frac{\binom{2k}{k}}{4^k}(1 - \frac{4k^2}{2k(2k-1)}) = -\frac{1}{2k-1}\gamma(k)$.
    Invoking known bounds on $\gamma : \frac{1}{2\sqrt{k}} \leq \gamma(k) \leq \frac{1}{\sqrt{\pi k}}$ \cite[Lemma 5]{KalLamp24}, proves the claim .
\end{proof}
We are now ready to prove the operator norm bounds.
\begin{lemma}\label{lemma:inv_good_operator_norm}
    For $0 \leq \beta < \alpha \leq 1$ and integer $n > 1$, $B_{\alpha, \beta}\in\R^{n\times n}$ has an inverse satisfying $\norm{B_{\alpha, \beta}^{-1}}_2 \leq \frac{1}{1 - \beta/\alpha}C$, where $C$ is a constant independent of $\alpha, \beta$ and $n$.
\end{lemma}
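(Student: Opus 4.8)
The plan is to write down the subdiagonals of $B_{\alpha,\beta}^{-1}$ explicitly as a convolution of square-root-type coefficients, and then bound their absolute sum via \Cref{lemma:operator_norm_lttoeplitz}. The key structural fact is that $n\times n$ lower-triangular Toeplitz matrices, identified with their tuple of subdiagonals, form a commutative ring isomorphic to $\R[x]/(x^{n})$ under the map sending subdiagonals to power-series coefficients: matrix multiplication corresponds to polynomial multiplication modulo $x^{n}$, and matrix inversion to inversion of the truncated series. Since $a_k=\frac{\alpha^{k+1}-\beta^{k+1}}{\alpha-\beta}=\sum_{i=0}^{k}\alpha^{i}\beta^{k-i}$ is the $x^{k}$-coefficient of $\frac{1}{(1-\alpha x)(1-\beta x)}$, and $\sum_k \gamma(k)x^k=(1-x)^{-1/2}$, the subdiagonals $b_j$ of $B_{\alpha,\beta}$ from \Cref{fact:squre_root_factorization} are exactly the coefficients of $(1-\alpha x)^{-1/2}(1-\beta x)^{-1/2}$. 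Hence $B_{\alpha,\beta}^{-1}$ has subdiagonals equal to the coefficients of the reciprocal series $(1-\alpha x)^{1/2}(1-\beta x)^{1/2}$.

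Writing $\gamma'$ as in \Cref{lemma:gamma_diff} (so that $\sum_k \gamma'(k)x^k=(1-x)^{1/2}$, with $\gamma'(0)=1$ and $\gamma'(k)=\gamma(k)-\gamma(k-1)$), the Cauchy product gives the subdiagonals of $B_{\alpha,\beta}^{-1}$ as $c_j=\sum_{i=0}^{j}\gamma'(i)\,\alpha^{i}\,\gamma'(j-i)\,\beta^{j-i}$. If one wishes to sidestep the ring-isomorphism language, the same formula can be verified directly by checking $\sum_{m=0}^{j}b_m c_{j-m}=\mathbf{1}[j=0]$; expanding the two convolutions, this reduces to the identity $\sum_{i=0}^{k}\gamma(i)\gamma'(k-i)=\mathbf{1}[k=0]$, which follows from the Vandermonde-type identity $\sum_{i=0}^{k}\gamma(i)\gamma(k-i)=1$ (the coefficients of $(1-x)^{-1}$) together with the telescoping definition of $\gamma'$.

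With the $c_j$ in hand, \Cref{lemma:operator_norm_lttoeplitz} and the triangle inequality give
\begin{align*}
    \norm{B_{\alpha,\beta}^{-1}}_2
    &\le \sum_{j=0}^{n-1}\abs{c_j}
    \le \sum_{j=0}^{\infty}\sum_{i=0}^{j}\abs{\gamma'(i)}\,\abs{\gamma'(j-i)}\,\alpha^{i}\beta^{j-i}\\
    &= \Big(\sum_{i=0}^{\infty}\abs{\gamma'(i)}\alpha^{i}\Big)\Big(\sum_{k=0}^{\infty}\abs{\gamma'(k)}\beta^{k}\Big)
    \le \Big(\sum_{k=0}^{\infty}\abs{\gamma'(k)}\Big)^{2},
\end{align*}
where the last step uses $0\le\beta<\alpha\le1$. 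Finally \Cref{lemma:gamma_diff} yields $\abs{\gamma'(k)}\le \frac{1}{\sqrt{\pi k}\,(2k-1)}\le \pi^{-1/2}k^{-3/2}$ for $k\ge1$, so $\sum_{k\ge0}\abs{\gamma'(k)}\le 1+\pi^{-1/2}\zeta(3/2)$, a constant independent of $\alpha,\beta$ and $n$. Taking $C:=\big(1+\pi^{-1/2}\zeta(3/2)\big)^{2}$ gives $\norm{B_{\alpha,\beta}^{-1}}_2\le C\le\frac{1}{1-\beta/\alpha}\,C$ (since $\beta<\alpha$), which is in fact slightly stronger than the claimed bound.

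The only delicate part is the first step — certifying that the subdiagonals of $B_{\alpha,\beta}^{-1}$ are precisely the Cauchy-product coefficients above. This is routine once one adopts the truncated-power-series viewpoint (or performs the direct convolution check), and everything after it is a two-line estimate combining \Cref{lemma:operator_norm_lttoeplitz} and \Cref{lemma:gamma_diff}; I do not expect a genuine obstacle beyond keeping the generating-function identities straight.
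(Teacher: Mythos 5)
Your proof is correct, and it reaches the same key object as the paper — the explicit subdiagonals $s_k=\sum_{i=0}^{k}\alpha^{k-i}\beta^{i}\gamma'(i)\gamma'(k-i)$ of $B_{\alpha,\beta}^{-1}$, followed by \Cref{lemma:operator_norm_lttoeplitz} and \Cref{lemma:gamma_diff} — but by a genuinely different route in both halves. The paper derives the inverse's subdiagonals by writing $B_{\alpha,\beta}^{-1}=E_\beta^{-1}E_\alpha^{-1}B_{\alpha,\beta}$ using the bidiagonal factors $A_{\alpha,\beta}=E_\alpha E_\beta$ and carrying out two explicit telescoping computations, whereas you identify lower-triangular Toeplitz matrices with truncated power series and read off the coefficients of $(1-\alpha x)^{1/2}(1-\beta x)^{1/2}$ directly; your fallback convolution check $\sum_i\gamma(i)\gamma'(k-i)=\mathbf{1}[k=0]$ is also valid. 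The second difference is in the tail estimate: the paper bounds $\abs{\gamma'(i)\gamma'(k-i)}\le\abs{\gamma'(k)}$ pointwise and then sums the geometric series $\sum_i(\beta/\alpha)^i$, which is where the $\frac{1}{1-\beta/\alpha}$ factor enters (and that pointwise bound is delicate — it fails, e.g., at $k=2$, $i=1$, where $\abs{\gamma'(1)}^2=\tfrac14>\tfrac18=\abs{\gamma'(2)}$, so it only holds for larger $k$); you instead bound the absolute convolution by the product $\big(\sum_i\abs{\gamma'(i)}\alpha^i\big)\big(\sum_k\abs{\gamma'(k)}\beta^k\big)$, which is cleaner, sidesteps that issue, and yields a uniform constant bound $\norm{B_{\alpha,\beta}^{-1}}_2\le C$ with no $\frac{1}{1-\beta/\alpha}$ dependence at all — in fact $\sum_k\abs{\gamma'(k)}=2$ exactly by telescoping, so one may take $C=4$ — which is strictly stronger than the stated lemma and would slightly improve the downstream bounds in \Cref{thm:sqrt_approx_guarantee}.
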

\begin{proof}
    Define $E_\omega\in\R^{n\times n}$, $0 < \omega \leq 1$ to be the lower-triangular Toeplitz matrix with entries $(E_\omega)_{i, j} = \omega^{i-j}$ for $i\geq j$, with inverse $E_\omega^{-1}$ that is a matrix with 1s on the main diagonal and $-\omega$ on the subdiagonal immediately below.
    Since $A_{\alpha, \beta} = E_\alpha E_\beta$ (by inspection), we can write $\norm{B_{\alpha, \beta}^{-1}}_2 = \norm{A_{\alpha, \beta}^{-1} B_{\alpha, \beta}}_2 = \norm{E_\beta^{-1}E_{\alpha}^{-1} B_{\alpha, \beta}}_2$.
    Noting that $E_\alpha^{-1}B_{\alpha, \beta} = Q$ is a Toeplitz matrix with 1s on the main diagonal and subdiagonals $q_k = b_k - \alpha b_{k-1}$, we can write
    \begin{align*}
       q_k &= b_k - \alpha b_{k-1}\\
       &=\sum_{i=0}^k\alpha^{k-i}\gamma(k-i)\gamma(i)\beta^i - \sum_{i=0}^{k-1} \alpha^{k-i}\gamma(k-1-i)\gamma(i)\beta^i\\
       &= \beta^k\gamma(k) + \sum_{i=0}^{k-1}\alpha^{k-i}\gamma(i)\big(\gamma(k-i)-\gamma(k-i-1)\big)\beta^i\\ 
       &= \beta^k\gamma(k) + \sum_{i=0}^{k-1}\alpha^{k-i}\gamma(i)\gamma'(k-i)\beta^i\\ 
       &= \sum_{i=0}^{k}\alpha^{k-i}\gamma(i)\gamma'(k-i)\beta^i \enspace .
    \end{align*}
    Repeating the manipulation, we have that $S=E_\beta^{-1} Q$ is another Toeplitz matrix with 1s on the main diagonal and subdiagonals $s_k = q_k - \beta q_{k-1}$, where
    \begin{align*}
       s_k &= q_k - \beta q_{k-1}\\
       &= \sum_{i=0}^{k}\alpha^{k-i}\gamma(i)\gamma'(k-i)\beta^i\\
       &- \sum_{i=0}^{k-1}\alpha^{k-i-1}\gamma(i)\gamma'(k-i-1)\beta^{i+1}\\
       &= (\alpha^k + \beta^k)\gamma'(k) + \sum_{i=1}^{k-1}\alpha^{k-i}\gamma'(i)\gamma'(k-i)\beta^i\\
       &= \alpha^k \sum_{i=0}^{k}\bigg(\frac{\beta}{\alpha}\bigg)^i \gamma'(i)\gamma'(k-i) \enspace .
    \end{align*}
    In the end we are interested in bounding $\abs{s_k}$, and so we want to upper bound $f(i) = \abs{\gamma'(i)\gamma'(k-i)}$ for $i=0, \dots, k$.
    Noting that $f(i) = f(k-i)$ and $\abs{\gamma'(i)} = \Theta(i^{-3/2})$ (\Cref{lemma:gamma_diff}), we can bound $f(i)$ as $f(i) \leq f(0) = \abs{\gamma'(0)\gamma'(k)} = \abs{\gamma'(k)}$.
    Continuing, we therefore have
    \begin{equation*}
        \abs{s_k} \leq \alpha^k \abs{\gamma'(k)} \sum_{i=0}^{k}\bigg(\frac{\beta}{\alpha}\bigg)^i \leq  \frac{\alpha^k \abs{\gamma'(k)}}{1-\beta/\alpha}\,. %
    \end{equation*}
    Noting that $S = B^{-1}_{\alpha, \beta}$, we apply \Cref{lemma:operator_norm_lttoeplitz} to $S$:
    \begin{equation*}
        \norm{S}_2
        \leq \sum_{k=0}^{n-1} \abs{s_k}
        \leq \frac{1}{1 - \beta/\alpha}\bigg(1 + \sum_{k=1}^{n-1} \frac{\alpha^k}{\sqrt{\pi k}(2k-1)}\bigg)\,,
    \end{equation*}
    where the second inequality uses \Cref{lemma:gamma_diff}.
    For all valid choices of $\alpha, \beta$ the final sum can be shown to converge to a constant via e.g.\ bounding $\alpha\leq 1$ and using an integral approximation, proving the statement.
\end{proof}

\begin{lemma}\label{lemma:good_operator_norm}
    For $0 \leq \beta < \alpha \leq 1$ and integer $n > 1$, $B_{\alpha, \beta}\in\R^{n\times n}$ satisfies $\norm{B_{\alpha, \beta}}_2 \leq \frac{2\sqrt{n}-1}{1-\beta}$ for $\alpha=1$, and otherwise $\norm{B_{\alpha, \beta}}_2 \leq \frac{1}{(1-\beta/\alpha)(1-\alpha)}$.
\end{lemma}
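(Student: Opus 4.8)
The plan is to bound the operator norm by the sum of the absolute values of the subdiagonal entries via \Cref{lemma:operator_norm_lttoeplitz}, and then estimate that sum using the explicit bound on $b_j$ from \Cref{fact:squre_root_factorization}, splitting into the cases $\alpha = 1$ and $\alpha < 1$.

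First I would recall that by \Cref{fact:squre_root_factorization} the matrix $B_{\alpha,\beta}$ is lower-triangular Toeplitz with nonincreasing (hence nonnegative) subdiagonals $b_0,\dots,b_{n-1}$, so \Cref{lemma:operator_norm_lttoeplitz} applies and gives
\[
\norm{B_{\alpha,\beta}}_2 \leq \sum_{j=0}^{n-1} \abs{b_j} = \sum_{j=0}^{n-1} b_j \leq \frac{1}{1-\beta/\alpha}\sum_{j=0}^{n-1}\frac{\alpha^j}{\sqrt{j+1}}\,,
\]
where the final step uses the upper bound $b_j \leq \alpha^j/\big((1-\beta/\alpha)\sqrt{j+1}\big)$ from \Cref{fact:squre_root_factorization}. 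It then remains only to estimate $\sum_{j=0}^{n-1}\alpha^j/\sqrt{j+1}$ in the two regimes. For $\alpha<1$ I would simply drop the $\sqrt{j+1}$ in the denominator and sum the resulting geometric series, $\sum_{j=0}^{n-1}\alpha^j/\sqrt{j+1}\leq\sum_{j=0}^{\infty}\alpha^j = 1/(1-\alpha)$, which yields $\norm{B_{\alpha,\beta}}_2 \leq 1/\big((1-\beta/\alpha)(1-\alpha)\big)$ as claimed.

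For $\alpha=1$ the geometric factor is gone and the sum becomes $\sum_{j=0}^{n-1}1/\sqrt{j+1} = \sum_{k=1}^{n}1/\sqrt{k}$, which I would control by the standard integral comparison: since $x\mapsto 1/\sqrt{x}$ is decreasing, $1/\sqrt{k}\leq\int_{k-1}^{k}dx/\sqrt{x}$ for $k\geq 2$, hence $\sum_{k=1}^{n}1/\sqrt{k}\leq 1 + \int_{1}^{n}dx/\sqrt{x} = 1 + (2\sqrt{n}-2) = 2\sqrt{n}-1$. Combining this with the displayed inequality and using $\beta/\alpha = \beta$ gives $\norm{B_{1,\beta}}_2 \leq (2\sqrt{n}-1)/(1-\beta)$. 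There is no real obstacle in this argument; the only mild care needed is to carry the integral comparison through so as to land on the exact constant $2\sqrt{n}-1$ rather than a looser bound, and to note at the outset that the $b_j$ are nonnegative so that the bound of \Cref{lemma:operator_norm_lttoeplitz} turns directly into the clean sum $\sum_j b_j$.
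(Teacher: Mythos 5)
Your proposal is correct and follows essentially the same route as the paper's proof: both invoke \Cref{lemma:operator_norm_lttoeplitz}, apply the upper bound on $b_j$ from \Cref{fact:squre_root_factorization}, and then split into $\alpha<1$ (geometric series) and $\alpha=1$ (integral comparison giving exactly $2\sqrt{n}-1$). No gaps; your index shift in the integral comparison is equivalent to the paper's.
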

\begin{proof}
    Invoking \Cref{lemma:operator_norm_lttoeplitz} we get that
    \[\norm{B_{\alpha, \beta}}_2 \leq \sum_{k=0}^{n-1} \abs{b_k} \leq \frac{1}{1-\beta/\alpha}\sum_{k=0}^{n-1} \frac{\alpha^k}{\sqrt{k+1}}\] 
    where the bound on $b_k$ comes from \Cref{fact:squre_root_factorization}.
    For $\alpha = 1$ we have 
    \[\frac{1}{\sqrt{k+1}} \leq \int_k^{k+1}\frac{1}{\sqrt{x}}\mathsf{d}x\]
    for $k>0$ and thus
    \begin{alignat*}{1}
        \norm{B_{\alpha, \beta}}_2 & \leq \frac{1}{1-\beta}(1 + \int_1^n \frac{1}{\sqrt{x}}\mathsf{d}x)\\
        & = \frac{1}{1-\beta}(1 + 2\sqrt{n} -2) 
         = \frac{2\sqrt{n}-1}{1-\beta} \enspace .
    \end{alignat*}
    In the case where $\alpha < 1$ we can write \[\norm{B_{\alpha, \beta}}_2 \leq \frac{1}{1-\beta/\alpha}\sum_{k=0}^{n-1} \alpha^j \leq \frac{1}{(1-\beta/\alpha)(1-\alpha)} \enspace .\]
\end{proof}

We can now state our approximation guarantee.%
\begin{theorem}\label{thm:sqrt_approx_guarantee}
    Let $0\leq \beta < \alpha \leq 1 $ and $\xi\in (0, 1)$.
    Further, let $\hat{L}$ be the $\B$-approximation of $B_{\alpha, \beta}\in\R^{n\times n}$ derived from running \Cref{alg:matrix_to_binning} with parameters $c$ and $\tau$.
    Then there exists $c, \tau$ for which
    \begin{equation*}
        \abs{\B} =
        \begin{cases}
            O_{\alpha, \beta, \xi}(\sqrt{n}\log n ) &\text{ if } \alpha=1,\\
            O_{\alpha, \beta, \xi}(\log n ) &\text{ otherwise},
        \end{cases}
    \end{equation*}
    and $\hat{L}$ induces a factorization $\hat{L}\hat{R} = B_{\alpha, \beta}^2$ where:
    \begin{itemize}
        \item $\mse(\hat{L},\hat{R}) \leq (1+\xi)\mse(B_{\alpha, \beta},B_{\alpha, \beta})$ and $\maxerr(\hat{L},\hat{R}) \leq (1+\xi)\maxerr(B_{\alpha, \beta},B_{\alpha, \beta})$.
        \item $\hat{L}$ has space complexity at most $\abs{\B}$.
        \item Given random access to $B_{\alpha, \beta}$, and $z\in\R^n$, $\hat{L}\vec{z}$ can be computed sequentially in time $O(\abs{\B})$ per output.
    \end{itemize}
\end{theorem}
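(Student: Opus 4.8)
The plan is to obtain \Cref{thm:sqrt_approx_guarantee} as a direct corollary of \Cref{thm:binning_approx}, after feeding it bounds on $\kappa(B_{\alpha,\beta})$ and $\norm{B_{\alpha,\beta}^{-1}}_2$. By \Cref{fact:sqrt_is_mrm} the square-root matrix $B_{\alpha,\beta}$ is an MRM, so \Cref{thm:binning_approx} applies with $L = R = B_{\alpha,\beta}$. Instantiating the parameters it prescribes --- namely $c = \exp\!\bigl(-\tfrac{\xi}{576\kappa(B_{\alpha,\beta})}\bigr)$ and $\tau = \tfrac{\xi\norm{B_{\alpha,\beta}}_2}{144 n\,\kappa(B_{\alpha,\beta})}$, which are the $c,\tau$ whose existence \Cref{thm:sqrt_approx_guarantee} asserts --- we immediately get a binning $\B$ of size $\abs{\B} = O_\xi\bigl(\kappa(B_{\alpha,\beta})\log(n\norm{B_{\alpha,\beta}^{-1}}_2)\bigr)$, a factorization $\hat{L}\hat{R} = B_{\alpha,\beta}^2$ with $\mse(\hat{L},\hat{R})\le(1+\xi)\mse(B_{\alpha,\beta},B_{\alpha,\beta})$ and $\maxerr(\hat{L},\hat{R})\le(1+\xi)\maxerr(B_{\alpha,\beta},B_{\alpha,\beta})$, space complexity at most $\abs{\B}$, and --- given random access to $B_{\alpha,\beta}$ --- incremental evaluation of $\hat{L}\vec{z}$ in time $O(\abs{\B})$ per output. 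Since $(0,1)\subseteq(0,24]$, the hypothesis $\xi\in(0,1)$ lies in the range allowed by \Cref{thm:binning_approx}. Thus the error, space, and time bullets are immediate, and all that is left is to evaluate $\abs{\B}$ in the two regimes $\alpha=1$ and $\alpha<1$.

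For that, I would substitute the operator-norm estimates already proved. \Cref{lemma:inv_good_operator_norm} gives $\norm{B_{\alpha,\beta}^{-1}}_2 \le \tfrac{C}{1-\beta/\alpha}$ with $C$ independent of $n$, so $\log(n\norm{B_{\alpha,\beta}^{-1}}_2) = O_{\alpha,\beta}(\log n)$ for $n>1$. Writing $\kappa(B_{\alpha,\beta}) = \norm{B_{\alpha,\beta}}_2\norm{B_{\alpha,\beta}^{-1}}_2$ and invoking \Cref{lemma:good_operator_norm} for the first factor: when $\alpha=1$, $\norm{B_{\alpha,\beta}}_2 \le \tfrac{2\sqrt{n}-1}{1-\beta} = O_\beta(\sqrt{n})$, hence $\kappa(B_{\alpha,\beta}) = O_{\alpha,\beta}(\sqrt{n})$ and $\abs{\B} = O_{\alpha,\beta,\xi}(\sqrt{n}\log n)$; when $\alpha<1$, $\norm{B_{\alpha,\beta}}_2 \le \tfrac{1}{(1-\beta/\alpha)(1-\alpha)} = O_{\alpha,\beta}(1)$, hence $\kappa(B_{\alpha,\beta}) = O_{\alpha,\beta}(1)$ and $\abs{\B} = O_{\alpha,\beta,\xi}(\log n)$. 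This is exactly the claimed case distinction.

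At this granularity there is no serious obstacle: \Cref{thm:sqrt_approx_guarantee} is the composition of \Cref{thm:binning_approx} with \Cref{lemma:inv_good_operator_norm} and \Cref{lemma:good_operator_norm}. The only points needing a little care are minor. First, one should check that the prescribed $c,\tau$ are admissible inputs to \Cref{alg:matrix_to_binning}: $0<c<1$ holds since $\xi>0$ and $\kappa(B_{\alpha,\beta})\ge 1$; and $0<\tau<1$ holds since $\tau = \tfrac{\xi}{144 n\norm{B_{\alpha,\beta}^{-1}}_2} \le \tfrac{\xi}{144 n} < 1$, using that $\norm{B_{\alpha,\beta}^{-1}}_2\ge 1$ because $B_{\alpha,\beta}^{-1}$ is lower triangular with unit diagonal. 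Second, one should confirm that the constants suppressed in \Cref{lemma:inv_good_operator_norm} and \Cref{lemma:good_operator_norm} genuinely do not depend on $n$, so that the only surviving $n$-dependence in $\abs{\B}$ is through the explicit $\sqrt{n}$ and $\log n$ factors. Both checks are routine.
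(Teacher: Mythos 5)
Your proposal is correct and follows essentially the same route as the paper: establish that $B_{\alpha,\beta}$ is an MRM via \Cref{fact:sqrt_is_mrm}, invoke \Cref{thm:binning_approx} with $L=R=B_{\alpha,\beta}$, and plug in the operator-norm bounds from \Cref{lemma:inv_good_operator_norm} and \Cref{lemma:good_operator_norm} to get $\abs{\B}=O_{\alpha,\beta,\xi}(\sqrt{n}\log n)$ for $\alpha=1$ and $O_{\alpha,\beta,\xi}(\log n)$ otherwise. The extra admissibility checks on $c$ and $\tau$ are fine but not needed beyond what the paper's argument already implies.
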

\begin{proof}
    Noting that $B_{\alpha, \beta}$ is an MRM (\Cref{fact:sqrt_is_mrm}), the theorem follows immediately from invoking \Cref{thm:binning_approx} applied to the factorization $L=R=B_{\alpha, \beta}$, and using \Cref{lemma:inv_good_operator_norm} and \ref{lemma:good_operator_norm} for the bounds on the operator norms.
    For $\alpha = 1$ we arrive at a space complexity of $O\big(\frac{1}{(1-\beta)^2}\sqrt{n}\log(\frac{n}{\xi(1-\beta)}\big)/\xi\big)$
    \footnote{If $\alpha=1$, then it is possible to remove the $\xi$ inside of the logarithm by choosing $\tau$ smaller than the smallest entry in $B_{\alpha, \beta}$ and doing a finer analysis.}
    and $O\big(\frac{1}{(1-\beta/\alpha)^2(1-\alpha)}\log\big(\frac{n}{\xi(1-\beta/\alpha)}\big)/\xi\big)$ for $\alpha<1$.
\end{proof}
\Cref{thm:main_bennett} is a special case of \Cref{thm:sqrt_approx_guarantee} for $\alpha=1, \beta=0$.

\section{Empirical evaluation}\label{sec:empirical}
While we have explicit bounds on the space and error of our approach when applied to counting matrices (with and without momentum and/or weight decay), the strength of our approach is most easily visible in our empirical results.
For the canonical problem of continual counting (corresponding to $\alpha=1, \beta=0$) where the square-root factorization is asymptotically optimal up to lower order terms, \emph{we outperform it in sublinear space for practical regimes of $n$ for both mean squared and maximum squared error}.
This is, to the best of our knowledge, the first result explicitly making this point.
For the generalized counting problem including momentum and/or weight-decay, we approximate the square-root factorization of $A_{\alpha, \beta}$ with equal success.

\subsection{Notes on implementation and hardware}
We have implemented \Cref{alg:matrix_to_binning} together with helper methods in Python which allows for efficient computation of factorizations $\hat{L}, \hat{R}$, the exact mean and maximum squared error of which is shown.
Our implementation is available on \url{https://github.com/jodander/streaming-via-binning}.
All plots shown have been produced on a Macbook Pro 2021 with Apple M1 Pro chip and 16 GB memory, running Python 3.9.6, with scipy version 1.9.2, pandas version 1.5.3, and numpy version 1.23.3.
Although further optimization is possible, it can produce factorizations of $A\in\R^{n\times n}$ for $n$ up to $\sim 10^4$ in a few seconds.
The bottleneck of the experiments lies in storing all matrices involved as dense matrices.
As pointed out earlier in connection with \Cref{def:binned_matrix}, a $\B$-binned matrix can be stored in $O(n\cdot \abs{\B})$ memory, and the square-root factorizations we compare to can be stored in $O(n)$ memory due to their Toeplitz structure.

For all plots shown, we start from a factorization $L, R\in\R^{n\times n}$.
We $\B$-approximate $L$ by $\hat{L}$ using the binning $\B$ from running \Cref{alg:matrix_to_binning} on $L$ for positive parameters $c,\tau < 1$, then \emph{exactly} compute $\hat{R} = \hat{L}^{-1}LR$.
The $\B$-approximation we use is the one from \Cref{def:b_approximation}, averaging based on the end points of the intervals.
The space complexity of $\hat{L}$ will be equal to $|\B|$ and the error we report will be an exact computation based on $\hat{L}$ and $\hat{R}$, either $\mse(\hat{L}, \hat{R})$ or $\maxerr(\hat{L},\hat{R})$ depending on the experiment.
We make no use of any theoretical bounds.

\subsection{Results for the Bennett matrix}
To visualize the result, and showcase that our approach is meaningful already for small $n$, we give example factorizations in \Cref{fig:small_example}.
\begin{figure}[t]
\centering
\subfigure[Matrix $B$.]{\includegraphics{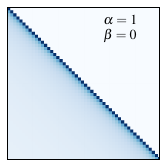}\label{fig:bennett_matrix}}
\subfigure[Left factor $\hat{L}$.]{\includegraphics{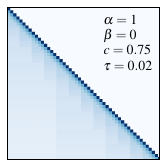}\label{fig:L_matrix}}
\subfigure[Right factor $\hat{R}$.]{\includegraphics{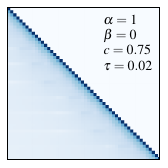}\label{fig:R_matrix}}
\caption{
    Plot of entries of different factorizations of the counting matrix $A = A_{\alpha=1, \beta=0}$ for $n=50$.
    Darker shades imply matrix entries closer to 1; white implies 0.
    \Cref{fig:bennett_matrix} corresponds to the factorization $A=B^2$.
    $\hat{L}$ is the $\B$-approximation of $B$ generated from \Cref{alg:matrix_to_binning} when run with $c=0.75, \tau=0.02$ on $B$, and $\hat{R} = \hat{L}^{-1} A$.
    For the matrices shown we have that $\mse(\hat{L}, \hat{R}) / \mse(B,B) = 0.9965$, $\maxerr(\hat{L}, \hat{R}) / \maxerr(B, B) = 0.9951$ and $\abs{\B} = 8$.
}\label{fig:small_example}
\end{figure}
The space usage of $\hat{L}$ is clearly visible in \Cref{fig:L_matrix}, since each row can be identified to have at most 8 unique colors, i.e.\ intervals.
As our technique relies on heuristics for perturbing a left-factor matrix into becoming space-efficient, and then bounding the blow-up of $\norm{\hat{R}}_{1\to 2}$, there is less to say about the structure in \Cref{fig:R_matrix}.
At most we can note qualitatively that $\hat{R}$ looks similar to $B$.
What we can note quantitatively, and which is of greater interest, is the fact that our more space efficient construction has a \emph{lower} mean and maximum squared error.
We seem to be able to get the best of both worlds: lower error \emph{and} space usage.

This is not a computational fluke.
Simply put, the optimality results in \cite{henzinger_almost_2023, dvijotham2024efficient} show that $B^2=A$ is \emph{asymptotically optimal}, but on fixing $n$ one can hope to improve lower order terms for the error, and, as we show, \emph{even in sublinear space}.
In the particular case of the maximum squared error, $B^2=A$ is exactly optimal over Toeplitz factorizations \cite{dvijotham2024efficient}, implying that our binned factorizations can achieve a lower error than any Toeplitz factorization of $A$.
We go on to show that this fact is demonstrably true up to at least $n=10^4$.
\begin{figure*}[t!]
\centering
\subfigure[Space complexity needed for fixed relative MeanSE vs.\ $n$.]{\includegraphics{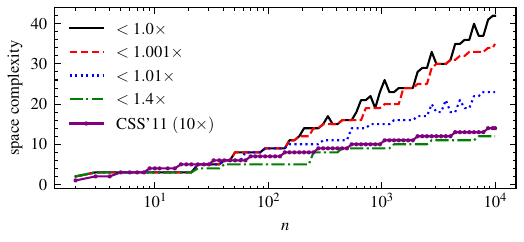}\label{fig:mse_space_vs_approx}}\hfill
\subfigure[Space complexity needed for fixed relative MaxSE vs.\ $n$.]{\includegraphics{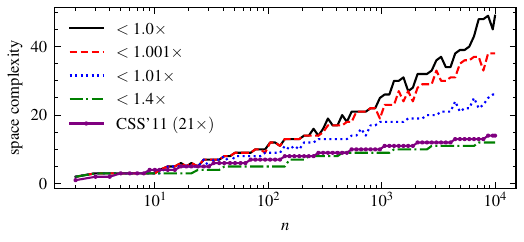}\label{fig:maxse_space_vs_approx}}\
\subfigure[Relative MeanSE vs.\ space complexity for different $n$.]{\includegraphics{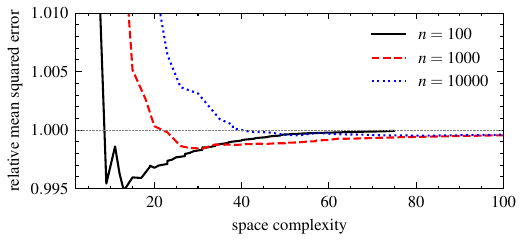}\label{fig:mse_mult_vs_space}}\hfill
\subfigure[Relative MaxSE vs.\ space complexity for different $n$.]{\includegraphics{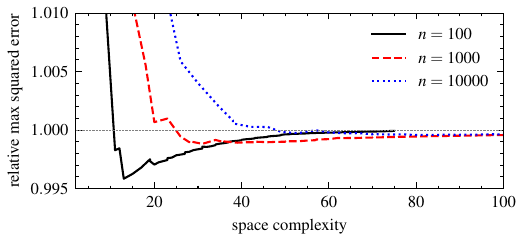}\label{fig:maxse_mult_vs_space}}
\caption{
    Plots showing the trade-off between space complexity and multiplicative blow-up in the mean and maximum squared error for our factorizations relative to~\cite{henzinger_almost_2023}.
    All factorizations shown here were produced by running \Cref{alg:matrix_to_binning} with $c = 1 - 1/d$ for integers $d\geq 2$ and $\tau = 1/n$.
    In \Cref{fig:mse_space_vs_approx} and \Cref{fig:maxse_space_vs_approx}, $d$ was initialized to $2$ for each $n$, and then incremented until the error was sufficiently small for each curve.
    In \Cref{fig:mse_mult_vs_space} and \Cref{fig:maxse_mult_vs_space} the points are generated from using $1.1 \leq d \leq 100$.
    \Cref{fig:mse_space_vs_approx} and \Cref{fig:maxse_space_vs_approx} shows the space complexity (binning size) needed by the approximation $\hat{L}$, as a function of $n$, to be within a given multiple of this error.
    The dashed lines show the space complexity needed to be no more than some small fraction above the error, whereas the black line shows space needed for our method to achieve a \emph{smaller} error.
    The line with dots shows the space complexity of the binary mechanism in \cite{chan_private_2011}, whose error is asymptotically worse by a factor of $10$ or $21$, for mean and maximum squared error respectively.
    \Cref{fig:mse_mult_vs_space} and \Cref{fig:maxse_mult_vs_space} shows the trade-off between the blow-up in error versus space for different values of $n$.
}\label{fig:big_result}
\end{figure*}
The main takeaway from \Cref{fig:big_result} is that our method indeed scales well for the region of $n$ shown.
If a relatively weak assumption on the blow-up in error is acceptable, then the empirical data in Fig.\ \ref{fig:mse_space_vs_approx} and \ref{fig:maxse_space_vs_approx} is compatible with our method running in $O(\log n)$ space.
We also wish to underline the fact that the plots do not change much whether one considers maximum or mean squared error.
While this is also true for our bounds, it is not obvious that it should be carried out in practice given that our analysis is not believed to be tight.

\subsection{Results with momentum and weight decay}\label{sec:empirical_momentum_and_decay}

To showcase the generality of our approach we also demonstrate it for the square-root factorization of the counting matrix $L=R=\sqrt{A_{\alpha, \beta}}$ that incorporates weight decay ($\alpha < 1$) and/or momentum ($\beta > 0$).
First, in \Cref{fig:small_examples_alpha_beta}, we showcase what solutions for small instances look like when we introduce momentum and weight decay.
\begin{figure}[ht]
\centering
\subfigure[Matrix $B_{\alpha, \beta}$.]{\includegraphics{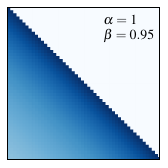}\label{fig:sqrt_alphabeta1}}
\subfigure[Left factor $\hat{L}$.]{\includegraphics{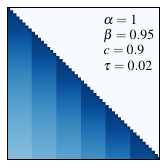}\label{fig:L_alphabeta1}}
\subfigure[Right factor $\hat{R}$.]{\includegraphics{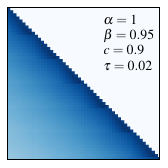}\label{fig:R_alphabeta1}}
\subfigure[Matrix $B_{\alpha, \beta}$.]{\includegraphics{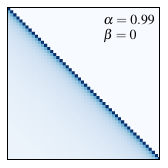}\label{fig:sqrt_alphabeta2}}
\subfigure[Left factor $\hat{L}$.]{\includegraphics{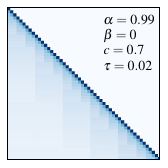}\label{fig:L_alphabeta2}}
\subfigure[Right factor $\hat{R}$.]{\includegraphics{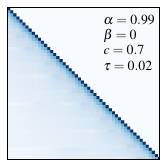}\label{fig:R_alphabeta2}}
\caption{
    Plot of entries of different factorizations of $A_{\alpha, \beta}$ for $n=50$.
    Darker shades imply matrix entries closer to 1; white implies 0.
    \Cref{fig:sqrt_alphabeta1} and \Cref{fig:sqrt_alphabeta2} correspond to the square-root factorization $A_{\alpha, \beta}=B_{\alpha, \beta}^2$.
    $\hat{L}$ is the $\B$-approximation of $B_{\alpha, \beta}$ generated from \Cref{alg:matrix_to_binning} when run with $c$ and $\tau$ as specified in the figures, on $B_{\alpha, \beta}$, and where $\hat{R} = \hat{L}^{-1} A_{\alpha, \beta}$.
    The factorization shown in Fig.\ \ref{fig:L_alphabeta1} and \ref{fig:R_alphabeta1} achieves $\mse(\hat{L}, \hat{R}) / \mse(B_{\alpha, \beta},B_{\alpha, \beta}) = 0.9945$ and $\maxerr(\hat{L}, \hat{R}) / \maxerr(B_{\alpha, \beta},B_{\alpha, \beta}) = 0.9947$, compared to a relative error of $1.015$ and $1.026$ respectively for the factorization in Fig.\ \ref{fig:L_alphabeta2} and \ref{fig:R_alphabeta2}.
    Both our factorizations have a binning size $\abs{\B}$ of $8$.
}\label{fig:small_examples_alpha_beta}
\end{figure}

Qualitatively one can note that introducing momentum ($\beta > 0$) results in a matrix $B_{\alpha, \beta}$ with a mass that is more evenly distributed.
Introducing weight decay ($\alpha < 1$) should intuitively have the opposite effect, where mass becomes more concentrated closer to the diagonal.
For both cases we are able to reduce the space requirement $50$ to $8$ buffers, but we are only able to achieve an improvement in error for the case with momentum.
This trend is replicated in \Cref{fig:big_result_alpha_beta}, where we more systematically investigate the influence of $\alpha$ and $\beta$, and $\alpha < 1$ proves to be the harder case.
\begin{figure*}[t!]
\centering
\subfigure[$\alpha=1, \beta=0.95$.]{\includegraphics{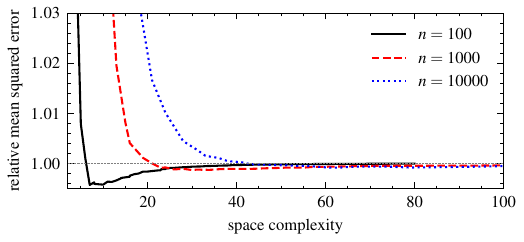}\label{fig:meanse_alphabeta1}}
\subfigure[$\alpha=1, \beta=0.95$.]{\includegraphics{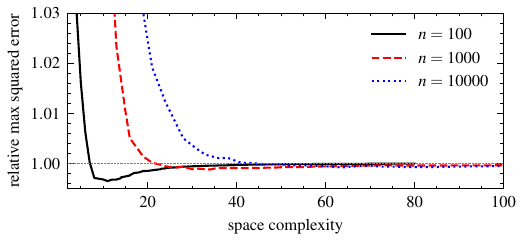}\label{fig:maxse_alphabeta1}}
\subfigure[$\alpha=1, \beta=0.9$.]{\includegraphics{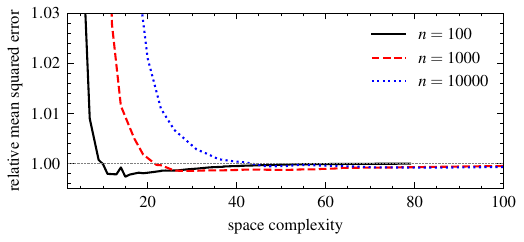}\label{fig:meanse_alphabeta2}}
\subfigure[$\alpha=1, \beta=0.9$.]{\includegraphics{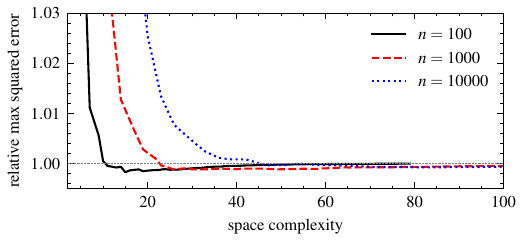}\label{fig:maxse_alphabeta2}}
\subfigure[$\alpha=0.99, \beta=0$.]{\includegraphics{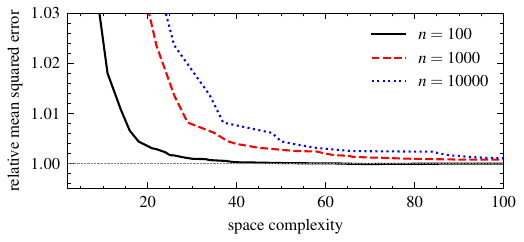}\label{fig:meanse_alphabeta3}}
\subfigure[$\alpha=0.99, \beta=0$.]{\includegraphics{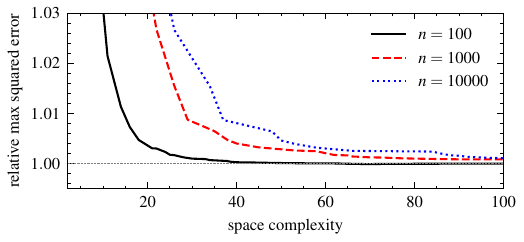}\label{fig:maxse_alphabeta3}}
\subfigure[$\alpha=0.99, \beta=0.95$.]{\includegraphics{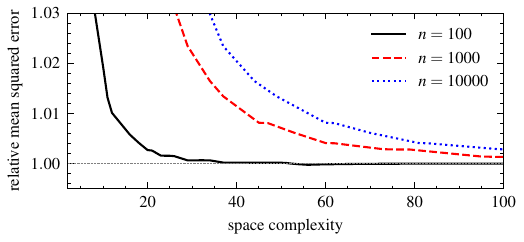}\label{fig:meanse_alphabeta4}}
\subfigure[$\alpha=0.99, \beta=0.95$.]{\includegraphics{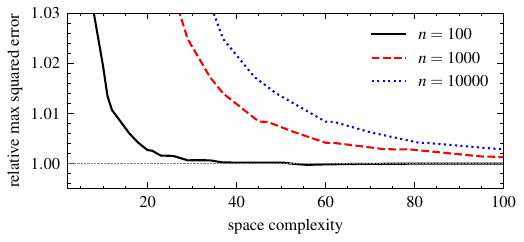}\label{fig:maxse_alphabeta4}}
\caption{
    Plots showing the trade-off between space complexity and multiplicative blow-up in the mean and maximum squared error for our factorizations relative to the square-root factorization of $A_{\alpha, \beta}$, for different $\alpha, \beta$.
    All factorizations shown here were produced by running \Cref{alg:matrix_to_binning} with $c = 1 - 1/d$ for $1.1 \leq d \leq 100$, and $\tau = 1/n$.
}\label{fig:big_result_alpha_beta}
\end{figure*}
The main points to make here is that our technique works well across these parameter choices of $\alpha, \beta$, allowing us to substantially reduce the space at the cost of a constant blow-up in error.
We note that, contrary to the case of the Bennett matrix where $\alpha = 1$, setting $\tau = 1/n$ has an impact when $\alpha < 1$.
Since the entries of $B_{\alpha, \beta}$ decay exponentially away from the diagonal when $\alpha < 1$ (see \Cref{fact:squre_root_factorization}), there can be entries smaller than $\tau = 1/n$, all of which will be grouped into at most two different intervals per row.
As in the case of $\alpha=1, \beta=0$, whether it is maximum or mean squared error has little impact -- our approach is well-suited to both error variants.

\section{Conclusion and discussion}
We have introduced a new framework for approximating factorization mechanisms based on a technique we call \enquote{binning}.
We give an algorithm that always produces a valid binning, and we show that, for a class of factorizations $LR$, calling it with suitable parameters produces a new factorization $\hat{L}\hat{R}$ running in sublinear space at the cost of a small multiplicative increase in error.
This class includes the asymptotically optimal, up to lower order terms, factorization of the counting matrix $L=R=\sqrt{A}$ in \cite{fichtenberger_constant_2023, henzinger_almost_2023, bennett77}.
Empirically we show something stronger.
For $n$ at least up to $10^4$, we produce factorizations with \emph{lower} error \emph{in sublinear space}.
For weaker multiplicative guarantees, our empirical results suggest that our approach leads to logarithmic space complexity.
Beyond this, we also show that our approach works once momentum and weight decay are introduced to the counting matrix.

There are many possible directions for future work.
The main question we leave is the space efficiency possible by binning: our theoretical bounds on space and error do not appear to be tight in general, as demonstrated by our empirical work.
An approach relying on more global properties of the perturbation $P=\hat{L} - L$, rather than bounding each entry, could be a way forward.
Another question worth pursuing is what other linear query matrices (and their factorizations) can be worth studying under binning.
This is in particular relevant to non-Toeplitz matrices, where the techniques in \cite{dvijotham2024efficient} are not applicable.
It would also be of interest to investigate whether the binnings produced by \Cref{alg:matrix_to_binning} can be substantially improved.

Finally, we mention the \emph{bit complexity} of space-efficient mechanisms that rely on real-valued noise.
It seems straightforward to replace Gaussian noise by discrete Gaussian noise~\cite{canonne2020discrete} in our mechanism: Since we only add random samples together there are no numerical issues and we can get high accuracy with only $O(\log n)$ bits per value, assuming that the exact prefix sums can be represented in $O(\log n)$ bits.
For other methods such as \cite{dvijotham2024efficient} it is less clear how many bits of precision are needed to maintain the privacy/utility trade-off.

\medskip
\begin{samepage}
{\bf Acknowledgement.} The authors are affiliated with Basic Algorithms Research Copenhagen (BARC), supported by the VILLUM Foundation grant 54451, and are also supported by Providentia, a Data Science Distinguished Investigator grant from Novo Nordisk Fonden.

\end{samepage}
\balance
\FloatBarrier
\clearpage
\bibliographystyle{IEEEtran}
\bibliography{IEEEabrv,main}

\end{document}